\newcolumntype{P}[1]{>{\centering\arraybackslash}p{#1}}
\newcommand{\defeq}{\vcentcolon=}
\newtheorem{definition}{Definition}
\newtheorem{assumption}[definition]{Assumption}
\newtheorem{theorem}{Theorem}
\newtheorem{lemma}[theorem]{Lemma}
\newtheorem{corollary}[theorem]{Corollary}
\newtheorem{remark}{Remark}
\newcommand{\Mat}{\boldsymbol}
\newcommand{\Set}{\mathcal}
\newcommand{\Op}{\mathcal}
\newcommand{\real}{\mathbb{R}}
\newcommand{\complex}{\mathbb{C}}
\newcommand{\nat}{\mathbb{N}}
\newcommand{\conv}{\star}
\DeclareMathOperator{\Tr}{Tr}
\DeclareMathOperator{\V}{vec}
\DeclareMathOperator*{\argmin}{arg\,min}
\title{Signal Processing for Implicit Neural Representations}
\author{%
Dejia Xu$^*$ \\
\texttt{dejia@utexas.edu} \\
\And
Peihao Wang\thanks{Equal Contribution.} \\
\texttt{peihaowang@utexas.edu} \\
\AND
Yifan Jiang \\
\texttt{yifanjiang97@utexas.edu} \\
\And
Zhiwen Fan \\
\texttt{zhiwenfan@utexas.edu} \\
\And
Zhangyang Wang \\
\texttt{atlaswang@utexas.edu} \\
}
\begin{document}

\maketitle
\vspace{-2.5em}
\begin{center}
The University of Texas at Austin \\
\texttt{\url{https://vita-group.github.io/INSP/}} 
\end{center}

\begin{abstract}
Implicit Neural Representations (INRs) encoding continuous multi-media data via multi-layer perceptrons has shown undebatable promise in various computer vision tasks.
Despite many successful applications, editing and processing an INR remains intractable as signals are represented by latent parameters of a neural network.
Existing works manipulate such continuous representations via processing on their discretized instance, which breaks down the compactness and continuous nature of INR.
In this work, we present a pilot study on the question: \textit{how to directly modify an INR without explicit decoding?}
We answer this question by proposing an implicit neural signal processing network, dubbed \textbf{INSP-Net}, via differential operators on INR.
Our key insight is that spatial gradients of neural networks can be computed analytically and are invariant to translation, while mathematically we show that any continuous convolution filter can be uniformly approximated by a linear combination of high-order differential operators.
With these two knobs, INSP-Net instantiates the signal processing operator as a weighted composition of computational graphs corresponding to the high-order derivatives of INRs, where the weighting parameters can be data-driven learned.
Based on our proposed INSP-Net, we further build the first Convolutional Neural Network (CNN) that implicitly runs on INRs, named \textbf{INSP-ConvNet}.
Our experiments validate the expressiveness of INSP-Net and INSP-ConvNet in fitting low-level image and geometry processing kernels (e.g. blurring, deblurring, denoising, inpainting, and smoothening) as well as for high-level tasks on implicit fields such as image classification.
\end{abstract}

\maketitle

\vspace{-3mm}
\section{Introduction}

The idea that our visual world can be represented continuously has attracted increasing popularity in the field of implicit neural representations (INR). Also known as coordinate-based neural representations, INRs learn to encode a coordinate-to-value mapping for continuous multi-media data. Instead of storing the discrete signal values in a grid of pixels or voxels, INRs represent discrete data as samples of a continuous manifold. Using multi-layer perceptrons, INRs bring practical benefits to various computer vision applications, such as image and video compression~\cite{dupont2021coin, chen2021nerv, zhang2021implicit}, 3D shape representation~\cite{genova2019learning,atzmon2019controlling,chen2019learning,gropp2020implicit,mescheder2019occupancy,niemeyer2019occupancy,sdf,peng2020convolutional}, inverse problems~\cite{mildenhall2020nerf,chen2021nerv,sitzmann2021light,mildenhall2021nerf}, and generative models~\cite{chan2021pi,devries2021unconstrained,gu2021stylenerf,hao2021gancraft,meng2021gnerf,niemeyer2021giraffe,schwarz2020graf,zhou2021cips}.

Despite their recent success, INRs are not yet amenable to flexible editing and processing as the standard images could do. The encoded coordinate-to-value mapping is too complex to comprehend and the parameters stored in multi-layer perceptrons (MLPs) remains less explored.
One direction of existing approaches enables editing on INRs by training them with conditional input. For example, \cite{liu2021editing,wang2021clip,park2019deepsdf,niemeyer2021giraffe,schwarz2020graf,Chen2019LearningIF} utilize conditional codes to indicate different characteristics of the scene including shape and color. Another main direction benefits from existing image editing techniques and operates on discretized instances of continuous INRs such as pixels or voxels.
However, such solutions break down the continuous characteristic of INR 
due to the prerequisite of decoding and discretizing before editing and processing.

In this paper, we conduct the first pilot study on the question: \textit{how to generally modify an INR without explicit decoding?}
The major challenge is that one cannot directly interpret what the parameters in an INR stand for, not to mention editing them correctly.
Our key motivation is that spatial gradients can be served as a favorable tool to tackle this problem as they can be computed analytically, and possess desirable invariant properties.
Theoretically, we prove that any continuous convolution filter can be uniformly approximated by a linear combination of high-order differential operators.
Based on the above two rationales, we propose an Implicit Neural Signal Processing Network, dubbed \textbf{INSP-Net}, which processes INR utilizing high-order differential operators.
The proposed INSP-Net is composed of an inception fusion block connecting computational graphs corresponding to derivatives of INRs.
The weights in the branchy part are loaded from the INR being processed, while the weights in the fusion block are parameters of the operator, which can be either hand-crafted or learned by the data-driven algorithm.
Even though we are not able to perform surgery on neural network parameters, we can implicitly process them by retrofitting their architecture and reorganizing the spatial gradients.

We further extend our framework to build the first Convolutional Neural Network (CNN) operating directly on INRs, dubbed INSP-ConvNet.
Each layer of INSP-ConvNet is constructed by linearly combining the derivative computational graphs of the former layers.
Nonlinear activation and normalization are naturally supported as they are element-wise functions. Data augmentation can be also implemented by augmenting the input coordinates of INRs.
Under this pipeline (shown in Fig.~\ref{fig:teaser}), we demonstrate the expressiveness of our INSP-Net framework in fitting low-level image processing kernels including edge detection, blurring, deblurring, denoising, and image inpainting.
We also successfully apply our INSP-ConvNet to high-level tasks on implicit fields such as classification.

\begin{figure}[t]
    \centering
    \includegraphics[width=\textwidth]{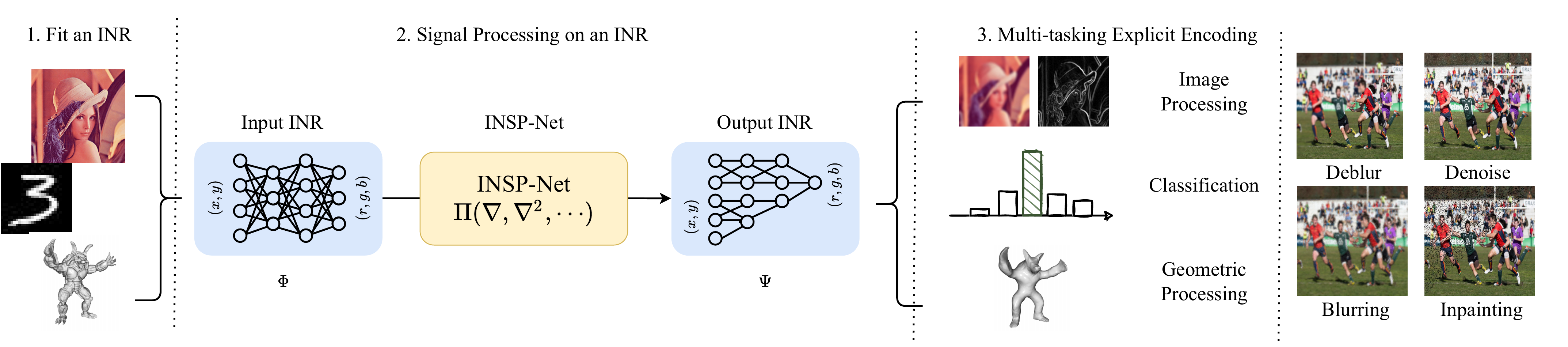}
    \caption{An illustration of implicit neural signal processing. Given an INR representing digital signals, our INSP-Net is capable of direct signal processing without needing to explicitly decode it. Our model first constructs derivative computation graphs of the original INR and then generates a linear combination of them into a new INR. It can be later decoded into discretized forms such as image pixels. The framework is capable of fitting low-level image processing kernels as well as performing high-level processing such as image classification.}
    \label{fig:teaser}
    \vspace{-3mm}
\end{figure}

Our main contributions can be summarized as follows:

\begin{itemize}
    \item We propose a novel signal processing framework, dubbed \textbf{INSP-Net}, that operates on INRs analytically and continuously by closed-form high-order differential operators\footnote{By saying ``closed-form'', we mean the computation follows from an analytical mathematical expression.}.
    Repeatedly cascading the computational paradigm of INSP-Net, we also build a convolutional network, called INSP-ConvNet, which directly runs on implicit fields for high-level tasks.
    \item We illustrate the advantage of adopting differential operators by revealing their inherent group invariance. Furthermore, we rigorously prove that the convolution operator in the continuous regime can be uniformly approximated by a linear combination of the gradients.
    \item Extensive experiments demonstrate the effectiveness of our approach in both low-level processing (e.g. edge detection, blurring, deblurring, denoising, image inpainting, and smoothening) and high-level processing such as image classification.
\end{itemize}

\section{Preliminaries: Implicit Neural Representation}

Implicit Neural Representation (INR) parameterizes continuous multi-media signals or vector fields with neural networks.
Formally, we consider an INR as a continuous function $\Phi: \real^m \rightarrow \real$ that maps low-dimension spatial/temporal coordinates to the value space\footnote{Without loss of generality, here we simplify $\Phi$ to be a scalar field, i.e., the range of $\Phi$ is one-dimensional.}.
For example, to represent 2D image signals, the domain of $\Phi$ is $(x, y)$ spatial coordinates, and the range of $\Phi$ are the pixel intensities.
The typical use of INR is to solve a feasibility problem where $\Phi$ is sought to satisfy a set of $N$ constraints $\{\Op{C}(\Phi, a_j \vert \Omega_j) \}_{j=1}^{N}$, where $\Op{C}$ is a functional that relates function $\Phi$ to some observable quantities $a_j$ evaluating over a measurable domain $\Omega_j \subseteq \real^m$ .
This problem can be cast into an optimization problem that minimizes deviations from each of the constraints:
\begin{align} \label{eqn:inr_obj}
\Phi^* = \argmin_{\Phi} \sum_{j=1}^{N} \lVert \Op{C}(\Phi, a_j \vert \Omega_j) \rVert_2.
\end{align}
For instance, we can let $\Op{C} = \Phi(\Mat{x}_j) - a_j$ with $\Omega_j = \{\Mat{x}_j\}$, then our objective boils down to a point-to-point supervision which memorizes a signal into $\Phi$ \cite{tancik2020fourier}.
When functional $\Op{C}$ is a combination of differential operators taking values in a point set, i.e., $\Op{C}(a(\Mat{x}), \Phi(\Mat{x}), \nabla\Phi(\Mat{x}), \cdots), \forall \Mat{x} \in \Omega_j$, Eq. \ref{eqn:inr_obj} is objective to solving a bunch of differential equations \cite{sitzmann2020implicit, gropp2020implicit, han2018solving}.
Note that in this paper, without particular specification, the gradients are all computed with respect to the input coordinate $\Mat{x}$.
$\Op{C}$ can also form an integral equation system over some intervals $\Omega_j$ \cite{mildenhall2020nerf}.
In practice of computer vision, we reconstruct a signal by capturing sparse observations $\Set{D} = \{(\Omega_j, a_j)\}_{j=1}^{N}$ from unknown function $\Phi$, and dynamically sampling a mini-batch from $\Set{D}$ to minimize Eq. \ref{eqn:inr_obj} to obtain a feasible $\Phi$.

A handy parameterization of function $\Phi$ is a fully-connected neural network, which enables solving Eq. \ref{eqn:inr_obj} via gradient descent through a differentiable $\mathcal{C}$.
Common INR networks consist of pure Multi-Layer Perceptrons (MLP) with periodic activation functions. Fourier Feature Mapping (FFM) \cite{tancik2020fourier} places a sinusoidal transformation before the MLP, while Sinusoidal Representation Network (SIREN) \cite{sitzmann2020implicit} replaces every piece-wise linear activation with a sinusoidal
function.
Below we give a unified formulation of INR networks:
\begin{align} \label{eqn:siren}
\Phi(\Mat{x}) = \Mat{W}_n(\phi_{n-1} \circ \phi_{n-2} \circ \cdots \circ \phi_1)(\Mat{x}), \quad
\phi_i(\Mat{x}) = \sigma_i(\Mat{W}_i \Mat{x} + \Mat{b}_i),
\end{align}
where $\Mat{W}_i \in \real^{d_{i-1} \times d_{i}}$, $\Mat{b}_i \in \real^{d_i}$ are the weight matrix and bias of the $i$-th layer, respectively, $n$ is the number of layers, and $\sigma_i(\cdot)$ is an element-wise nonlinear activation function.
For FFM architecture, $\sigma_i = \sin(\cdot)$ when $i = 1$ denotes the positional encoding layer \cite{mildenhall2020nerf, zhong2021cryodrgn} and otherwise $\sigma_i = \operatorname{ReLU}(\cdot)$.
For SIREN, $\sigma_i = \sin(\cdot)$ for every layer $i = 1, \cdots, n-1$.

\begin{figure}
    \centering
    \includegraphics[width=\textwidth]{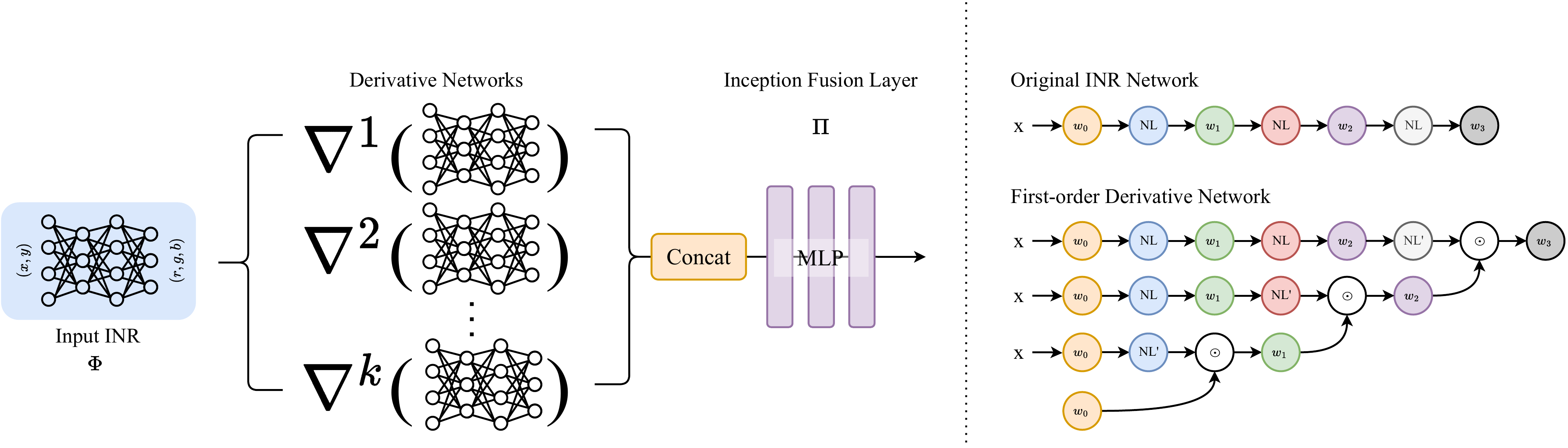}
    \caption{The left image provides an overview of our INSP-Net framework. Each layer combines the high-order derivative computational graphs of the original INR network. The right image illustrates the weight sharing scheme in calculating the derivative sub-networks.}
    \label{fig:framework}
\end{figure}

\section{Implicit Representation Processing via Differential Operators}

\looseness=-1
Digital Signal Processing (DSP) techniques have been widely applied in computer vision tasks, such as image restoration~\cite{xu2020moire}, signal enhancement~\cite{xu2019blind} and geometric processing~\cite{wang2015rolling}.
Even modern deep learning models are consisting of the most basic signal processing operators.
Suppose we already acquire an Implicit Neural Representation (INR) $\Phi: \real^m \rightarrow \real$, now we are interested in whether we can run a signal processing program on the implicitly represented signals.
One straightforward solution is to rasterize the implicit field with a 2D/3D lattice and run a typical kernel on the pixel/voxel grids.
However, this decoding strategy produces a finite resolution and discretizes signals, which is memory inefficient and unfriendly to modeling fine details.
In this section, we introduce a computation paradigm that can process an INR analytically with spatial/temporal derivatives.
We show that our proposed method serves as a universal operator that can represent any continuous convolutional kernels.

\subsection{Computational Paradigm} \label{sec:compute_paradigm}
It has not escaped our notice that spatial/temporal gradients on INRs $\nabla^k \Phi$ can be computed analytically due to the differentiable characteristics of neural networks.
Inspired by this, we propose an Implicit Neural Signal Processing (INSP) framework that composes a class of closed-form operators for INRs using functional combinations of high-order derivatives.

We denote our proposed signal processing operator by $\Op{A}$ built upon high-order derivatives.
Given an acquired INR $\Phi$, we denote the resultant INR processed by operator $\Op{A}$ as $\Psi = \Op{A}\Phi: \real^m \rightarrow \real$. To evaluate point $\Mat{x} \in \real^{m}$ of processed INR, we propose the following computational paradigm:
\begin{align} \label{eqn:insp}
\Psi(\Mat{x}) \defeq \Op{A} \Phi(\Mat{x}) = \Pi\left(\Phi(\Mat{x}), \nabla\Phi(\Mat{x}), \nabla^2\Phi(\Mat{x}), \cdots, \nabla^k\Phi(\Mat{x}), \cdots \right),
\end{align}
where $\Pi: \real^M \rightarrow \real$ can be arbitrary continuous functions, which can be either handcrafted or learned from data.
To learn an operator $\Op{A}$ from data, we represent $\Pi$ by Multi-Layer Perceptrons (MLP) with parameters $\Mat{\theta}$.
Here we can slightly abuse the notation of $\nabla^k$ to be a flattened vector of high-order derivatives \textit{without multiplicity} since differential operators defined over continuous functions form a commutative ring.
The input dimension of $\Pi$ depends on the highest order of used derivatives. Suppose we compute derivatives up to $K$-th order, then $M = \sum_{k=0}^{K} {k+m-1 \choose k} = (K+1){K+m \choose K+1}/m$, where ${k+m-1 \choose k}$ is the number of distinctive $k$-th order differential operators \footnote{This is equal to the number of monic monomials over $\real^m$ with degree $k$.}.
Intuitively, directional derivatives encode (local) neighboring information, which can have similar effects of a convolution.
As we will show in Sec. \ref{sec:theory}, $\Pi$ can construct both shift-invariant and rotation-invariant operators, which introduces favorable inductive bias to images and 3D geometry processing.
More importantly, we rigorously prove that Eq. \ref{eqn:insp} is also a universal approximator of arbitrary convolutional operators.

\looseness=-1
We note that $\Psi(\Mat{x})$ as a whole can also be regarded as a neural network.
Recall the architecture of $\Phi(\Mat{x})$ in Eq. \ref{eqn:siren}, its $k$-th order derivative is another computational graph parameterized by $\Mat{W}_i$ and $\Mat{b}_i$ that maps $\Mat{x}$ to $\nabla^k\Phi(\Mat{x})$.
For example, the first-order gradient will have the following form:
\begin{align}
\nabla \Phi(\Mat{x}) &= \hat{\phi}_{n-1} \circ (\phi_{n-2} \circ \cdots \circ \phi_1)(\Mat{x}) \odot \cdots \odot \hat{\phi}_2 \circ \phi_1(\Mat{x}) \odot \Mat{W}_1,
\end{align}
where $\hat{\phi}_i(\Mat{y}) = \Mat{W}_i^\top \sigma_{i-1}'(\Mat{W}_{i-1} \Mat{y} + \Mat{b}_{k-1})$, and $\sigma_{i}'(\cdot)$ is the first-order derivative of $\sigma_{i}(\cdot)$.
Since $\hat{\phi}_i$ shares the weights with $\phi_i$, $\nabla\Phi$ is represented by a closed-form computational network re-using the weights from $\Phi$, which we refer to as the first-order \textit{derivative network}.
The higher-order derivatives should induce the derivative network of similar forms.
Therefore, the processed INR $\Psi$ will have an Inception-like architecture, namely, a multi-branch structure connecting the original INR network and weight-sharing derivative subnetworks followed by a fusion layer $\Pi$.
We call the entire model ($\Psi=\Op{A}\Phi$ or Eq. \ref{eqn:insp}) an \textit{Implicit Neural Signal Processing Network} or an \textit{INSP-Net}.
Note that the only parameters of INSP-Net $\Mat{\theta}$ are located at the last fusion layer, and can be trained in an end-to-end manner.

We illustrate an INSP-Net in Fig.~\ref{fig:framework} where the color indicates the weight-sharing scheme. A similar weight-sharing scheme is also adopted in AutoInt~\cite{lindell2021autoint}.
In practice, we employ auto-differentiation in PyTorch~\cite{paszke2019pytorch} to automatically create such derivatives networks and reassemble them parallelly to constitute the architecture of an INSP-Net.
When inputting an INR, we load the weights of the INR to our model following the weight-sharing scheme, and then we obtain an INSP-Net, which implicitly and continuously represents the processed INR $\Psi(\Mat{x})$.
To effectively express high-order derivatives, we choose SIREN as the base model \cite{sitzmann2020implicit}.

\subsection{Theoretical Analysis} \label{sec:theory}

In this section, we provide a theoretical justification for the design of our INSP-Net.
We will focus on discussing the latent invariance property and the expressive power of INSP-Net.

\paragraph{Translation and Rotation Invariance.}
Group invariance has been shown to be a favorable inductive bias for image~\cite{koenderink2002image}, video~\cite{kim2000region}, and geometry processing~\cite{mitra2013symmetry}.
It has also been well-known that group invariance is an intrinsic property of Partial Differential Equations (PDEs) \cite{olver2000applications, liu2010learning}.
Since our INSP-Net is built using differential operators, we are motivated to reveal its hidden invariance property to demonstrate its advantage in processing visual signals.

In this section, we only consider two transformation groups: translation group $\mathbb{T}(m)$ and the special orthogonal group $\mathbb{SO}(m)$ (a.k.a. rotation group).
Elements $T_{\Mat{v}} \in \mathbb{T}(m)$ in translation group shift the function $\Phi$ by some offset $\Mat{v} \in \real^m$. The shifted function can be denoted as $\Phi \circ T_{\Mat{v}}(\Mat{x}) = \Phi(\Mat{x} + \Mat{v})$.
Similarly, elements in rotation group perform a coordinate transformation on function $\Phi$ by a rotation matrix $\Mat{R} \in \mathbb{SO}(m)$.
The transformed function can be written as $\Phi \circ \Mat{R}(\Mat{x}) = \Phi(\Mat{R}\Mat{x})$.
Group invariance means deforming the input space of a function first and then processing it via an operator is equivalent to directly applying the transformation to the processed function.
For a more rigorous argument, $\Op{A}$ is said to be translation-invariant if $\forall T_{\Mat{v}} \in \mathbb{T}(m)$, $\Psi(\Mat{x} + \Mat{v}) = \Op{A}[\Phi \circ T_{\Mat{v}}](\Mat{x})$.
Likewise, $\Op{A}$ is rotation-invariant if $\forall \Mat{R} \in \mathbb{SO}(m)$ we have $\Psi(\Mat{R} \Mat{x}) = \Op{A}[\Phi \circ \Mat{R}](\Mat{x})$.
Below we provide Theorem \ref{thm:invariance} to characterize the invariance property for our model.
\begin{theorem} \label{thm:invariance}
Given function $\Pi: \real^M \rightarrow \real$, the composed operator $\Op{A}$ (Eq. \ref{eqn:insp}) can satisfy:
\begin{enumerate}
\item shift invariance for every $\Pi$.
\item rotation invariance if $\Pi$ has the form: $\Pi(\Mat{y}) = f(\lVert \Mat{y} \rVert_2)$ for some $f: \real \rightarrow \real$.
\end{enumerate}
\end{theorem}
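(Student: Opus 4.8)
The plan is to treat the two claims separately, since each reduces to an elementary covariance property of the iterated gradient $\nabla^k\Phi$ under the relevant coordinate change, after which the structure of $\Pi$ does the rest. \textbf{Shift invariance} is the easy half, and needs no hypothesis on $\Pi$: the only fact required is that differentiation commutes with translation, i.e. for any offset $\Mat{v}$ and any order $k$ we have $\nabla^k(\Phi\circ T_{\Mat{v}})(\Mat{x}) = (\nabla^k\Phi)(\Mat{x}+\Mat{v})$, which is immediate from the chain rule because $T_{\Mat{v}}$ has Jacobian the identity. Substituting this into the definition (Eq. \ref{eqn:insp}) gives $\Op{A}[\Phi\circ T_{\Mat{v}}](\Mat{x}) = \Pi(\Phi(\Mat{x}+\Mat{v}), \nabla\Phi(\Mat{x}+\Mat{v}), \ldots, \nabla^K\Phi(\Mat{x}+\Mat{v})) = \Psi(\Mat{x}+\Mat{v})$ for every $\Pi$, which is part 1.

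For \textbf{rotation invariance} the first step is to record how the iterated gradient transforms under an orthogonal change of coordinates. Viewing $\nabla^k\Phi(\Mat{x})$ as a symmetric $k$-tensor, the multivariate chain rule (which collapses because $\Mat{x}\mapsto\Mat{R}\Mat{x}$ is linear, so all of its higher derivatives vanish) yields $\nabla^k(\Phi\circ\Mat{R})(\Mat{x}) = (\Mat{R}^\top)^{\otimes k}\big[(\nabla^k\Phi)(\Mat{R}\Mat{x})\big]$, i.e. the tensor at $\Mat{R}\Mat{x}$ with $\Mat{R}^\top$ applied to each of its $k$ slots. The second step is the observation that $\Mat{R}^{\otimes k}$ is orthogonal on the space of $k$-tensors whenever $\Mat{R}$ is orthogonal, hence preserves the Euclidean (Frobenius) tensor norm: $\lVert\nabla^k(\Phi\circ\Mat{R})(\Mat{x})\rVert = \lVert(\nabla^k\Phi)(\Mat{R}\Mat{x})\rVert$, with the $k=0$ case being the trivial identity $(\Phi\circ\Mat{R})(\Mat{x})=\Phi(\Mat{R}\Mat{x})$. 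Concatenating over $k=0,\ldots,K$, the stacked feature vector $\Mat{y}$ that feeds $\Pi$, evaluated for the input $\Phi\circ\Mat{R}$ at $\Mat{x}$, has exactly the same norm as the stacked vector evaluated for $\Phi$ at $\Mat{R}\Mat{x}$. Therefore, if $\Pi(\Mat{y}) = f(\lVert\Mat{y}\rVert_2)$, then $\Op{A}[\Phi\circ\Mat{R}](\Mat{x}) = f(\lVert\Mat{y}(\Mat{R}\Mat{x})\rVert_2) = \Psi(\Mat{R}\Mat{x})$, which is part 2.

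The main obstacle is a bookkeeping subtlety rather than a conceptual one: the construction stores $\nabla^k\Phi$ \emph{without multiplicity}, i.e. as the $\binom{k+m-1}{k}$ distinct partials rather than as the full symmetric tensor, so the plain Euclidean norm of that reduced vector is a multinomially reweighted version of the tensor norm and is not $\Mat{R}^{\otimes k}$-invariant by itself. I would handle this by taking the inner product defining $\lVert\cdot\rVert_2$ in part 2 to be the multinomially-weighted one — equivalently, by passing through the full-multiplicity tensor, on which $\Mat{R}^{\otimes k}$ acts orthogonally, and then quotienting — and would remark that for $K=1$, the regime used in most experiments, no multiplicity arises and the ordinary $\ell_2$ norm already suffices. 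A secondary, entirely routine point is proving the linear-coordinate chain rule cleanly; rather than invoking the general Faà di Bruno formula I would simply induct on $k$, writing $\nabla^k(\Phi\circ\Mat{R}) = \nabla\big(\nabla^{k-1}(\Phi\circ\Mat{R})\big)$ and extracting one factor of $\Mat{R}^\top$ at each step.
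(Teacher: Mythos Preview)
Your proposal is correct and follows essentially the same route as the paper: shift invariance via the identity Jacobian of $T_{\Mat{v}}$ plus induction on $k$, and rotation invariance via the chain rule giving the $(\Mat{R}^\top)^{\otimes k}$ action on $\nabla^k\Phi$ together with orthogonality of $\Mat{R}^{\otimes k}$ preserving the tensor norm (the paper phrases the latter through $\V(\cdot)$ and Kronecker-product identities, but it is the same computation). Your discussion of the ``without multiplicity'' bookkeeping is a point the paper's proof does not raise---its argument silently works with the full-multiplicity Frobenius norm---so your remark there is a welcome clarification rather than a deviation.
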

We prove Theorem \ref{thm:invariance} in Appendix ~\ref{sec:prf_inv}.
Our Theorem \ref{thm:invariance} implies that operator $\Op{A}$ is inherently shift-invariant. This is due to the shift-invariant intrinsics of differential operators as we show in the proof.
Rotation invariance is not guaranteed in general. However, if one carefully designs $\Pi$, it can also be achieved via our framework.
Moreover, we also suggest a feasible solution to constructing a rotation-invariant operator $\Op{A}$ in Theorem \ref{thm:invariance}.
In our construction, $\Pi$ first isotropically pools over the squares of all directional derivatives, and then maps the summarized information through another scalar function $f$.
We refer interested readers to \cite{olver2000applications} for more group invariance in differential forms.

\paragraph{Universal Approximation.}
Convolution, formally known as the linear shift-invariant operator, has served as one of the most prevalent signal processing tools in the vision domain.
Given two (real-valued) signals $f$ and $g$, we denote their convolution as $g \conv f = f \conv g$.
In this section, we examine the expressiveness of our INSP-Net (Eq. \ref{eqn:insp}) by showing it can represent any convolutional filter.
To draw this conclusion, we first present an informal version of our main results as follows:
\begin{theorem} \label{thm:ploy_approx_conv}
(Informal statement)
For every real-valued function $g: \real^m \rightarrow \real$, there exists a polynomial $p(x_1, \cdots, x_m)$ with real coefficients such that $p\left(\nabla\right)f$ can uniformly approximate $g \conv f$ by arbitrary precision for all real-valued signals $f$.
\end{theorem}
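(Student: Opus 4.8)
The plan is to pass to the Fourier side, where both convolution and constant-coefficient differential operators act by multiplication, and then reduce everything to a classical polynomial-approximation fact. First I would pin down the function class that makes the claim true: take $g\in L^1(\real^m)$ (e.g.\ compactly supported and continuous, so all its moments exist) and restrict the signals $f$ to a bandlimited class $\Set{B}_R=\{f:\operatorname{supp}\hat f\subseteq \overline{B_R}\}$ for a fixed radius $R$, normalized by $\lVert\hat f\rVert_{L^1}\le 1$ (some normalization is unavoidable, since scaling $f$ scales both sides). With the convention $\widehat{\partial^\alpha f}(\Mat{\xi})=(2\pi i\Mat{\xi})^\alpha\hat f(\Mat{\xi})$ one gets $\widehat{p(\nabla)f}(\Mat{\xi})=p(2\pi i\Mat{\xi})\,\hat f(\Mat{\xi})$ while $\widehat{g\conv f}=\hat g\cdot\hat f$, so by Fourier inversion ($\lvert u(\Mat{x})\rvert\le\lVert\hat u\rVert_{L^1}$) and the bandlimit,
\[
\lVert g\conv f-p(\nabla)f\rVert_{\infty}
\;\le\;\int_{\real^m}\big\lvert\hat g(\Mat{\xi})-p(2\pi i\Mat{\xi})\big\rvert\,\lvert\hat f(\Mat{\xi})\rvert\,d\Mat{\xi}
\;\le\;\Big(\sup_{\Mat{\xi}\in\overline{B_R}}\big\lvert\hat g(\Mat{\xi})-p(2\pi i\Mat{\xi})\big\rvert\Big)\,\lVert\hat f\rVert_{L^1}.
\]
Thus the whole theorem collapses to the statement: for every $\varepsilon>0$ there is a real polynomial $p$ with $\sup_{\overline{B_R}}\lvert\hat g-p(2\pi i\,\cdot)\rvert<\varepsilon$.

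Next I would prove that reduced claim. Since $g\in L^1$, $\hat g$ is continuous and bounded on the compact set $\overline{B_R}$, so by Stone--Weierstrass it is a uniform limit there of complex polynomials in $\Mat{\xi}$. The only subtlety is the \emph{real}-coefficient constraint: for a real polynomial $p$, the map $\Mat{\xi}\mapsto p(2\pi i\Mat{\xi})$ has an even polynomial as real part and an odd polynomial as imaginary part, which mirrors exactly the Hermitian symmetry $\hat g(-\Mat{\xi})=\overline{\hat g(\Mat{\xi})}$ forced by $g$ being real-valued. I would exploit this: write $\hat g=u+iv$ with $u$ even and $v$ odd on the symmetric domain $\overline{B_R}$; approximate $u$ and $v$ uniformly by ordinary real polynomials (Stone--Weierstrass again), then symmetrize $q\mapsto\tfrac12(q(\Mat{\xi})+q(-\Mat{\xi}))$ / antisymmetrize the approximants so they retain only even- / odd-degree monomials and still approximate $u$ / $v$; finally solve for the real coefficients $c_\alpha$ of a single polynomial $p$ whose complexification $p(2\pi i\,\cdot)$ reproduces these two pieces. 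Chaining the estimates finishes the proof. If one prefers an explicit witness, the coefficients of $p$ are, up to factors $i^{\lvert\alpha\rvert}/\alpha!$, the moments $\int_{\real^m}\Mat{y}^\alpha g(\Mat{y})\,d\Mat{y}$; this is just the Taylor-expansion identity $g\conv f(\Mat{x})=\sum_\alpha\frac{(-1)^{\lvert\alpha\rvert}}{\alpha!}\big(\int\Mat{y}^\alpha g\,d\Mat{y}\big)\partial^\alpha f(\Mat{x})$ truncated at a finite order, so one could alternatively run the whole argument directly in physical space by Taylor-expanding $f(\Mat{x}-\Mat{y})$ and controlling the remainder.

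I expect the main obstacle to be making precise \emph{which} signals $f$ the phrase ``for all real-valued signals'' is allowed to range over: without a bandlimit (or an equivalent decay/analyticity hypothesis) the statement is simply false, since $p(2\pi i\Mat{\xi})$ cannot track $\hat g(\Mat{\xi})$ as $\lVert\Mat{\xi}\rVert\to\infty$, and the approximation bound has to be uniform over the chosen class rather than tuned to one $f$. A secondary technical nuisance is the real-coefficient / Hermitian-symmetry bookkeeping in dimension $m>1$ (the parity-symmetrization step above), and, should one take the Taylor route instead, uniformly bounding the tail $\sum_{\lvert\alpha\rvert>K}$ — which again relies on the Paley--Wiener fact that bandlimited signals are entire of exponential type with well-controlled derivatives. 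Everything beyond these points is routine.
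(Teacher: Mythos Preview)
Your proposal is correct and follows essentially the same route as the paper: pass to the Fourier side, restrict to the compact bandlimit support of $\hat f$, invoke Stone--Weierstrass, and handle the real-coefficient constraint via the conjugate symmetry $\hat g(-\Mat{\xi})=\overline{\hat g(\Mat{\xi})}$ before closing with an $L^1(\hat f)$--weighted H\"older bound. The only cosmetic difference is that the paper packages the even/odd parity bookkeeping through the Hartley transform $\phi[\hat g]=\Re\hat g-\Im\hat g$ rather than your explicit symmetrize/antisymmetrize step, and it does not discuss the alternative Taylor-moment construction you sketch at the end.
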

The formal statement and proof can be found in Appendix \ref{sec:prf_uni_approx}.
Theorem \ref{thm:ploy_approx_conv} involves the notion of polynomials in partial differential operators (see more details in Appendix \ref{sec:prf_uni_approx}).
$p(\nabla) f$ in turn can be written as a \textit{linear} combination of high-order  derivatives of $f$ (a special case of Eq. \ref{eqn:insp} when $\Pi$ is linear).
The key step to prove Theorem \ref{thm:ploy_approx_conv} is applying Stone-Weierstrass approximation theorem on the Fourier domain.
However, we note that functions obtained by the Fourier transform are generally complex functions.
The prominence of our proof is that we can constrain the range of the polynomial coefficients into the real domain, which makes it implementable via a common deep learning infrastructure.
The implication of Theorem \ref{thm:ploy_approx_conv} is that the mapping between convolution and derivative is as simple as a linear transformation.
Recent works \cite{dong2017image, long2018pde, long2019pde} show the converse argument that derivatives can be approximated via a linear combination of \textit{discrete} convolution.
Theorem \ref{thm:ploy_approx_conv} establishes the equivalence between differential operator and convolution in the \textit{continuous} regime.
In our proof, $k$-th order derivatives correspond to $k$-th order monomial in the spectral domain.
Fitting convolution using derivatives amounts to approximating spectrum via polynomials.
This implies higher degree of polynomial induces closer approximation.
Since $p(\nabla)$ is not difficult to be approximated by a neural network $\Pi_{\Mat{\theta}}$, we can easily derive the next result Corollary \ref{cor:uni_approx_conv}.
\begin{corollary} \label{cor:uni_approx_conv}
For every real-valued function $g$, there exists a neural network $\Pi_{\Mat{\theta}}$ such that $\Psi = \Op{A}\Phi$ (Eq. \ref{eqn:insp}) can uniformly approximate $g \conv \Phi$ by arbitrary precision for every real-valued signals $\Phi$.
\end{corollary}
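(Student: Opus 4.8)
The plan is to chain Theorem~\ref{thm:ploy_approx_conv} with the classical universal approximation theorem for multi-layer perceptrons. Fix a target precision $\varepsilon > 0$ and a real-valued function $g:\real^m\to\real$. By Theorem~\ref{thm:ploy_approx_conv} there is a polynomial $p(x_1,\dots,x_m)$ with real coefficients such that, writing $p(\nabla)$ for the associated constant-coefficient differential operator, $\lVert p(\nabla)\Phi - g\conv\Phi\rVert_\infty \le \varepsilon/2$ uniformly over the admissible signals $\Phi$. Expanding $p$ in the monomial basis, $p(\nabla)\Phi$ is exactly a \emph{linear} functional $L$ applied to the stacked vector of high-order derivatives $\Mat{z}(\Mat{x}) = (\Phi(\Mat{x}),\nabla\Phi(\Mat{x}),\dots,\nabla^K\Phi(\Mat{x})) \in \real^M$, where $K=\deg p$ and $M$ is the count from Sec.~\ref{sec:compute_paradigm}. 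Hence $p(\nabla)\Phi$ already has the form of Eq.~\ref{eqn:insp} with $\Pi = L$ linear; that is, it is realized by an INSP-Net with a linear fusion layer, with the polynomial depending only on $g$ and not on $\Phi$.

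The second step is to replace the linear fusion map $L$ by a genuine neural-network fusion map $\Pi_{\Mat{\theta}}$. Restricting attention (as is implicit in the statement) to signals $\Phi$ supported on a fixed compact domain $\Omega \subset \real^m$ and belonging to a family with uniformly bounded derivatives up to order $K$, the vector $\Mat{z}(\Mat{x})$ ranges over a fixed compact set $\mathcal{Z}\subset\real^M$ as $\Mat{x}$ varies over $\Omega$ and $\Phi$ over the family. On the compact set $\mathcal{Z}$ the universal approximation theorem (e.g.\ Cybenko/Hornik) yields parameters $\Mat{\theta}$ — chosen once, independently of the individual $\Phi$ — with $\sup_{\Mat{z}\in\mathcal{Z}}\lvert \Pi_{\Mat{\theta}}(\Mat{z}) - L(\Mat{z})\rvert \le \varepsilon/2$. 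Setting $\Psi = \Op{A}\Phi = \Pi_{\Mat{\theta}}(\Mat{z}(\cdot))$, the triangle inequality gives $\lVert \Psi - g\conv\Phi\rVert_{\infty,\Omega} \le \lVert \Pi_{\Mat{\theta}}(\Mat{z}(\cdot)) - p(\nabla)\Phi\rVert_\infty + \lVert p(\nabla)\Phi - g\conv\Phi\rVert_\infty \le \varepsilon$, which is the claim.

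The one genuinely delicate point — and the step I expect to require care — is the quantifier ``for every real-valued signal $\Phi$'': a neural network approximates a linear (hence globally unbounded) map only on compact sets, so the statement cannot hold verbatim for an arbitrary, unbounded class of signals. I would make this rigorous by phrasing the corollary over a compact domain $\Omega$ and a uniformly $C^K$-bounded family of INRs, which is precisely the regime of interest here since INRs (SIREN) are smooth and are evaluated on a bounded grid. With that caveat in place the argument is routine: the substantive mathematics lives entirely in Theorem~\ref{thm:ploy_approx_conv}, and the corollary only needs to observe that (i) a linear fusion layer already lies inside the INSP-Net family, and (ii) swapping it for a trained MLP costs at most an additional $\varepsilon/2$ by universal approximation. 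The only remaining bookkeeping is to check that $\Mat{z}(\Mat{x})$ indeed stays in a compact set, which follows from continuity of $\Phi$ and its derivatives on the compact $\Omega$ together with the assumed uniform bound on the family.
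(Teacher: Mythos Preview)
Your proposal is correct and matches the paper's own reasoning: the paper does not give a standalone proof of Corollary~\ref{cor:uni_approx_conv} but simply states that ``since $p(\nabla)$ is not difficult to be approximated by a neural network $\Pi_{\Mat{\theta}}$, we can easily derive the next result,'' which is exactly your two-step chaining of Theorem~\ref{thm:ploy_approx_conv} with the universal approximation theorem. If anything, you are more careful than the paper: your explicit restriction to a compact domain and a uniformly $C^K$-bounded family of signals (so that the derivative vector $\Mat{z}(\Mat{x})$ lives in a compact set on which the MLP can approximate the linear map) is a technical point the paper leaves implicit, and your observation that the ``for every $\Phi$'' quantifier cannot hold verbatim without such a restriction is well taken.
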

As we discussed in Theorem \ref{thm:invariance}, $\Op{A}$ are constantly shift-invariant. This means when approximating a convolutional kernel, the trajectory of $\Op{A}$ is restricted into the shift-invariant space.
Moreover, we emphasize that INSP-Net is far more expressive than convolutional kernels since $\Pi_{\Mat{\theta}}$ can also fit any nonlinear continuous functions due to the universal approximation theorem \cite{cybenko1989approximation, hornik1991approximation, yarotsky2017error}.

\subsection{Building CNNs for Implicit Neural Representations}
\label{sec:insp_conv}
Convolutional Neural Networks (CNN) are capable of extracting informative semantics by only piling up basic signal processing operators.
This motivates us to build CNNs based on INSP-Net that can directly run on INRs for high-level downstream tasks.
In fact, to simulate \textit{exact} convolution, our Theorem \ref{thm:ploy_approx_conv} suggests simplify $\Pi_{\Mat{\theta}}$ to a linear mapping.
Then our former computational paradigm Eq. \ref{eqn:insp} is changed to:
\begin{align} \label{eqn:insp_conv}
\Psi(\Mat{x}) \defeq p(\nabla) \Phi(\Mat{x}) = \theta_0 \Phi(\Mat{x}) + \Mat{\theta}_1^\top \nabla\Phi(\Mat{x}) + \Mat{\theta}_2^\top \nabla^2\Phi(\Mat{x}) + \cdots + \Mat{\theta}_k^\top \nabla^k\Phi(\Mat{x}) + \cdots,
\end{align}
where $\Mat{\theta}_k \in \real^{k+m-1 \choose k}$ are parameters of the operator $p(\nabla)$.
We name this special case of Eq. \ref{eqn:insp} as INSP-Conv.
One plausible implementation of INSP-Conv is to employ a one-layer MLP to represent $\Pi_{\Mat{\theta}}$.
When $\Op{A} = p(\nabla)$, INSP-Conv preserves both linearity and shift-invariance when evolving during the training.
We propose to repeatedly apply INSP-Conv with non-linearity to INRs that mimics a CNN-like architecture.
We name this class of CNNs composed by multi-layer INSP-Conv (Eq. \ref{eqn:insp_conv}) as INSP-ConvNet.
Previous works \cite{zhi2021place, vora2021nesf} extracting semantic features from INR either lack local information by point-wisely mapping INR's intermediate representation to a semantic space or explicitly rasterize INR into regular grids.
To the best of our knowledge, it is the first time that one can run a CNN directly on an implicit representation thanks to closed-formness of INSP-Net.
The overall architecture of INSP-ConvNet can be formulated as:
\begin{align}
    \operatorname{ConvNet}[\Phi](\Mat{x}) = \Op{A}^{(L)} \cdot \sigma \circ \Op{A}^{(L-1)} \cdot \sigma \circ \cdots \circ \Op{A}^{(2)} \cdot \sigma \circ \Op{A}^{(1)} \cdot \Phi(\Mat{x}),
\end{align}
where $\sigma$ is an element-wise non-linear activation, $L$ is the number of INSP-Net layers, and $\Phi$ is an input INR.
We use the symbol $\cdot$ to denote operator functioning, and $\circ$ to denote function composition.
Due to page limit, we defer detailed introduction to INSP-ConvNet to Appendix \ref{sec:detail_insp_convnet}.

\vspace{-2mm}
\section{Related Work}
\label{sec:related_work}

\subsection{Implicit Neural Representation}
\label{sec:related_work_inr}
Implicit Neural Representation (INR) represents signals by continuous functions parameterized by multi-layer perceptrons (MLPs)~\cite{sitzmann2020implicit,tancik2020fourier}, which is different from traditional discrete representations (e.g., pixel, mesh). Compared with other representations, the continuous implicit representations are capable of representing signals at infinite resolution and have become prevailing to be applied upon image fitting~\cite{sitzmann2020implicit}, image compression~\cite{dupont2021coin,feng2021signet} and video compressing~\cite{zhang2021implicit}. In addition, INR has been applied to more efficient and effective shape representation~\cite{genova2019learning,atzmon2019controlling,chen2019learning,gropp2020implicit,mescheder2019occupancy,niemeyer2019occupancy,sdf,peng2020convolutional}, texture mapping~\cite{oechsle2019texture,saito2019pifu}, inverse problems~\cite{mildenhall2020nerf,chen2021nerv,sitzmann2021light,mildenhall2021nerf} and generative models~\cite{chan2021pi,devries2021unconstrained,gu2021stylenerf,hao2021gancraft,meng2021gnerf,niemeyer2021giraffe,schwarz2020graf,zhou2021cips}.
There are also efforts speeding up the fitting of INRs~\cite{tancik2021learned} and improving the representation efficiency~\cite{lee2021meta}.
Nowadays, editing and manipulating multi-media objects gains increasing interest and demand~\cite{dupont2022data}. Thus, signal processing on implicit neural representation is essentially an important task worth investigating.

\vspace{-1mm}
\subsection{Editable Implicit Fields}
\label{sec:related_work_editable}

Editing implicit fields has recently attracted much research interest. Several methods have been proposed to allow editing the reconstructed 3D scenes by rearranging the objects or manipulating the shape and appearance. One line of work alters the structure and color of objects by conditioning latent codes for different characteristics of the scene~\cite{park2019deepsdf,niemeyer2021giraffe,schwarz2020graf,Chen2019LearningIF}. Another direction involves discretizing the continuous fields. By converting the implicit fields into pixels or voxels, traditional image and voxel editing techniques~\cite{ HennesseySIGA2017, liu2017interactive} can be applied effortlessly.
These approaches, however, are not capable of directly performing signal processing on continuous INRs.
Functa~\cite{dupont2022data} can use a latent code to control implicit funtions.
NID~\cite{wang2022neural} represents neural fields as a linear combination of implicit functional basis, which enables editing by change of sparse coefficients.
However, such editing scheme suffers from limited flexibility.
Recently, NFGP~\cite{yang2021geometry} proposes to use neural fields for geometry processing by exploring various geometric regularization.
INS~\cite{fan2022unified} distills stylized features into INRs via the neural style transfer framework \cite{gatys2016image}.
Our INSP-Net that makes smart use of closed-form differential operators does not require neither additional per-scene fine-tuning nor discretization to grids.

\vspace{-1mm}
\subsection{PDE based Image Processing}
\label{sec:related_work_pde}

Partial differential equations (PDEs) have been successfully applied to many tasks in image processing and computer vision, such as image enhancement~\cite{perona1990scale,osher1990feature,tai2007image}, segmentation~\cite{lin2009designing,liu2010learning}, image registration~\cite{homke2007total}, saliency detection~\cite{liu2014adaptive} and optical flow computation~\cite{rabcewicz2007clg}. 
Early traditional PDEs are written directly based on mathematical and physical understanding of the PDEs (e.g., anisotropic diffusion~\cite{perona1990scale}, shock filter~\cite{osher1990feature} and curve evolution based equations~\cite{sapiro2006geometric,cao2003geometric,romeny2013geometry}). Variational design methods~\cite{tai2007image,rudin1992nonlinear,romeny2013geometry} start from an energy function describing the desired properties of output images and compute the Euler-Lagrange equation to derive the evolution equations. Learning-based attempts~\cite{liu2010learning,liu2014adaptive} build PDEs from image pairs based on the assumption (without proof) that PDEs could be written as linear combinations of fundamental differential invariants.
Although it might be feasible to let INRs solve this bunch of signal processing PDEs, one needs to per-case re-fit an INR with an additional temporal axis, which is presumably sampling inefficient.
The multi-layer structure appearing in INSP-Net can be viewed as an unfolding network \cite{gregor2010learning, liu2019alista} of the Euler method to solve time-variant PDEs \cite{chen2018neural}.
We elaborate this connection in Appendix \ref{sec:pde}.

\section{Experiments}
In this section, we evaluate the proposed \textbf{INSP} framework on several challenging tasks, using different combinations of $\Pi$.
First, we build low-level image processing filters using either hand-crafted or learnable $\Pi$.
Then, we construct convolutional neural networks with our {INSP-ConvNet} framework and validate its performance on image classification. More results and implementation details are provided in the Appendix \ref{sec:expr_details} \ref{sec:more_expr}.

\begin{figure}[t]
    \centering
    \centering
    \begin{tabular}{P{0.16\textwidth}P{0.16\textwidth}P{0.16\textwidth}P{0.16\textwidth}P{0.16\textwidth}}
    \scriptsize Input Image & \scriptsize Sobel Filter & \scriptsize Canny Filter  & \scriptsize Prewitt Filter & \scriptsize  \textbf{INSP-Net}\\
    \end{tabular}
    \begin{subfigure}[h]{0.19\textwidth}
  \includegraphics[width=\textwidth]{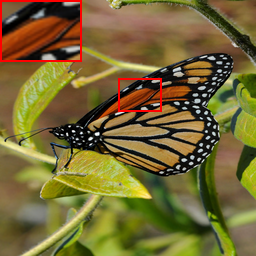}
\end{subfigure}
\begin{subfigure}[h]{0.19\textwidth}
  \includegraphics[width=\textwidth]{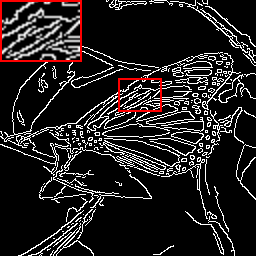}
\end{subfigure}
\begin{subfigure}[h]{0.19\textwidth}
  \includegraphics[width=\textwidth]{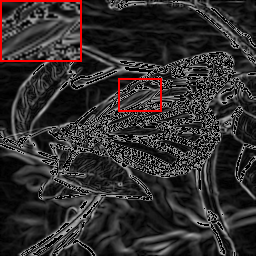}
\end{subfigure}
\begin{subfigure}[h]{0.19\textwidth}
  \includegraphics[width=\textwidth]{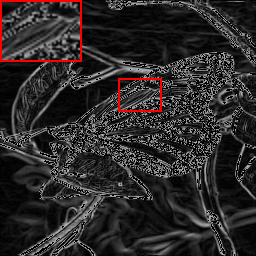}
\end{subfigure}
\begin{subfigure}[h]{0.19\textwidth}
  \includegraphics[width=\textwidth]{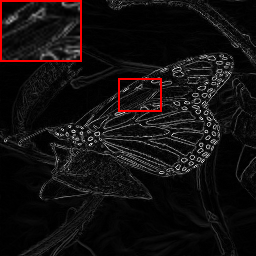}
\end{subfigure}
\\
    \caption{Edge detection. We fit the natural images with SIREN and use our INSP-Net to process implicitly into a new INR that can be decoded into edge maps.}
    \label{fig:edge}
\end{figure}

\begin{figure}[t]
    \centering
    \begin{tabular}{P{0.16\textwidth}P{0.16\textwidth}P{0.16\textwidth}P{0.16\textwidth}P{0.16\textwidth}}
    \scriptsize INR Fitted Noisy & \scriptsize Mean Filter & \scriptsize MPRNet & \scriptsize  \textbf{INSP-Net} & \scriptsize Target Image\\
    \end{tabular}
\begin{subfigure}[h]{0.18\textwidth}
      \includegraphics[width=\textwidth]{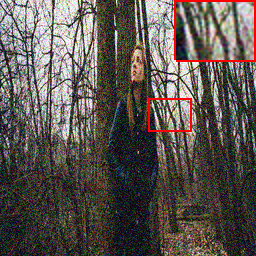}
  \end{subfigure}
     \begin{subfigure}[h]{0.18\textwidth}
        \includegraphics[width=\textwidth]{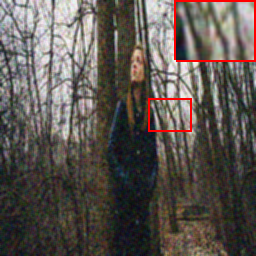}
    \end{subfigure}
     \begin{subfigure}[h]{0.18\textwidth}
        \includegraphics[width=\textwidth]{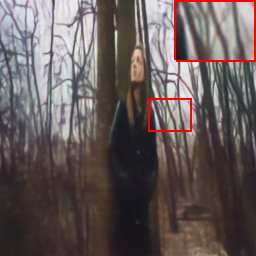}
    \end{subfigure}
  \begin{subfigure}[h]{0.18\textwidth}
      \includegraphics[width=\textwidth]{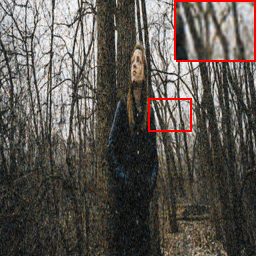}
  \end{subfigure}
  \begin{subfigure}[h]{0.18\textwidth}
      \includegraphics[width=\textwidth]{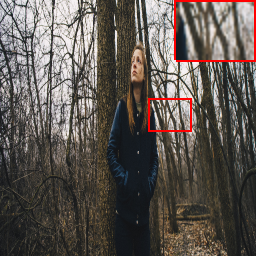}
  \end{subfigure}
    \begin{tabular}{P{0.16\textwidth}P{0.16\textwidth}P{0.16\textwidth}P{0.16\textwidth}P{0.16\textwidth}}
    \scriptsize 20.14/0.60 & \scriptsize 20.09/0.61  & \scriptsize \textcolor{blue}{20.51}/\textcolor{blue}{0.66} & \scriptsize  \textcolor{red}{24.02}/\textcolor{red}{0.76} & \scriptsize PSNR/SSIM \\
    \end{tabular}
    \caption{Image denoising. We fit the noisy images with SIREN and train our INSP-Net to process implicitly into a new INR that can be decoded into natural clear images.}
    \vspace{-1mm}
    \label{fig:denoise}
\end{figure}

\begin{figure}[t]
    \centering
    \begin{tabular}{P{0.16\textwidth}P{0.16\textwidth}P{0.16\textwidth}P{0.16\textwidth}P{0.16\textwidth}}
    \scriptsize INR Fitted Blur Image & \scriptsize Wiener Filter & \scriptsize MPRNet   & \scriptsize  \textbf{INSP-Net} & \scriptsize Target Image\\
    \end{tabular}
\begin{subfigure}[h]{0.19\textwidth}
  \includegraphics[width=\textwidth]{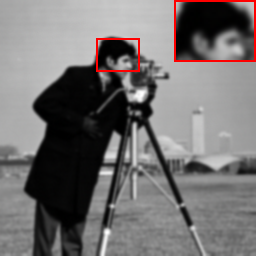}
  \end{subfigure}
\begin{subfigure}[h]{0.19\textwidth}
  \includegraphics[width=\textwidth]{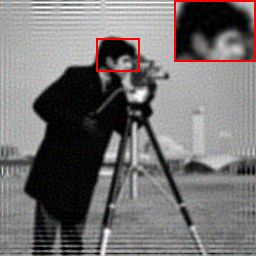}
\end{subfigure}
\begin{subfigure}[h]{0.19\textwidth}
  \includegraphics[width=\textwidth]{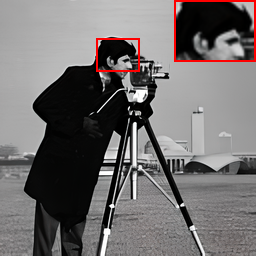}
\end{subfigure}
    \begin{subfigure}[h]{0.19\textwidth}
        \includegraphics[width=\textwidth]{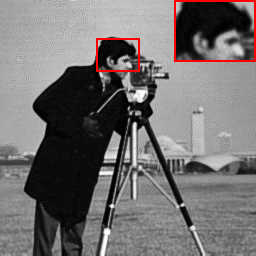}
    \end{subfigure}
\begin{subfigure}[h]{0.19\textwidth}
  \includegraphics[width=\textwidth]{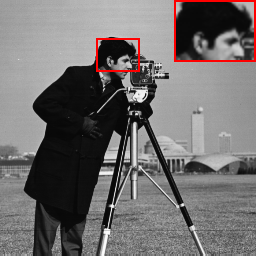}
\end{subfigure}
\\
\begin{tabular}{P{0.16\textwidth}P{0.16\textwidth}P{0.16\textwidth}P{0.16\textwidth}P{0.16\textwidth}}
    \scriptsize 23.88/0.77 & \scriptsize 21.72/0.52 & \scriptsize \textcolor{blue}{26.73}/\textcolor{red}{0.83}   & \scriptsize  \textcolor{red}{27.67}/\textcolor{blue}{0.79} & \scriptsize PSNR/SSIM
    \end{tabular}\\
    \caption{Image deblurring. We fit the blurred images with SIREN and train our INSP-Net to process implicitly into a new INR that can be decoded into clear natural images.}
    \vspace{-3mm}
    \label{fig:deblur}
\end{figure}

\begin{figure}[h]
    \centering
    \centering
    \begin{tabular}{P{0.18\textwidth}P{0.18\textwidth}P{0.18\textwidth}P{0.18\textwidth}}
    \scriptsize Original Image & \scriptsize Box Filter & \scriptsize Gaussian Filter  & \scriptsize  \textbf{INSP-Net}\\
    \end{tabular}
\\
\begin{subfigure}[h]{0.20\textwidth}
  \includegraphics[width=\textwidth]{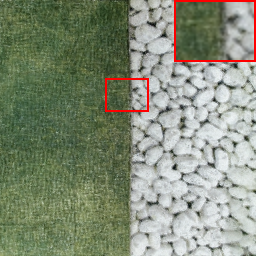}
\end{subfigure}
\begin{subfigure}[h]{0.20\textwidth}
  \includegraphics[width=\textwidth]{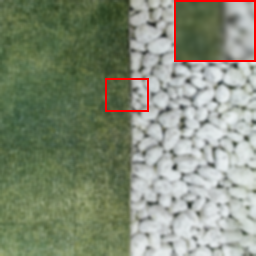}
\end{subfigure}
\begin{subfigure}[h]{0.20\textwidth}
  \includegraphics[width=\textwidth]{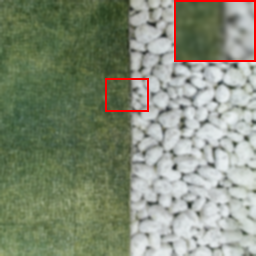}
\end{subfigure}
\begin{subfigure}[h]{0.20\textwidth}
  \includegraphics[width=\textwidth]{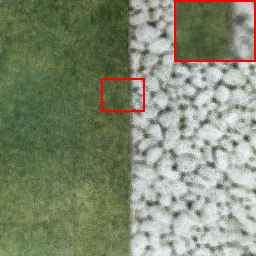}
\end{subfigure}
    \caption{Image blurring. We fit the natural images with SIREN and train our INSP-Net to process implicitly into a new INR that can be decoded into blurred images.}
    \label{fig:blur}
    \vspace{-1mm}
\end{figure}

\begin{figure}[h]
    \centering
    \begin{tabular}{P{0.13\textwidth}P{0.13\textwidth}P{0.13\textwidth}P{0.13\textwidth}P{0.13\textwidth}P{0.13\textwidth}}
    \scriptsize Input Image & \scriptsize Mean Filter & \scriptsize Median Filter & \scriptsize LaMa & \scriptsize  \textbf{INSP-Net} & \scriptsize Target Image\\
    \end{tabular}
    \begin{subfigure}[h]{0.16\textwidth}
        \includegraphics[width=\textwidth]{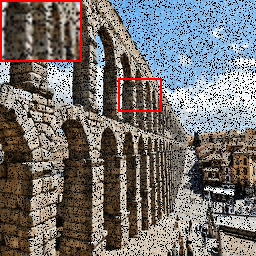}
    \end{subfigure}
    \begin{subfigure}[h]{0.16\textwidth}
        \includegraphics[width=\textwidth]{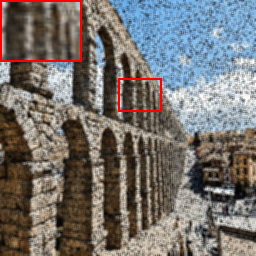}
    \end{subfigure}
    \begin{subfigure}[h]{0.16\textwidth}
        \includegraphics[width=\textwidth]{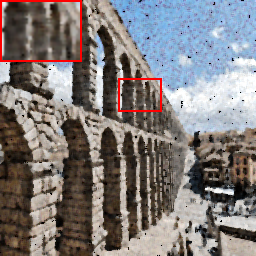}
    \end{subfigure}
      \begin{subfigure}[h]{0.16\textwidth}
        \includegraphics[width=\textwidth]{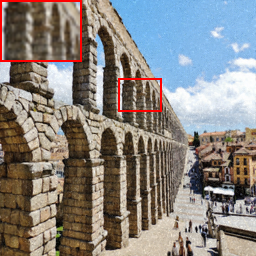}
    \end{subfigure}
    \begin{subfigure}[h]{0.16\textwidth}
        \includegraphics[width=\textwidth]{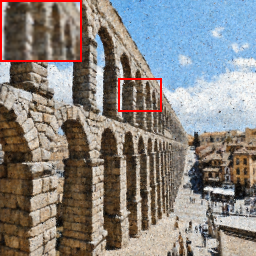}
    \end{subfigure}
    \begin{subfigure}[h]{0.16\textwidth}
        \includegraphics[width=\textwidth]{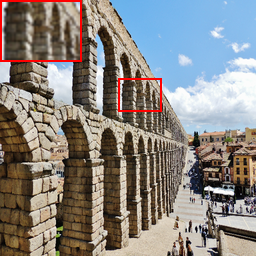}
    \end{subfigure}
    \\
        \begin{tabular}{P{0.13\textwidth}P{0.13\textwidth}P{0.13\textwidth}P{0.13\textwidth}P{0.13\textwidth}P{0.13\textwidth}}
    \scriptsize 12.60/0.43 & \scriptsize 17.02/0.53 & \scriptsize 18.99/0.63 & \scriptsize \textcolor{red}{26.40/0.88} & \scriptsize  \textcolor{blue}{23.07/0.76} & \scriptsize PSNR/SSIM\\
    \end{tabular}
    \begin{subfigure}[h]{0.16\textwidth}
        \includegraphics[width=\textwidth]{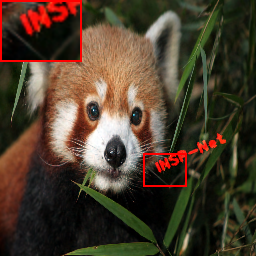}
    \end{subfigure}
    \begin{subfigure}[h]{0.16\textwidth}
        \includegraphics[width=\textwidth]{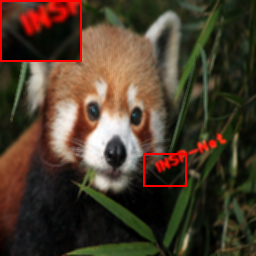}
    \end{subfigure}
    \begin{subfigure}[h]{0.16\textwidth}
        \includegraphics[width=\textwidth]{images/new_inpainting/div2k_0064_color_inpainting_text_meanFilter.png}
    \end{subfigure}
      \begin{subfigure}[h]{0.16\textwidth}
        \includegraphics[width=\textwidth]{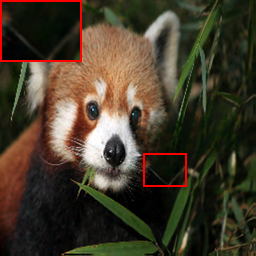}
    \end{subfigure}
    \begin{subfigure}[h]{0.16\textwidth}
        \includegraphics[width=\textwidth]{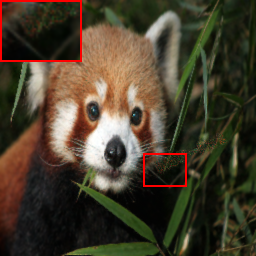}
    \end{subfigure}
    \begin{subfigure}[h]{0.16\textwidth}
        \includegraphics[width=\textwidth]{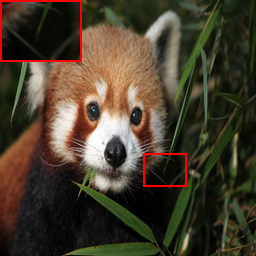}
    \end{subfigure}
    \\
    \begin{tabular}{P{0.13\textwidth}P{0.13\textwidth}P{0.13\textwidth}P{0.13\textwidth}P{0.13\textwidth}P{0.13\textwidth}}
    \scriptsize \textcolor{blue}{26.98}/\textcolor{red}{0.96} & \scriptsize 26.80/0.90 & \scriptsize {26.41/0.88} & \scriptsize {23.29/0.73} & \scriptsize  \textcolor{red}{33.44}/\textcolor{blue}{0.95} & \scriptsize PSNR/SSIM\\
    \end{tabular}
    \caption{Image inpainting. We fit the input images with SIREN and train our INSP-Net to process implicitly into a new INR that can be decoded into natural images. Note that LaMa requires explicit masks to select the regions for inpainting and the masks are roughly provided.}
    \label{fig:inpainting_30}
    \vspace{-3mm}
\end{figure}

\vspace{-1mm}

\subsection{Low-Level Vision for Implicit Neural Images}

For low-level image processing, we operate on natural images from Set5 dataset~\cite{BMVC.26.135}, Set14 dataset~\cite{zeyde2010single}, and DIV-2k dataset~\cite{Agustsson_2017_CVPR_Workshops}. Originally designed for super-resolution, the images are diverse in style and content.
Note that the unprocessed images presented in figures are the images decoded from unprocessed INRs.

Since our method operates directly on INRs, we firstly fit the images with INRs and then feed the INRs into our framework. The final output is another INR which can be decoded into desired images. The training set of our method consists of 90 examples of INRs, where each INR is built on SIREN~\cite{sitzmann2020implicit} architectures.

\vspace{-1mm}
\paragraph{Edge Detection}
Since the edges correspond to gradients in the images, using gradients of INRs to obtain edges is straightforward. $\theta_1$ is set to 1 while other coefficients are set to 0.
We provide visual comparisons against Sobel filter~\cite{sobel2014isotropic}, Canny detector~\cite{canny1986computational} and Prewitt operator~\cite{prewitt1970object} in Fig.~\ref{fig:edge}.

\vspace{-1mm}
\paragraph{Image Denoising}

For classical image denoising filters, we compare against the median filter and mean filter. We use MPRNet~\cite{Zamir2021MPRNet} as a learning-based baseline. The input noisy images are synthesized using additive gaussian noise. Visual results are provided in Fig.~\ref{fig:denoise}.

\vspace{-1mm}
\paragraph{Image Blurring}
Image blurring is a low-pass filtering operation. We provide a visual comparison against classical filters including $3\times 3$ box filter and $3\times 3$ gaussian filter. The target images used for training our INSP-Net are the results of the Gaussian filter. Visual results are provided in Fig.~\ref{fig:blur}.

\vspace{-1mm}
\paragraph{Image Deblurring}
We compare the proposed method with both traditional algorithms (e.g.,  wiener filter~\cite{vaseghi2008advanced}) and learning-based algorithms(e.g., MPRNet~\cite{Zamir2021MPRNet}). 
We synthesize blurry images using Gaussian filters.
As shown in Fig.~\ref{fig:deblur}, Wiener Filter produce severe artifacts and MPRNet successfully reconstructs clear textures. INSP-Net is capable of generating competitive results against MPRNet and outperforms the Wiener Filter.

\vspace{-1mm}
\paragraph{Image Inpainting}

We conduct two kinds of experiments in image inpainting, to inpaint 30\% random masked pixels or to remove the texts (``INSP-Net''). 
Comparison methods include mean filter, median filter, and LaMa~\cite{suvorov2021resolution}. LaMa is a learning-based method using Fourier convolution for inpainting. As shown in Fig.~\ref{fig:inpainting_30}, mean filter and median filter partially restore the masked pixels, but severely hurt the visual quality of the rest parts. Also, they can not handle the text region. LaMa successfully removes the text and inpaint the masked pixels. Our proposed method largely outperforms the filter-based algorithms and performs as well as the LaMa.

\subsection{Geometry Processing on Signed Distance Function}

We demonstrate that the proposed \textbf{INSP} framework is not only capable of processing images, but also capable of processing geometry.
Signed Distance Function (SDF) \cite{park2019deepsdf} is adopted to represent geometries in this section.
We first fit an SDF from a point cloud following the training loss proposed in \cite{sitzmann2020implicit, gropp2020implicit}.
Then we train our INSP-Net to simulate a Gaussian-like filter similar to image blurring.
Afterwards, we apply the trained INSP-Net to process the specified INR.
When visualization, we use marching cube algorithm to extract meshes from SDF.
We choose Thai Statue, Armadillo, and Dragon from Stanford 3D Scanning Repository \cite{curless1996volumetric, gardner2003linear, krishnamurthy1996fitting, turk1994zippered} to demonstrate our results.
Fig.~\ref{fig:geometry} exhibits our results on Thai Statue.
Our method is able to smoothen the surface of the geometry and erase high-frequency details acting as if a low-pass filter.
We defer more results to Fig. \ref{fig:geometry_addition}.

\begin{figure}[!t]
\centering
\includegraphics[width=1.\textwidth]{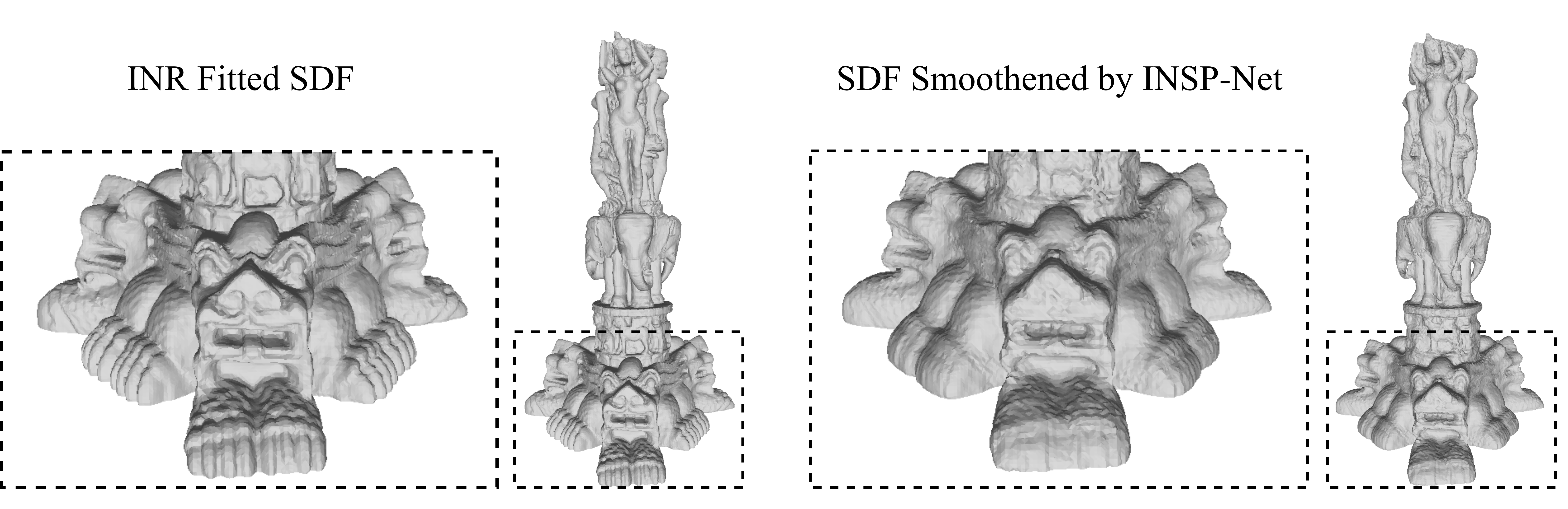}
\caption{Left: unprocessed geometry decoded from an unprocessed INR. Right: smoothened geometry decoded from the output INR of our INSP-Net. Best view in a zoomable electronic copy.}
\label{fig:geometry}
\vspace{-1em}
\end{figure}
\subsection{Classification on Implicit Neural Representations}

We demonstrate that the proposed \textbf{INSP} framework is not only capable to express low-level image processing filters, but also supports high-level tasks such as image classification. To achieve this goal, we construct a 2-layer INSP-ConvNet. The INSP-ConvNet consists of 2 INSP-Net layers. Each of them decomposes  the INR via the differential operator and combines them with learnable $\Pi$.
We build another 2-layer depthwise ConvNets running on pixels as the baseline for a fair comparison, since it has comparable expressiveness to our INSP-ConvNet in theory.
We also build a PCA + SVM method and an MLP classifier that directly classify INRs according to (vectorized) weight matrices.

We evaluate the proposed INSP-ConvNet on MNIST ($28\times 28$ resolution) and CIFAR-10 ($32 \times 32$ resolution) datasets, respectively. For each dataset, we will firstly fit each image into an implicit representation using SIREN~\cite{sitzmann2020implicit}. Both experiments take 1000 epochs to optimize with AdamW optimizer~\cite{loshchilov2017decoupled} and a learning rate of $10^{-4}$. Results are shown in Tab.~\ref{tab:cls}.

\begin{table}[!h]
\centering
\begin{tabular}{c|cccc}
\hline
Accuracy      & Depthwise CNN & PCA + SVM & MLP classifier & INSP-ConvNet\\ \hline
MNIST &         87.6\%   & 11.3\% & 9.8\%    &  88.1\%  \\ 
CIFAR-10 &        59.5\%    & 9.4\% & 10.1\%   &    62.5\% \\ \hline
\end{tabular}
\vspace{1em}
\caption{Quantitative Results of Image Classification. All methods except ``Depthwise CNN'' operate on the parameter of INR directly, while ``Depthwise CNN'' operates on images decoded from INR.}
\label{tab:cls}
\vspace{-1.5em}
\end{table}

We categorize Depthwise CNN as explicit method, which requires to extract the image grids from INRs before classification.
PCA + SVM and MLP classifier working on the network parameter space can be regarded as two straightforward implicit baselines.
We find that traditional classifiers can hardly classify INR on weight space due to its high-dimensional unstructured data distribution.
Our method, however, can effectively leverage the information implicitly encoded in INRs by exploiting their derivatives.
As a consequence, INSP-ConvNet can achieve classification accuracy on-par with CNN-based explicit method, which validates the learning representation power of INSP-ConvNet.

\section{Conclusion}

\paragraph{Contribution.} We present INSP-Net framework, an implicit neural signal processing network that is capable of directly modifying an INR without explicit decoding. By incorporating differential operators on INR, we can instantiate the INR signal operator as a composition of computational graphs approximating any continuous convolution filter. Furthermore, we make the first effort to build a convolutional neural network that implicitly runs on INRs.
While all other methods run on discrete grids, our experiment demonstrates our INSP-Net can achieve competitive results with entirely implicit operations.

\paragraph{Limitations.}
(Theory) Our theory only guarantees the expressiveness of convolution by allowing infinite sequence approximation. Construction of more expressive operators and more effective parameterization for convolution remain widely open questions.
(Practice) INSP-Net requires the computation of high-order derivatives which is neither memory efficient nor numerically stable. This hinders the scalability of our INSP-ConvNet that requires recursive computation of derivatives. Addressing how to reconstruct INRs in a scalable manner is beyond the scope of this paper. All INRs used in our experiments are fitted by per-scene optimization.

\section*{Acknowledgement}

Z. Wang is in part supported by an NSF Scale-MoDL grant (award number: 2133861).

\bibliographystyle{unsrt}
\bibliography{refs}

\clearpage

\appendix

\section{Proof of Theorem \ref{thm:invariance}}
\label{sec:prf_inv}

To begin with, we give the formal definitions of translation and rotation group, along with the notion of shift invariance and rotation invariance.
\begin{definition}
(Translation Group \& Shift Invariance)
Translation group $\mathbb{T}(m)$ is a transformation group isomorphic to $m$-dimension Euclidean space, where each group element $T_{\Mat{v}}$ transforms a vector $\Mat{x} \in \real^m$ by $T_{\Mat{v}}(\Mat{x}) = \Mat{x} + \Mat{v}$. An operator $\Op{A}$ is said to be shift-invariant if $\Op{A}(\Phi \circ T_{\Mat{v}})(\Mat{x}) = \Op{A}\Phi(T_{\Mat{v}}(\Mat{x})) = \Op{A}\Phi(\Mat{x}+\Mat{v})$.
\end{definition}

\begin{definition}
(Rotation Group \& Rotation Invariance)
Rotation group $\mathbb{SO}(m)$ is a transformation group also known as the special orthogonal group, where each group element $\Mat{R} \in \real^{m \times m}$ satisfying $\Mat{R}^\top\Mat{R} = \Mat{I}$ transforms a vector $\Mat{x} \in \real^m$ by $\Mat{R}(\Mat{x})$. An operator $\Op{A}$ is said to be rotation-invariant if $\Op{A}(\Phi \circ \Mat{R})(\Mat{x}) = \Op{A}\Phi(\Mat{R}\Mat{x})$.
\end{definition}

\begin{proof}
(Shift Invariance) To show the shift invariance of our model Eq. \ref{eqn:insp}, it is equivalent to show any differential operators are shift-invariant. For the first-order derivatives (gradients), we consider arbitrary shift operator $T_{\Mat{v}} \in \mathbb{T}$, by chain rule we will have:
\begin{align} \label{eqn:grad_shift_inv}
\nabla [\Phi \circ T_{\Mat{v}}](\Mat{x}) = \left[\frac{d (\Mat{x} + \Mat{v})}{d\Mat{x}}\right]^\top \nabla \Phi(\Mat{x} + \Mat{v}) = \nabla \Phi(\Mat{x} + \Mat{v}),
\end{align}
where the Jacobian matrix of $T_{\Mat{v}}(\Mat{x})$ is an identity matrix. Eq. \ref{eqn:grad_shift_inv} implies that gradient operator is shift-invariant.
By induction, any high-order differential operators must also be shift-invariant:
\begin{align} \label{eqn:high_order_shift_inv}
\nabla^k [\Phi \circ T_{\Mat{v}}](\Mat{x}) = \nabla^k \Phi(\Mat{x} + \Mat{v}),
\end{align}
Therefore, we can conclude $\Pi(\Phi, \nabla \Phi, \nabla^2 \Phi, \cdots)$ is shift-invariant for any $\Pi$ combining derivatives in any form.

(Rotation Invariance)
By Lemma \ref{lem:derivative_linear_composition}, given arbitrary function $\Phi: \real^m \rightarrow \real$, and for every rotation matrix $\Mat{R} \in \mathbb{SO}(m)$, we can compute the $k$-th derivatives as:
\begin{align}
\V\left(\nabla^k[\Phi \circ \Mat{R}](\Mat{x})\right)= \Mat{R}^{\top \otimes k} \V\left(\nabla \Phi(\Mat{R}\Mat{x})\right).
\end{align}
Then adopting properties of Kronecker product \cite{schacke2004kronecker}, the norm of $\nabla^k[\Phi \circ \Mat{R}](\Mat{x})$ can be written as:
\begin{align}
\lVert \nabla^k[\Phi \circ \Mat{R}](\Mat{x}) \rVert_F^2 &= \Tr\left[ \V\left(\nabla \Phi(\Mat{R}\Mat{x})\right)^\top \Mat{R}^{\otimes k} \Mat{R}^{\top \otimes k} \V\left(\nabla \Phi(\Mat{R}\Mat{x})\right) \right] \\
\label{eqn:kron_transpose} &= \V\left(\nabla \Phi(\Mat{R}\Mat{x})\right)^\top 
\left(\Mat{R}^{\otimes k-1} \otimes \Mat{R}\right) \left( \Mat{R}^{\top \otimes k-1} \otimes \Mat{R}^{\top} \right) \V\left(\nabla \Phi(\Mat{R}\Mat{x})\right) \\
\label{eqn:kron_commute_mm} &= \V\left(\nabla \Phi(\Mat{R}\Mat{x})\right)^\top 
\left(\left(\Mat{R}^{\otimes k-1} \Mat{R}^{\top \otimes k-1} \right) \otimes \Mat{I} \right) \V\left(\nabla \Phi(\Mat{R}\Mat{x})\right) \\
\label{eqn:kron_induction} &= \cdots = \V\left(\nabla \Phi(\Mat{R}\Mat{x})\right)^\top 
\Mat{I}^{\otimes k} \V\left(\nabla \Phi(\Mat{R}\Mat{x})\right) = \left\lVert \nabla \Phi(\Mat{R}\Mat{x}) \right\rVert_F^2
\end{align}
where Eq. \ref{eqn:kron_transpose} is due to the fact $(\Mat{A} \otimes \Mat{B})^\top = \Mat{A}^\top \otimes \Mat{B}^\top$, Eq. \ref{eqn:kron_commute_mm} is because of $(\Mat{A} \otimes \Mat{B})(\Mat{C} \otimes \Mat{D}) = \Mat{A}\Mat{C} \otimes \Mat{B}\Mat{D}$, and Eq. \ref{eqn:kron_induction} is yielded by applying the orthogonality of $\Mat{R}$ and repeating step Eq. \ref{eqn:kron_commute_mm} to Eq. \ref{eqn:kron_induction}.
Therefore, for every integer $k > 0$, operator $\lVert \nabla^{k}\Phi(\Mat{x}) \rVert_2^2$ is rotation-invariant.
Hence, $\Pi = f\left(\left\lVert \begin{bmatrix} \Phi(\Mat{x}) & \nabla\Phi(\Mat{x}) & \nabla^2\Phi(\Mat{x}) & \cdots \end{bmatrix}\right\rVert_F\right) = f\left(\sqrt{\sum_{k=0} \lVert \nabla^k\Phi(\Mat{x}) \rVert_2^2}\right)$ is also rotation-invariant.
\end{proof}

Below we supplement the Lemma \ref{lem:derivative_linear_composition} used to prove Theorem \ref{thm:invariance}.
\begin{lemma} \label{lem:derivative_linear_composition}
Suppose given function $f: \real^m \rightarrow \real$ and arbitrary linear transformation $\Mat{A} \in \real^{m \times m}$, then $\V\left(\nabla^k[f \circ \Mat{A}](\Mat{x})\right) = \Mat{A}^{\top \otimes k} \V\left(\nabla^k f(\Mat{A} \Mat{x})\right)$ for $\forall k \ge 0$.
\end{lemma}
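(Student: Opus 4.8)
The plan is to prove the identity by induction on the order $k$ of differentiation, with the multivariate chain rule supplying the inductive step and the mixed-product property of the Kronecker product, $(\Mat{A}\otimes\Mat{B})(\Mat{C}\otimes\Mat{D}) = (\Mat{A}\Mat{C})\otimes(\Mat{B}\Mat{D})$, bundling the accumulated factors of $\Mat{A}^\top$ into a single Kronecker power. Throughout I assume $f$ is $k$ times continuously differentiable so that the iterated partials are well defined and order-independent, and I fix once and for all the (lexicographic) ordering of iterated indices $(i_1,\dots,i_k)$ used to define $\V(\nabla^k f) \in \real^{m^k}$, chosen so that it matches the index ordering induced by $\Mat{A}^{\otimes k}$.

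For the base case $k=0$ both sides equal $f(\Mat{A}\Mat{x})$, since $\Mat{A}^{\top\otimes 0} = 1$. For the inductive step, suppose $\V\bigl(\nabla^k[f\circ\Mat{A}](\Mat{x})\bigr) = \Mat{A}^{\top\otimes k}\,\V\bigl(\nabla^k f(\Mat{A}\Mat{x})\bigr)$, which componentwise reads $\bigl(\nabla^k[f\circ\Mat{A}]\bigr)_{i_1\cdots i_k}(\Mat{x}) = \sum_{j_1,\dots,j_k} A_{j_1 i_1}\cdots A_{j_k i_k}\,(\nabla^k f)_{j_1\cdots j_k}(\Mat{A}\Mat{x})$. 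Applying $\partial_{i_{k+1}}$ to both sides and using the chain rule on the right (the factors $A_{j_1 i_1}\cdots A_{j_k i_k}$ are constants), each term $\partial_{i_{k+1}}\bigl[(\nabla^k f)_{j_1\cdots j_k}(\Mat{A}\Mat{x})\bigr]$ produces $\sum_{j_{k+1}} A_{j_{k+1} i_{k+1}}(\nabla^{k+1} f)_{j_1\cdots j_k j_{k+1}}(\Mat{A}\Mat{x})$; collecting terms gives $\bigl(\nabla^{k+1}[f\circ\Mat{A}]\bigr)_{i_1\cdots i_{k+1}}(\Mat{x}) = \sum_{j_1,\dots,j_{k+1}} A_{j_1 i_1}\cdots A_{j_{k+1} i_{k+1}}\,(\nabla^{k+1} f)_{j_1\cdots j_{k+1}}(\Mat{A}\Mat{x})$, which is exactly the $(i_1,\dots,i_{k+1})$-entry of $\Mat{A}^{\top\otimes(k+1)}\,\V\bigl(\nabla^{k+1} f(\Mat{A}\Mat{x})\bigr)$ in the chosen ordering. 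Equivalently, at the matrix level one differentiates the vector-valued identity once more: the extra Jacobian factor is $\Mat{A}$ acting on the new index, $\Mat{A}^{\top\otimes k}$ is carried along from the hypothesis, and $(\Mat{A}^{\top\otimes k})\otimes\Mat{A}^\top = \Mat{A}^{\top\otimes(k+1)}$ by the mixed-product rule. This closes the induction.

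The only delicate point — and hence the main thing to be careful about, rather than a genuine difficulty — is the bookkeeping of index orderings: one must verify that appending a new differentiation index corresponds, under the fixed definition of $\V$, to Kronecker-multiplying on the left by a single $\Mat{A}^\top$, so that the powers telescope into $\Mat{A}^{\top\otimes(k+1)}$. Once the convention is pinned down this is immediate from the definition of the Kronecker product, and no analytic input beyond differentiability and the chain rule is needed. Specializing $\Mat{A}=\Mat{R}$ with $\Mat{R}^\top\Mat{R}=\Mat{I}$ then feeds this lemma directly into the rotation-invariance computation in the proof of Theorem~\ref{thm:invariance}.
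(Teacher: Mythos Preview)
Your proposal is correct and follows essentially the same strategy as the paper: induction on $k$ with the chain rule supplying the extra factor of $\Mat{A}^\top$ at each step, then folding the accumulated factors into $\Mat{A}^{\top\otimes k}$. The only cosmetic difference is that you carry out the inductive step in coordinates, whereas the paper does it at the matrix level via the identity $\V(\Mat{A}\Mat{B}\Mat{C}) = (\Mat{C}^\top\otimes\Mat{A})\V(\Mat{B})$; these are equivalent presentations of the same computation.
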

\begin{proof}
$\V\left(\nabla^k[f \circ \Mat{A}](\Mat{x})\right) = \Mat{A}^{\top \otimes k} \V\left(\nabla^k f(\Mat{A} \Mat{x})\right)$ trivially holds for $k = 0, 1$.
Then we prove Lemma \ref{lem:derivative_linear_composition} by induction.
Suppose the $(j-1)$-th case satisfies the equality: $\V\left(\nabla^{j-1}[f \circ \Mat{A}](\Mat{x})\right) = \Mat{A}^{\top \otimes j-1} \V\left(\nabla^{j-1} f(\Mat{A} \Mat{x})\right)$, then consider the $j$-th case:
\begin{align}
\nabla^{j}[f \circ \Mat{A}](\Mat{x}) &= \nabla \V\left(\nabla^{j-1}[f \circ \Mat{A}](\Mat{x})\right) = \nabla \Mat{A}^{\top \otimes j-1} \V\left(\nabla^{j-1} f(\Mat{A} \Mat{x})\right) = \Mat{A}^{\top} \nabla^{j} f(\Mat{A} \Mat{x}) \Mat{A}^{\otimes j-1}, \nonumber
\end{align}
where the first equality is done by reshaping the $m^j$ tensor to be an $m \times m^{j-1}$ Jacobian matrix, the second equality is due to the induction hypothesis, and the third equality is an adoption of chain rule.
Due to the fact $\V(\Mat{A} \Mat{B} \Mat{C}) = (\Mat{C}^\top \otimes \Mat{A})\V(\Mat{B})$, we have $\V\left(\nabla^{j}[f \circ \Mat{A}](\Mat{x})\right) = \Mat{A}^{\top \otimes j} \V\left(\nabla^{j} f(\Mat{A} \Mat{x})\right)$.
Then by induction, we can conclude the proof.
\end{proof}

\section{Proof of Theorem \ref{thm:ploy_approx_conv}}
\label{sec:prf_uni_approx}

For a sake of clarity, we first introduce few notations in algebra and real analysis.
We use $C^k(\Set{X}, \real)$ to denote the $k$-th differentiable functions defined over domain $\Set{X}$,
$W^{k, p}(\Set{X}, \real)$ to denote the $k$-th differentiable and $L^p$ integrable Sobolev space over domain $\Set{X}$.
We use notation $\Op{A}[f]$ to denote the image of function (say $f$) under the transformation of an operator (say $\Op{A}$).
We use symbol $\circ$ to denote function composition (e.g., $f \circ g (\Mat{x}) = f(g(\Mat{x}))$).
We use dot-product $\cdot$ between two functions (say $f$ and $g$) to represent element-wise multiplication of function values (say $f \cdot g (\Mat{x}) = f(\Mat{x}) \cdot g(\Mat{x})$). 
Besides, we list the following definitions and assumptions:

\begin{definition} \label{def:polynomial}
(Polynomial) We use $\real[x_1, \cdots, x_m]$ to represent the multivariate polynomials in terms of $x_1, \cdots, x_m$ with real coefficients.
We write a (monic) multivariate monomial $m(x_1, \cdots, x_m) = x_1^{n_1} x_2^{n_2} \cdots x_m^{n_m}$ as $m(\Mat{x}) = \Mat{x}^{\Mat{n}}$ where $\Mat{n} = \begin{bmatrix}n_1 & \cdots & n_m\end{bmatrix} \in \nat^m$.
Then we denote a polynomial as $p(\Mat{x}) = a_1 \Mat{x}^{\Mat{n}_1} + \cdots + a_d \Mat{x}^{\Mat{n}_d} \in \real[x_1, \cdots, x_m]$ where $\Mat{x}^{\Mat{n}_i}$ denotes the $i$-th multivariate monomial and $a_i \in \real$ is the corresponding coefficient.
\end{definition}

\begin{definition} \label{def:differential}
(Differential Operator) Suppose a compact set $\Set{X} \subseteq \real^m$. we denote $\Op{D}^{\Mat{n}}: C^\infty(\Set{X}, \real) \rightarrow C^\infty(\Set{X}, \real)$ as the high-order differential operator associated with indices $\Mat{n} \in \nat^{m}$:
\begin{align}
    \Op{D}^{\Mat{n}}[f] = \frac{\partial^{\lVert \Mat{n} \rVert_1}}{\partial x_1^{n_1} \cdots \partial x_m^{n_m}} f.
\end{align}
\end{definition}

\begin{definition} \label{def:polynomial_in_diff}
We define polynomial in gradient operator as: $p(\Op{\nabla}) = p\left(\frac{\partial}{\partial x_1}, \cdots, \frac{\partial}{\partial x_m}\right) = a_1 \Op{D}^{\Mat{n}_1} + \cdots + a_d \Op{D}^{\Mat{n}_d} \in \real[x_1, \cdots, x_m]$ where $\Op{D}^{\Mat{n}_i}$ denotes the $\Mat{n}_i$-th order partial derivative (Definition \ref{def:differential}) and $a_i \in \real$ is the corresponding coefficient.
\end{definition}

\begin{remark}
The mapping between $p(\Mat{x})$ and $p(\Op{\nabla})$ is a ring homomorphism from polynomial ring $\real[x_1, \cdots, x_m]$ to the ring of endomorphism defined over $C^\infty(\Set{X}, \real)$.
\end{remark}

\begin{definition} \label{def:FT}
(Fourier Transform) Given real-valued function $f: \real^m \rightarrow \complex$ that satisfies Dirichlet condition\footnote{Dirichlet condition guarantees Fourier transform exists: (1) The function is $L_1$ integrable over the entire domain. (2) The function has at most a countably infinite number of infinte minima or maxma or discontinuities over the entire domain.}, then Fourier transform $\Op{F}$ is defined as:
\begin{align}
    \Op{F}[f](\Mat{w}) = \int_{\real^m} f(\Mat{x}) \exp(-2\pi i\Mat{w}^\top \Mat{x}) d\Mat{x}.
\end{align}
Inverse Fourier transform $\Op{F}^{-1}$ exists and has the form of:
\begin{align}
    f(\Mat{x}) = \int_{\real^m} \Op{F}[f](\Mat{w}) \exp(2\pi i\Mat{w}^\top \Mat{x}) d\Mat{w}.
\end{align}
\end{definition}

\begin{definition} \label{def:conv}
(Convolution) Given two real-valued functions $f: \real^m \rightarrow \real$ and $g: \real^m \rightarrow \real$, convolution between $f$ and $g$ is defined as:
\begin{align}
(f \conv g)(\Mat{x}) = \int_{\real^m} f(\Mat{x} - \Mat{\xi}) g(\Mat{\xi}) d\Mat{\xi}.
\end{align}
Then we denote $f \conv g = \Op{T}_g[f]$, where $\Op{T}_g$ represents a convolutional operator associated with the function $g$.
\end{definition}

We make the following mild assumptions on the signals and convolutional operators, which are widely satisfied by the common signals and systems.

\begin{assumption} \label{asm:f_signal}
(Band-limited Signal Space) Define the signal space $\Set{S}$ as a Sobolev space $W^{\infty, 1}(\real^m, \real)$ of real-valued functions such that for $\forall f \in \Set{S}$:
\begin{enumerate}[label=(\Roman*)]
\item \label{asm:f_continuous} $f \in C^{\infty}(\real^m, \real)$ is continuous and smooth over $\real^m$.
\item \label{asm:f_dirichlet} $f$ satisfies the Dirichlet condition.
\item \label{asm:f_bandwidth} $f$ has a limited width of spectrum: there exists a compact subset $\Set{W} \subset \real^m$ such that $\lvert \Op{F}[f](\Mat{w}) \rvert = 0$ if $\Mat{w} \notin \Set{W}$, and $\int_{\Set{W}} \lvert \Op{F}[f](\Mat{w}) \rvert d\Mat{w} < \infty$.
\end{enumerate}
\end{assumption}

\begin{assumption} \label{asm:g_conv}
(Convolution Space) Define a convolutional operator space $\Set{T}$ such that $\forall \Op{T}_g \in \Set{T}$:
\begin{enumerate}[label=(\Roman*)]
\setcounter{enumi}{3}
\item \label{asm:g_real} $g: \real^m \rightarrow \real$ is real-valued function.
\item \label{asm:g_continuous} $\Op{F}[g] \in C(\real^m, \real)$ has a continuous spectrum.
\end{enumerate}
\end{assumption}

Before we prove Theorem \ref{thm:ploy_approx_conv}, we enumerate the following results as our key mathematical tools:

First of all, we note the following well-known result without a proof.
\begin{lemma} \label{lem:conv_thm}
(Convolution Theorem) For every $\Op{T}_g \in \Set{T}$, it always holds that $\Op{F} \circ \Op{T}_g[f](\Mat{w}) = \Op{F}[f](\Mat{w}) \cdot \Op{F}[g](\Mat{w})$.
\end{lemma}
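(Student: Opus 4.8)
The plan is to prove this classical convolution-theorem identity directly from the definitions of the Fourier transform (Definition \ref{def:FT}) and of convolution (Definition \ref{def:conv}), using Fubini's theorem to interchange the two integrations and a unimodular translation change of variables to split the exponential kernel. The only step that is not bookkeeping is checking that Assumptions \ref{asm:f_signal} and \ref{asm:g_conv} are strong enough to license the interchange.

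Concretely I would proceed in three moves. First, unfold the definitions:
\begin{align*}
\Op{F} \circ \Op{T}_g[f](\Mat{w}) = \int_{\real^m} \left( \int_{\real^m} f(\Mat{x} - \Mat{\xi}) \, g(\Mat{\xi}) \, d\Mat{\xi} \right) \exp(-2\pi i \Mat{w}^\top \Mat{x}) \, d\Mat{x}.
\end{align*}
Second, invoke Fubini's theorem to pull the $g(\Mat{\xi})$-integral outside, then substitute $\Mat{y} = \Mat{x} - \Mat{\xi}$ in the inner integral and use $\exp(-2\pi i \Mat{w}^\top(\Mat{y}+\Mat{\xi})) = \exp(-2\pi i \Mat{w}^\top \Mat{y}) \exp(-2\pi i \Mat{w}^\top \Mat{\xi})$ to factor the kernel. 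Third, read off the two now-separated integrals:
\begin{align*}
\Op{F} \circ \Op{T}_g[f](\Mat{w}) = \left( \int_{\real^m} f(\Mat{y}) \exp(-2\pi i \Mat{w}^\top \Mat{y}) \, d\Mat{y} \right) \left( \int_{\real^m} g(\Mat{\xi}) \exp(-2\pi i \Mat{w}^\top \Mat{\xi}) \, d\Mat{\xi} \right) = \Op{F}[f](\Mat{w}) \cdot \Op{F}[g](\Mat{w}),
\end{align*}
which is the asserted identity.

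The main (and essentially only) obstacle is justifying the use of Fubini's theorem in the second move. By Tonelli's theorem it suffices that $(\Mat{x}, \Mat{\xi}) \mapsto f(\Mat{x} - \Mat{\xi}) \, g(\Mat{\xi})$ is absolutely integrable on $\real^m \times \real^m$, and the iterated integral of the absolute value factors as $\lVert f \rVert_{L^1} \, \lVert g \rVert_{L^1}$. Assumption \ref{asm:f_signal} places $f$ in the Sobolev space $W^{\infty,1}(\real^m, \real)$, so $f \in L^1$; and since $\Op{T}_g$ is taken to act on the signal space $\Set{S}$ (so that $f \conv g$ is a well-defined $L^1$ function, e.g.\ via Young's inequality $\lVert f \conv g \rVert_{L^1} \le \lVert f \rVert_{L^1} \, \lVert g \rVert_{L^1}$), we may likewise take $g \in L^1$; hence the product is finite and Fubini applies. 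The change of variables and the factoring of the exponential are routine. As a closing remark, the Dirichlet condition imposed on $f$ in Assumption \ref{asm:f_signal} together with the assumed continuity of $\Op{F}[g]$ in Assumption \ref{asm:g_conv} ensures that all three Fourier transforms above exist as genuine continuous functions, so that the identity holds pointwise in $\Mat{w}$ rather than only almost everywhere.
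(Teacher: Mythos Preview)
The paper does not prove this lemma at all; it explicitly introduces it with ``we note the following well-known result without a proof.'' Your argument is the standard textbook proof of the convolution theorem and is correct in outline and in spirit.

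One small quibble: your justification for Fubini leans on $g \in L^1$, but Assumption~\ref{asm:g_conv} only asserts that $g$ is real-valued with continuous spectrum $\Op{F}[g]$, not that $g$ is integrable. Your parenthetical appeal to Young's inequality goes the wrong way (Young needs $g \in L^1$ as a hypothesis, not a conclusion). In the paper's setting this is harmless because the lemma is invoked as a classical fact rather than derived from the stated assumptions, and in any case the band-limited hypothesis on $f$ (Assumption~\ref{asm:f_signal}\ref{asm:f_bandwidth}) means $\hat f$ is compactly supported and integrable, so one can alternatively argue on the frequency side or simply assume $g \in L^1$ as part of the ambient signal-processing framework. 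If you want the argument to be fully self-contained under exactly Assumptions~\ref{asm:f_signal} and~\ref{asm:g_conv}, you would need to add an integrability hypothesis on $g$ or route the proof through the Fourier inversion of $f$ instead.
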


Next, we present Stone-Weierstrass Theorem as our Lemma \ref{lem:stone_weierstrass_thm} as below.
\begin{lemma} \label{lem:stone_weierstrass_thm}
(Stone-Weierstrass Theorem) Suppose $\Set{X}$ is a compact metric space. If $\Set{A} \subset C(\Set{X}, \real)$ is a unital sub-algebra which separates points in $\Set{X}$. Then $\Set{A}$ is dense in $C(\Set{X}, \real)$.
\end{lemma}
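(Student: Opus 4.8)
The statement is the classical Stone--Weierstrass theorem, so one legitimate option is to invoke it by citation; to keep the appendix self-contained I would instead reproduce the standard lattice-theoretic argument. The plan is to replace $\Set{A}$ by its sup-norm closure $\overline{\Set{A}}$ --- which is again a unital subalgebra separating points, since addition, scalar multiplication and multiplication are all continuous on $C(\Set{X},\real)$ --- and to show $\overline{\Set{A}} = C(\Set{X},\real)$. The proof then splits into three pieces: (i) $\overline{\Set{A}}$ is a sublattice, i.e.\ closed under pointwise $\max$ and $\min$; (ii) $\overline{\Set{A}}$ can interpolate arbitrary prescribed values at any two distinct points of $\Set{X}$; and (iii) a two-stage compactness argument patches (i) and (ii) together into a uniform approximation of an arbitrary $f \in C(\Set{X},\real)$.

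For (i), since $\max(f,g)=\tfrac12(f+g+|f-g|)$ and $\min(f,g)=\tfrac12(f+g-|f-g|)$, it is enough to show $f\in\overline{\Set{A}}\Rightarrow|f|\in\overline{\Set{A}}$. The one genuinely analytic input is that $t\mapsto|t|$ is a uniform limit of real polynomials on any bounded interval; I would obtain this from the recursion $q_0=0$, $q_{n+1}(s)=q_n(s)+\tfrac12\bigl(s-q_n(s)^2\bigr)$, proving by induction that $0\le q_n(s)\uparrow\sqrt s$ for $s\in[0,1]$ and invoking Dini's theorem for uniform convergence, then setting $p_n(t)=q_n(t^2)\to|t|$ uniformly on $[-1,1]$. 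Applying $p_n$ to $f/\lVert f\rVert_\infty$ (the image stays in the algebra $\overline{\Set{A}}$, which contains the constants) and letting $n\to\infty$ gives $|f|\in\overline{\Set{A}}$ because $\overline{\Set{A}}$ is closed.

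For (ii), given distinct $x,y$ and targets $\alpha,\beta\in\real$, separation supplies $g\in\Set{A}$ with $g(x)\ne g(y)$, and then $h=\alpha+(\beta-\alpha)\tfrac{g-g(x)}{g(y)-g(x)}$ works, using unitality for the constant terms. For (iii): given $f$ and $\varepsilon>0$, for each pair $x,y$ pick $h_{x,y}\in\overline{\Set{A}}$ agreeing with $f$ at $x$ and at $y$; fixing $x$, the open sets $\{z:h_{x,y}(z)<f(z)+\varepsilon\}$ cover $\Set{X}$, so a finite subcover and a minimum (using (i)) give $h_x\in\overline{\Set{A}}$ with $h_x<f+\varepsilon$ everywhere and $h_x(x)=f(x)$; then the open sets $\{z:h_x(z)>f(z)-\varepsilon\}$ cover $\Set{X}$, and a finite subcover and a maximum give $h\in\overline{\Set{A}}$ with $f-\varepsilon<h<f+\varepsilon$, i.e.\ $\lVert h-f\rVert_\infty<\varepsilon$. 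Hence $f\in\overline{\Set{A}}$, and closedness gives $\overline{\Set{A}}=C(\Set{X},\real)$.

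The main obstacle --- essentially the only non-formal step --- is establishing the uniform polynomial approximation of $\lvert t\rvert$ used in (i); once that is granted, everything else is routine manipulation of the algebra axioms together with two applications of compactness of $\Set{X}$. (If one is willing to cite the one-dimensional Weierstrass approximation theorem, step (i) follows immediately, at the cost of self-containedness.)
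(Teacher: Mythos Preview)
Your proposal gives the standard lattice-theoretic proof of Stone--Weierstrass, and the argument is correct as written (the Dini-based polynomial approximation of $|t|$, the two-point interpolation, and the double compactness patch are all sound). However, the paper does not actually prove this lemma at all: it simply records the Stone--Weierstrass theorem as a named result and moves on to use it (specifically, to derive the polynomial density corollary needed for the main approximation theorem). So your write-up is not a different route to the paper's proof --- it is a full proof where the paper offers none. If your goal is to match the paper, a one-line citation suffices; if you want the appendix to be self-contained, your outline is a perfectly good choice.
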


A straightforward corollary of Lemma \ref{lem:stone_weierstrass_thm} is the following Lemma \ref{lem:sw_thm_poly}.
\begin{lemma} \label{lem:sw_thm_poly}
Let $\Set{X} \subset \real^m$ be a compact subset of $\real^m$. For every $\epsilon > 0$, there exists a polynomial $p(\Mat{x}) \in \real[x_1, \cdots, x_m]$ such that $\sup_{\Mat{x} \in \Set{X}} \lvert f(\Mat{x}) - p(\Mat{x}) \rvert < \epsilon$.
\end{lemma}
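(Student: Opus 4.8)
The plan is to invoke the Stone--Weierstrass theorem (Lemma~\ref{lem:stone_weierstrass_thm}) with $\Set{A}$ taken to be the set of polynomial functions $\real[x_1, \cdots, x_m]$ restricted to the domain $\Set{X}$, and $f$ understood to be a continuous function on $\Set{X}$, i.e. $f \in C(\Set{X}, \real)$. First I would note that $\Set{X} \subset \real^m$, being compact, is in particular a compact metric space under the Euclidean metric, so the hypotheses of Lemma~\ref{lem:stone_weierstrass_thm} are exactly the ones available to us, and $C(\Set{X}, \real)$ equipped with the uniform norm is precisely the space in which density is asserted.

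Second, I would verify that the restriction of $\real[x_1, \cdots, x_m]$ to $\Set{X}$ is a unital subalgebra of $C(\Set{X}, \real)$. This is immediate from Definition~\ref{def:polynomial}: every polynomial is continuous on $\real^m$ hence on $\Set{X}$; the sum and any real scalar multiple of two polynomials is a polynomial; the product of two polynomials is a polynomial; and the constant polynomial $1$ serves as the multiplicative unit.

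Third, I would check the point-separation property: given distinct $\Mat{x} \neq \Mat{y}$ in $\Set{X}$, there is some coordinate index $i$ with $x_i \neq y_i$, and the degree-one monomial $p(\Mat{x}) = x_i$ then satisfies $p(\Mat{x}) \neq p(\Mat{y})$. Having established that $\real[x_1, \cdots, x_m]|_{\Set{X}}$ is a unital subalgebra of $C(\Set{X}, \real)$ that separates points, Lemma~\ref{lem:stone_weierstrass_thm} gives that it is dense in $C(\Set{X}, \real)$; unpacking the definition of density in the uniform norm yields, for any $\epsilon > 0$, a polynomial $p \in \real[x_1, \cdots, x_m]$ with $\sup_{\Mat{x} \in \Set{X}} \lvert f(\Mat{x}) - p(\Mat{x}) \rvert < \epsilon$, which is the claim.

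There is essentially no hard step here: the entire content is carried by the cited Stone--Weierstrass theorem, and the only items demanding (brief) care are the subalgebra axioms and point separation, together with the observation that compactness of $\Set{X}$ is what licenses the application. The one subtlety worth flagging is that $f$ must be assumed continuous on $\Set{X}$ for the statement even to be meaningful, which is implicit in reading it as a statement about $f \in C(\Set{X}, \real)$.
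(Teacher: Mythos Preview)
Your proposal is correct and follows exactly the approach of the paper: apply Stone--Weierstrass (Lemma~\ref{lem:stone_weierstrass_thm}) after checking that $\real[x_1,\dots,x_m]$ restricted to $\Set{X}$ is a unital sub-algebra of $C(\Set{X},\real)$ that separates points, with $C(\Set{X},\real)$ equipped with the sup metric. Your write-up is simply a more detailed spelling-out of the paper's one-line proof, and your remark that $f$ must implicitly be assumed continuous is a valid clarification of the statement.
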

\begin{proof}
Proved by checking polynomials $\real[x_1, \cdots, x_m]$ form a unital sub-algebra separating points in $\Set{X}$, and equipping $C(\Set{X}, \real)$ with the distance metric $d(f, h) = \sup_{\Mat{x} \in \Set{X}} |f(\Mat{x}) - g(\Mat{x})|$.
\end{proof}

We also provide the following Lemma \ref{lem:real_ft} to reveal the  spectrum-domain symmetry for real-valued signals.

\begin{lemma} \label{lem:real_ft}
Suppose $f$ is a continuous real-valued function satisfying Dirichlet condition. Then $\Op{F}[f](\Mat{w}) = \Op{F}[f](-\Mat{w})^*$, i.e., the spectrum of real-valued function is conjugate symmetric.
\end{lemma}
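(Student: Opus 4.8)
The plan is to simply unwind the definition of the Fourier transform (Definition~\ref{def:FT}) and use the fact that complex conjugation commutes with the integral whenever the integrand is absolutely integrable. First I would plug $-\Mat{w}$ into the defining formula to get
\begin{align}
\Op{F}[f](-\Mat{w}) = \int_{\real^m} f(\Mat{x}) \exp(2\pi i \Mat{w}^\top \Mat{x}) \, d\Mat{x}.
\end{align}

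Next I would take complex conjugates of both sides. Because $f$ satisfies the Dirichlet condition it is $L^1$ over $\real^m$, so the integrand is absolutely integrable and conjugation (a continuous $\real$-linear involution of $\complex$) may be carried inside the integral; combined with the hypothesis that $f$ is real-valued, so $f(\Mat{x})^* = f(\Mat{x})$ pointwise, this yields
\begin{align}
\Op{F}[f](-\Mat{w})^* = \int_{\real^m} f(\Mat{x})^* \exp(-2\pi i \Mat{w}^\top \Mat{x}) \, d\Mat{x} = \int_{\real^m} f(\Mat{x}) \exp(-2\pi i \Mat{w}^\top \Mat{x}) \, d\Mat{x} = \Op{F}[f](\Mat{w}),
\end{align}
which is exactly the asserted conjugate symmetry $\Op{F}[f](\Mat{w}) = \Op{F}[f](-\Mat{w})^*$.

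There is essentially no hard step here; the only point that deserves a word of justification is the interchange of conjugation and integration, which is immediate from absolute integrability. If one wished to be fully explicit, one could instead split $f(\Mat{x})\exp(-2\pi i \Mat{w}^\top\Mat{x})$ into its real and imaginary parts and apply $\real$-linearity of the Lebesgue integral to each, but this is routine. I do not expect any genuine obstacle in carrying out this argument.
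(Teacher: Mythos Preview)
Your proposal is correct and follows essentially the same route as the paper: substitute $-\Mat{w}$ into the defining integral, use that $f$ is real so $f(\Mat{x})^*=f(\Mat{x})$, and pull the complex conjugate through the integral to obtain $\Op{F}[f](-\Mat{w})=\Op{F}[f](\Mat{w})^*$. The only difference is that you are slightly more explicit about why conjugation may be interchanged with the integral.
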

\begin{proof}
By the definition of Fourier transform (Definition \ref{def:FT}):
\begin{align}
\label{eqn:f_is_real} \Op{F}[f](-\Mat{w}) &= \int_{\real^m} f(\Mat{x}) \exp(2\pi i\Mat{w}^\top \Mat{x}) d\Mat{x}
= \int_{\real^m} f(\Mat{x})^* \exp(-2\pi i\Mat{w}^\top \Mat{x})^* d\Mat{x} \\
&= \left[\int_{\real^m} f(\Mat{x}) \exp(-2\pi i\Mat{w}^\top \Mat{x}) d\Mat{x} \right]^* = \Op{F}[f](\Mat{w})^*,
\end{align}
where Eq. \ref{eqn:f_is_real} holds because $f$ is a real-valued function. 
\end{proof}

We present Lemma \ref{lem:diff_ft} as below, which reflects the effect of differential operators on the spectral domain.

\begin{lemma} \label{lem:diff_ft}
Suppose $f \in C^\infty(\real^m, \real)$ is a smooth real-valued function satisfying Dirichlet condition. Then $\Op{F}\circ\Op{D}^{\Mat{n}}[f](\Mat{w}) = (2\pi i)^{\lVert \Mat{n} \rVert_1} \Mat{w}^{\Mat{n}} \cdot \Op{F}[f](\Mat{w})$ for every $\Mat{n} \in \nat^{m}$.
\end{lemma}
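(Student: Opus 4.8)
The plan is to reduce the statement to the classical one‑dimensional rule ``differentiation becomes multiplication by the frequency'' applied one coordinate at a time, and then to induct on the total order $\lVert \Mat{n} \rVert_1$.

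\textbf{Base case ($\Mat{n} = \Mat{e}_j$).} Here $\Op{D}^{\Mat{e}_j}[f] = \partial f/\partial x_j$. Starting from Definition \ref{def:FT}, I would invoke Fubini to peel off the integral in the variable $x_j$ (legitimate because $f$ and $\partial_{x_j} f$ are $L^1$, since $f$ lies in the Sobolev space $W^{\infty,1}(\real^m,\real)$ by Assumption \ref{asm:f_signal}), and then integrate by parts in $x_j$. The boundary term $\big[\, f(\Mat{x}) e^{-2\pi i\Mat{w}^\top\Mat{x}} \,\big]_{x_j=-\infty}^{x_j=\infty}$ vanishes: since $\partial_{x_j} f \in L^1$, for almost every choice of the remaining coordinates the map $x_j \mapsto f(\Mat{x})$ has limits at $\pm\infty$, and those limits must be $0$ because $f \in L^1$. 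Reassembling the outer integral leaves
\begin{align}
\Op{F}\circ\Op{D}^{\Mat{e}_j}[f](\Mat{w}) = 2\pi i\, w_j \, \Op{F}[f](\Mat{w}),
\end{align}
which is the claim for $\lVert \Mat{n} \rVert_1 = 1$ (the case $\lVert \Mat{n} \rVert_1 = 0$ being $\Op{F}[f] = \Op{F}[f]$).

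\textbf{Inductive step.} Suppose the formula holds for every multi-index of total order $r$, and take $\lVert \Mat{n} \rVert_1 = r+1$. Choose an index $j$ with $n_j \ge 1$ and write $\Mat{n} = \Mat{n}' + \Mat{e}_j$ with $\lVert \Mat{n}' \rVert_1 = r$. Put $g \defeq \Op{D}^{\Mat{n}'}[f]$, which is again smooth, absolutely integrable, and Dirichlet (every partial derivative of an $f \in W^{\infty,1}$ inherits these properties from $f$), so the base case applies to $g$ and gives $\Op{F}[\partial_{x_j} g](\Mat{w}) = 2\pi i\, w_j\, \Op{F}[g](\Mat{w})$. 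By the induction hypothesis $\Op{F}[g](\Mat{w}) = (2\pi i)^{r}\, \Mat{w}^{\Mat{n}'}\, \Op{F}[f](\Mat{w})$, and since mixed partials commute $\Op{D}^{\Mat{n}}[f] = \partial_{x_j} g$. Multiplying the two relations yields
\begin{align}
\Op{F}\circ\Op{D}^{\Mat{n}}[f](\Mat{w}) = 2\pi i\, w_j \cdot (2\pi i)^{r}\, \Mat{w}^{\Mat{n}'}\, \Op{F}[f](\Mat{w}) = (2\pi i)^{r+1}\, \Mat{w}^{\Mat{n}}\, \Op{F}[f](\Mat{w}),
\end{align}
which closes the induction.

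\textbf{Main obstacle.} The only delicate point is justifying the vanishing of the boundary terms in the integration by parts and the interchange of integration order; both are controlled by $f \in W^{\infty,1}(\real^m,\real)$ together with the Dirichlet condition, which ensure that $f$ and all of its partial derivatives are smooth and absolutely integrable, hence decay at infinity along each axis in the almost-everywhere sense used above. The real-valuedness of $f$ plays no role in this lemma and is retained only for consistency with the ambient signal space $\Set{S}$.
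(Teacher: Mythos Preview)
Your proof is correct and shares the paper's overall skeleton (establish the rule for a single partial derivative, then iterate over the multi-index), but you handle the base case by a different mechanism. The paper represents $h$ via the \emph{inverse} Fourier transform and differentiates under the $d\Mat{w}$ integral, so the factor $2\pi i\, w_j$ drops out directly from $\partial_{x_j} e^{2\pi i \Mat{w}^\top \Mat{x}}$ with no integration by parts and no boundary terms to track; it then recognizes the resulting expression as the inverse transform of $2\pi i\, w_j\, \Op{F}[h]$ and applies $\Op{F}$ to both sides. You instead work on the \emph{forward} transform and integrate by parts in $x_j$, which forces you to argue that the boundary contributions vanish---the point you correctly isolate using $f,\,\partial_{x_j} f \in L^1$ from Assumption \ref{asm:f_signal}. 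Both derivations are standard; the paper's route sidesteps the decay-at-infinity discussion but tacitly relies on Fourier inversion and on interchanging $\partial_{x_j}$ with the spectral integral, whereas yours is more self-contained once the $W^{\infty,1}$ membership is granted.
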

\begin{proof}
We first show the case of first-order partial derivative. Suppose $h \in C(\real^m, \real)$ is $L^1$ integrable (then Fourier transform exists).
\begin{align} \label{eqn:1d_partial}
\frac{\partial}{\partial x_i} h(\Mat{x}) &= \frac{\partial}{\partial x_i} \int_{\real^m} \Op{F}[h](\Mat{w}) \exp(2\pi i\Mat{w}^\top \Mat{x}) d\Mat{w} \\
&= \int_{\real^m} \Op{F}[h](\Mat{w}) \frac{\partial}{\partial x_i}\exp(2\pi i\Mat{w}^\top \Mat{x}) d\Mat{w} \\
&= 2\pi i \int_{\real^m} w_i \Op{F}[h](\Mat{w}) \exp(2\pi i\Mat{w}^\top \Mat{x}) d\Mat{w}.
\end{align}
Then we apply the Fourier transform to Eq. \ref{eqn:1d_partial}, we can obtain:
\begin{align}  \label{eqn:1d_ft_partial}
\Op{F} \circ \frac{\partial}{\partial x_i}[h](\Mat{w}) = 2\pi i w_i \Op{F}[h](\Mat{w}).
\end{align}
Note that $f \in W^{\infty,1}(\real^m, \real)$ ensures all its partial derivatives are differentiable and absolutely integrable.
We can recursively apply $\frac{\partial}{\partial x_i}$ to $f$ for $n_i$ times for each $i \in [m]$, and use Eq. \ref{eqn:1d_ft_partial} above to conclude the proof.
\end{proof}

Below is the formal statement of our Theorem \ref{thm:ploy_approx_conv} and its detailed proof.

\begin{theorem} \label{thm:uni_approx}
For every $\Op{T}_g \in \Set{T}$ and arbitrarily small $\epsilon > 0$, there exists a polynomial $p(\Mat{x}) \in \real[x_1, \cdots, x_m]$ such that $\sup_{\Mat{x} \in \Set{\real}^m} \lvert \Op{T}_g[f](\Mat{x}) - p(\Op{\nabla})[f](\Mat{x}) \rvert < \epsilon$ for all $f \in \Set{S}$.
\end{theorem}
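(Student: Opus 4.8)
The plan is to pass to the Fourier domain, where both convolution and differential operators act as multiplication by a \emph{symbol}, so that approximating $\Op{T}_g$ by $p(\nabla)$ reduces to approximating the function $\Op{F}[g]$ by the polynomial symbol of $p$; the twist to watch is that this symbol must come from a polynomial with \emph{real} coefficients.

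First I would combine the convolution theorem (Lemma~\ref{lem:conv_thm}) with the differentiation rule (Lemma~\ref{lem:diff_ft}). For any $f \in \Set{S}$ one has $\Op{F}[\Op{T}_g f](\Mat{w}) = \Op{F}[g](\Mat{w}) \cdot \Op{F}[f](\Mat{w})$, whereas for a candidate polynomial $p(\Mat{x}) = \sum_i a_i \Mat{x}^{\Mat{n}_i} \in \real[x_1,\dots,x_m]$, linearity gives $\Op{F}[p(\nabla) f](\Mat{w}) = \widehat{p}(\Mat{w}) \cdot \Op{F}[f](\Mat{w})$ with the symbol $\widehat{p}(\Mat{w}) \defeq \sum_i a_i (2\pi i)^{\lVert \Mat{n}_i \rVert_1} \Mat{w}^{\Mat{n}_i}$. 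Applying the inverse Fourier transform to the difference and using that $\Op{F}[f]$ vanishes outside the compact band $\Set{W}$ with $\int_{\Set{W}} \lvert \Op{F}[f] \rvert\, d\Mat{w} < \infty$ (Assumption~\ref{asm:f_bandwidth}), I obtain
\begin{align*}
\sup_{\Mat{x} \in \real^m} \bigl\lvert \Op{T}_g[f](\Mat{x}) - p(\nabla)[f](\Mat{x}) \bigr\rvert \;\le\; \Bigl( \sup_{\Mat{w} \in \Set{W}} \bigl\lvert \Op{F}[g](\Mat{w}) - \widehat{p}(\Mat{w}) \bigr\rvert \Bigr) \int_{\Set{W}} \bigl\lvert \Op{F}[f](\Mat{w}) \bigr\rvert \, d\Mat{w}.
\end{align*}
Hence it suffices to choose $p$ so that $\widehat{p}$ uniformly approximates $\Op{F}[g]$ on $\Set{W}$; without loss of generality I would enlarge $\Set{W}$ to a symmetric compact set $\Set{W}^{\pm} = \Set{W} \cup (-\Set{W})$.

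The main obstacle is the real-coefficient requirement: the assignment $a_i \mapsto a_i (2\pi i)^{\lVert \Mat{n}_i \rVert_1}$ contributes a factor of $i$ for every odd-order monomial, so a naive polynomial fit of $\Op{F}[g]$ would generically force complex coefficients in $p$. The way out is the symmetry of the target. Since $g$ is real-valued (Assumption~\ref{asm:g_real}), Lemma~\ref{lem:real_ft} gives $\Op{F}[g](-\Mat{w}) = \Op{F}[g](\Mat{w})^{\ast}$; combined with the hypothesis that $\Op{F}[g]$ is itself real-valued and continuous (Assumption~\ref{asm:g_continuous}), this forces $\Op{F}[g]$ to be an \emph{even} continuous function on $\Set{W}^{\pm}$. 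I would then invoke Stone--Weierstrass through Lemma~\ref{lem:sw_thm_poly} to get a real polynomial $q_0$ with $\sup_{\Set{W}^{\pm}} \lvert q_0 - \Op{F}[g] \rvert < \delta$, and symmetrize it as $q(\Mat{w}) \defeq \tfrac{1}{2}\bigl( q_0(\Mat{w}) + q_0(-\Mat{w}) \bigr)$, which retains only the even-total-degree monomials and, because $\Op{F}[g]$ is even, still satisfies $\sup_{\Set{W}^{\pm}} \lvert q - \Op{F}[g] \rvert < \delta$. Writing $q(\Mat{w}) = \sum_j b_j \Mat{w}^{\Mat{m}_j}$ with every $\lVert \Mat{m}_j \rVert_1$ even, the key observation is that $(2\pi i)^{\lVert \Mat{m}_j \rVert_1} = (-1)^{\lVert \Mat{m}_j \rVert_1 / 2} (2\pi)^{\lVert \Mat{m}_j \rVert_1}$ is a nonzero \emph{real} number, so setting $a_j \defeq (-1)^{\lVert \Mat{m}_j \rVert_1 / 2} (2\pi)^{-\lVert \Mat{m}_j \rVert_1} b_j \in \real$ produces a genuine real polynomial $p(\Mat{x}) = \sum_j a_j \Mat{x}^{\Mat{m}_j} \in \real[x_1,\dots,x_m]$ whose symbol is exactly $\widehat{p} = q$. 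Substituting into the displayed bound and choosing $\delta$ small completes the argument.

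Two points on the quantifiers I would flag. For a single polynomial $p$ to serve \emph{all} $f \in \Set{S}$, the bandwidth $\Set{W}$ is read as a band-limit common to the whole signal class, so that $p$ depends on $\Op{T}_g$ (and on $\Set{W}$) but not on $f$; the residual factor $\int_{\Set{W}} \lvert \Op{F}[f] \rvert\, d\Mat{w}$ is finite for each $f$ and is absorbed either by a normalization convention on $\Set{S}$ or by reading ``arbitrary precision'' per signal. Moreover, only finitely many derivative orders enter for any fixed $\epsilon$, but the maximal order grows as $\epsilon \to 0$, which is precisely the behavior anticipated in the discussion following Theorem~\ref{thm:ploy_approx_conv}.
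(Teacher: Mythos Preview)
Your Fourier-domain reduction and the final H\"older-type bound match the paper's exactly; the divergence is in how you secure \emph{real} coefficients for $p$. You read Assumption~\ref{asm:g_continuous} literally as asserting that $\Op{F}[g]$ is real-valued, so conjugate symmetry forces it to be even, and symmetrizing the Stone--Weierstrass approximant leaves only even-total-degree monomials, for which $(2\pi i)^{\lVert \Mat{m}_j \rVert_1}$ is automatically real. The paper does not use that reading: it treats $\hat{g}=\Op{F}[g]$ as complex-valued (invoking Assumption~\ref{asm:g_continuous} only for continuity), pushes it through the Hartley map $\phi[\hat{g}]=\Re\hat{g}-\Im\hat{g}$ to obtain a real target, approximates that by a real polynomial $\tilde{p}$, and then pulls back via $\phi^{-1}$ to a complex polynomial $\hat{p}$ whose coefficients are real on even degrees and purely imaginary on odd degrees---exactly the parity that makes $a_k=\hat{a}_k/(2\pi i)^{\lVert\Mat{n}_k\rVert_1}$ real for \emph{all} $k$. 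Your route is shorter and avoids the Hartley machinery, but it only ever produces even $p(\nabla)$ and hence covers only symmetric kernels $g$; the paper's construction reaches arbitrary real-valued $g$, including odd filters such as shifts or directional derivatives. If, as the paper's own proof suggests, the ``$\real$'' in Assumption~\ref{asm:g_continuous} is loose notation rather than a genuine real-valuedness hypothesis, your symmetrization step no longer goes through and you would need a device like the Hartley trick to capture the odd part of $\hat{g}$.
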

\begin{proof}
For every $f \in \Set{S}$ and $\Op{T}_g \in \Set{T}$, by Lemma \ref{lem:conv_thm}, one can rewrite:
\begin{align}
\Op{F} \circ \Op{T}_g[f](\Mat{w}) = \Op{F}[f](\Mat{w}) \cdot \Op{F}[g](\Mat{w}) \defeq \hat{f}(\Mat{w}) \hat{g}(\Mat{w}),
\end{align}
where we use $\hat{f}: \real^m \rightarrow \complex$ and $\hat{g}: \real^m \rightarrow \complex$ to denote the Fourier transform of $f$ and $g$, respectively.
We can construct an invertible mapping $\phi$ by letting:
\begin{align}
\phi[\hat{f}](\Mat{w}) &= \Re\{\hat{f}(\Mat{w})\} - \Im\{\hat{f}(\Mat{w})\}, \\
\phi^{-1}[\tilde{f}](\Mat{w}) &= \frac{\tilde{f}(\Mat{w})+\tilde{f}(-\Mat{w})}{2} - i \frac{\tilde{f}(\Mat{w})-\tilde{f}(-\Mat{w})}{2},
\end{align}
which is also known as the Hartley transform.
By Lemma \ref{lem:real_ft} (with Assumption \ref{asm:f_continuous} \ref{asm:g_real}), $\tilde{f} \defeq \phi[\hat{f}]$ and $\tilde{g} \defeq \phi[\hat{g}]$ are both real-valued functions.

Since $\hat{f}$ is only supported in $\Set{W}$ (by Assumption \ref{asm:f_bandwidth}), we only consider $\tilde{g}$ within the compact subset $\Set{W}$.
By Lemma \ref{lem:sw_thm_poly} (with Assumption \ref{asm:g_continuous}), there exists a polynomial $\tilde{p}(\Mat{w}) \in \real[w_1, \cdots, w_m] = \tilde{a}_0 + \tilde{a}_1 \Mat{w}_{\Mat{n}_1} + \cdots + \tilde{a}_d \Mat{w}^{\Mat{n}_d}$ such that $\sup_{\Mat{w} \in \Set{W}} \lvert \tilde{g}(\Mat{w}) - \tilde{p}(\Mat{w}) \rvert < \epsilon/2C$ for every $\epsilon > 0$, where $d$ is the number of monomials in $\tilde{p}$, $\tilde{a}_0, \tilde{a}_1, \cdots, \tilde{a}_d \in \real$ are corresponding coefficients, and $C > 0$ is some constant.

Applying $\phi^{-1}$ to $\tilde{p}$, we will obtain a new (complex-valued) polynomial $\hat{p} \defeq \phi^{-1}[\tilde{p}] \in \complex[w_1, \cdots, w_m]$ such that:
\begin{align}
\Re\{\hat{p}(\Mat{w})\} = \frac{\tilde{p}(\Mat{w})+\tilde{p}(-\Mat{w})}{2},
\qquad
\Im\{\hat{p}(\Mat{w})\} = \frac{\tilde{p}(\Mat{w})-\tilde{p}(-\Mat{w})}{2}.
\end{align}
We observe that the coefficients of $\hat{p}$ satisfy: $\hat{a}_{k} = \tilde{a}_{k}$ if $\lVert \Mat{n}_k \rVert_1$ is even and $\hat{a}_{k} = i\tilde{a}_{k}$ if $\lVert \Mat{n}_k \rVert_1$ is odd.
Then we bound the difference between $\hat{g}$ and $\hat{p}$ for every $\Mat{w} \in \Set{W}$:
\begin{align}
\left\lvert \hat{g}(\Mat{w}) -\hat{p}(\Mat{w}) \right\rvert &= \left\lvert \left( \frac{\tilde{f}(\Mat{w})+\tilde{f}(-\Mat{w})}{2} - \frac{\tilde{p}(\Mat{w})+\tilde{p}(-\Mat{w})}{2} \right) \right. \\ & \quad - \left. i\left( \frac{\tilde{f}(\Mat{w})-\tilde{f}(-\Mat{w})}{2} - \frac{\tilde{p}(\Mat{w})-\tilde{p}(-\Mat{w})}{2} \right) \right\rvert \\
&\le \frac{1}{2} \left( \left\lvert \tilde{f}(\Mat{w}) - \tilde{p}(\Mat{w}) \right\rvert + \left\lvert \tilde{f}(-\Mat{w}) - \tilde{p}(-\Mat{w}) \right\rvert \right) \\ & \quad + \frac{1}{2} \left( \left\lvert \tilde{p}(\Mat{w}) - \tilde{f}(\Mat{w}) \right\rvert + \left\lvert \tilde{p}(-\Mat{w}) - \tilde{f}(-\Mat{w}) \right\rvert \right) \\
&\le \frac{\epsilon}{C}.
\end{align}

In the meanwhile, by Lemma \ref{lem:diff_ft} (with Assumption \ref{asm:f_continuous}, \ref{asm:f_dirichlet}), $\Op{F} \circ \Op{D}^{\Mat{n}}[f](w) = (2\pi i)^{\lVert \Mat{n} \rVert_1} \Mat{w}^{\Mat{n}} \cdot \Op{F}[f](\Mat{w})$ for every $\Mat{n} \in \nat^m$.
Define a sequence $q_{\Mat{n}}(\Mat{w}) = (2\pi i)^{\lVert \Mat{n} \rVert_1} \Mat{w}^{\Mat{n}}$, then partial derivatives of $f$ in terms of $\Mat{n} \in \nat^{m}$ can be written as: 
\begin{align} \label{eqn:spectrum_diff}
\Op{F} \circ \Op{D}^{\Mat{n}}[f](\Mat{w}) = q_{\Mat{n}}(\Mat{w}) \cdot \Op{F}[f](\Mat{w}).
\end{align}
Next we decompose polynomial $\hat{p}$ in terms of $q_{\Mat{n}}$. Let $a_k = \hat{a}_k / (2 \pi i)^{\lVert \Mat{n}_k \rVert_1}$, then $\hat{p}(\Mat{w}) = a_0 + a_1 q_{\Mat{n}_1}(\Mat{w}) + \cdots + a_d q_{\Mat{n}_d}(\Mat{w})$.
We note that $\{a_k, \forall k \in [d]\}$ must be real numbers since $\hat{a}_k$ is real/imaginary when $\lVert \Mat{n}_k \rVert_1$ is even/odd, which coincides with $(2 \pi i)^{\lVert \Mat{n}_k \rVert_1}$.

By linearity of inverse Fourier transform and Eq. \ref{eqn:spectrum_diff}, element-wisely multiplying $\sum_{k=0}^{d} a_k q_{\Mat{n}_k}$ to $\hat{f}$ will lead to a transform on the spatial domain:
\begin{align}
    \Op{F}^{-1} \left[\sum_{k=0}^{d} a_k q_{\Mat{n}_k} \cdot \Op{F}[f] \right] = \sum_{k=0}^{d} a_k \Op{D}^{\Mat{n}_k} f \defeq p(\Op{\nabla})[f],
\end{align}
where we define polynomial $p(\Op{\nabla}) := a_0 + a_1 \Op{D}^{\Mat{n}_1} + \cdots + a_d \Op{D}^{\Mat{n}_d} \in \real\left[\frac{\partial}{\partial x_1}, \cdots, \frac{\partial}{\partial x_m}\right]$ over the ring of partial differential operators (Definition \ref{def:polynomial_in_diff}).
Now we bound the difference between $\Op{T}_g[f]$ and $p(\Op{\nabla})[f]$ for every $f \in \Set{S}$ and $\Mat{x} \in \real^m$:
\begin{align}
\left\lvert \Op{T}_g[f](\Mat{x}) - p(\Op{\nabla})[f](\Mat{x}) \right\rvert &= \left\lvert \int_{\Set{W}} \exp(2\pi i \Mat{w}^\top \Mat{x}) \hat{f}(\Mat{w}) \left(\hat{g}(\Mat{w}) - \sum_{k=0}^{d} a_k q_{\Mat{n}_k}(\Mat{w})\right) d\Mat{w} \right\rvert \\
\label{eqn:jensen} &\le \int_{\Set{W}} \left\lvert \hat{f}(\Mat{w})  \left(\hat{g}(\Mat{w}) - \sum_{k=0}^{d} a_k q_{\Mat{n}_k}(\Mat{w})\right) \right\rvert d\Mat{w} \\
\label{eqn:holder} &\le \left(\sup_{\Mat{w} \in \Set{W}} \left\lvert \hat{g}(\Mat{w}) - \sum_{k=0}^{d} a_k q_{\Mat{n}_k}(\Mat{w}) \right\rvert \right) \left(\int_{\Set{W}} \left\lvert \hat{f}(\Mat{w}) \right\rvert d\Mat{w} \right) \\
\label{eqn:last_step} &\le \epsilon,
\end{align}
where Eq. \ref{eqn:holder} follows from H\"older's inequality, and Eq. \ref{eqn:last_step} is obtained by substituting the upper bound of difference $\lvert\hat{g}(\Mat{w}) - \sum_{k=0}^{d} a_k q_{\Mat{n}_k}(\Mat{w})\rvert$ and letting $C$ equal to the $L^1$ norm of $\hat{f}(\Mat{w})$ (by Assumption \ref{asm:f_bandwidth}).
\end{proof}

\section{Implementation Details of INSP-ConvNet} \label{sec:detail_insp_convnet}

We have formulated exact convolution form and INSP-Conv in Sec. \ref{sec:insp_conv}. We provide a pseudocode to illustrate the forward pass of INSP-ConvNet in Algorithm \ref{alg:convnet}. Below we elaborate each main component:
\begin{algorithm}[t]
\caption{Forward pass of INSP-ConvNet}\label{alg:convnet}
\begin{algorithmic}[1]
\State {\textbf{Input}: An INR network $\Phi(\Mat{x}): \real^m \rightarrow \real$, convolutional operator weights $\Mat{\theta}^{(l)} \in \real^{M}$ and an input coordinate $\Mat{x}$. }
\State {\textbf{Output}: Value at $\Mat{x}$ of INR $\operatorname{ConvNet}[\Phi]$ processed by INSP-ConvNet.}
\State $y^{(0)} \gets \Phi(\Mat{x})$
\For{$l = 1, \cdots, L$}
\State $\hat{y}^{(l)} \gets \begin{bmatrix}
y^{(l-1)} & \frac{\partial y^{(l-1)}}{\partial \Mat{x}}^\top & \frac{\partial^2 y^{(l-1)}}{\partial \Mat{x}^2}^\top & \cdots & \frac{\partial^K y^{(l-1)}}{\partial \Mat{x}^K}^\top \end{bmatrix} \Mat{\theta}^{(l)}$ \Comment{Convolutional layer}
\State $y^{(l)} \gets \operatorname{ReLU}(\operatorname{InstanceNorm1D}(\hat{y}^{(l)}))$ \Comment{Non-linearity and normalization}
\EndFor

\State \textbf{return} $y^{(L)}$.
\end{algorithmic}
\end{algorithm}

\paragraph{Convolutional Layer.}
Each $\Op{A}^{(l)}$ represents an implicit convolution layer. We follow the closed-form solution in Eq. \ref{eqn:insp_conv} to parameterize $\Op{A}^{(l)}$ with $\Mat{\theta}^{(l)}$.
We point out that $\operatorname{ConvNet}[\Phi]$ also corresponds to a computational graph, which can continuously map coordinates to the output features.
To construct this computational graph, we recursively call for gradient networks of the previous layer until the first layer.
For example, $\Op{A}^{(l)}$ will request the gradient network of $\Op{A}^{(l-1)} \cdot \sigma \circ \cdots \circ \Op{A}^{(1)} \cdot \Phi$, and then $\Op{A}^{(l-1)}$ will request the gradient network of the rest part. This procedure will proceed until the first layer, which directly returns the derivative network of $\Phi$.
Kernels in CNNs typically perform multi-channel convolution.
However, this is not memory friendly to gradient computing in our framework.
To this end, we run channel-wise convolution first and then employ a linear layer to mix channels \cite{chollet2017xception}.

\paragraph{Nonlinear Activation and Normalization.} 
Nonlinear activation and normalization are naturally element-wise functions.
They are point-wisely applied to the output of an INSP-Net and participate the computational graph construction process. This corresponds to the line 6 of Algorithm \ref{alg:convnet}.

\paragraph{Training Recipe.}
Given a dataset $\Set{D} = \{(\Phi_i, y_i)\}$ with a set of pre-trained INRs $\Phi_i$ and their corresponding labels $y_i$, our goal is to learn a $\operatorname{ConvNet}[\cdot]$ that can process each example.
In contrast to standard ConvNets that are designed for grid-based images, the computational graph of INSP-ConvNet contains parameters of both the input INR $\Phi_i$ and learnable kernels $\Op{A}^{(l)}$.
During the training stage, we randomly sample a mini-batch $(\Phi_i, y_i)$ from $\Set{D}$ to optimize INSP-ConvNet.
The corresponding loss will be evaluated according to the network output, and then back-propagate the calculated gradients to the learnable parameters in $\Op{A}^{(l)}$, using the stochastic gradient descent optimization.
Along the whole process, the parameters of $\Phi_i$ is fixed and only the parameters in $\Op{A}^{(l)}$ is optimized.
Standard data augmentations are included by default, including rotation, zoom in/out, etc. 
In practice, we implement these augmentations by using affine transformation on the coordinates of INRs.

\section{Connection with PDE based Signal Processing}
\label{sec:pde}

Partial Differential Equation (PDE) has been successfully applied to image processing domain  as we discussed in Sec. \ref{sec:related_work_pde}.
In this section, we focus on their connection with our INSP-Net.
We summarize the methods of this line of works \cite{perona1990scale, liu2010learning, liu2014adaptive} in the following formulation:
\begin{align} \label{eqn:pde_dsp}
\frac{\partial \Psi(\Mat{x}, t)}{\partial t} = M_t\left[\Psi(\Mat{x}, t), \nabla_{\Mat{x}} \Psi(\Mat{x}, t), \nabla^2_{\Mat{x}} \Psi(\Mat{x}, t), \cdots \right],
\end{align}
where $M_t(\cdot)$ is a time-variant function that remaps the direct output and high-order derivatives of function $\Psi$.
For heat diffusion, $M_t$ boils down to be an stationary isotropic combination of second-order derivatives.
In \cite{perona1990scale}, $M_t$ is chosen to be a gradient magnitude aware diffusion operator running on divergence operators.
\cite{liu2010learning, liu2014adaptive} degenerate $M_t$ to a time-dependent linear mapping of pre-defined invariants of the maximal order two.
We note that Eq. \ref{eqn:pde_dsp} can be naturally solved with INRs, as INRs are amenable to solving complicated differential equation shown by \cite{sitzmann2020implicit}.
One straightforward solution is to parameterize $M_t$ by another time-dependent coordinate network \cite{tancik2020fourier} and enforce the boundary condition $\Psi(\Mat{x}, 0) = \Phi(\Mat{x})$ and minimize the difference between the two hands of the Eq. \ref{eqn:pde_dsp}.
However, foreseeable problem falls in sampling inefficiency over the time axis.
Suppose we discretize the time axis into small intervals $0 = t_0 < t_1 < \cdots < t_N$, then Eq. \ref{eqn:pde_dsp} has a closed-form solution given $M_t$ by Euler method:
\begin{align}
\Psi(\Mat{x}, t_{n+1}) &= \int_{t_n}^{t_{n+1}} M_t\left[\Psi(\Mat{x}, t), \nabla_{\Mat{x}} \Psi(\Mat{x}, t), \nabla^2_{\Mat{x}} \Psi(\Mat{x}, t), \cdots\right] dt + \Psi(\Mat{x}, t_n) \\
\label{eqn:pde_linearization} &\approx M_{t_n}\left[\Psi(\Mat{x}, t_n), \nabla_{\Mat{x}} \Psi(\Mat{x}, t_n), \nabla^2_{\Mat{x}} \Psi(\Mat{x}, t_n), \cdots\right] (t_{n+1} - t_n) + \Psi(\Mat{x}, t_n).
\end{align}
One can see Eq. \ref{eqn:pde_linearization} can be regarded as a special case of our model Eq. \ref{eqn:insp}, where we absorb $M_{t_n}$, time interval $t_{n+1} - t_n$, and the residual term $\Psi(\Mat{x}, t_n)$ into one $\Pi$.
Considering our multi-layer model INSP-ConvNet (see Sec. \ref{sec:insp_conv}), we can analogize $t_n$ to the layer number, and then solving Eq. \ref{eqn:pde_dsp} at time $t_N$ is approximately equal to forward passing an $N$-layer INSP-ConvNet.

\section{More Experiment Details} \label{sec:expr_details}

We implement our INSP framework using PyTorch. The gradients are obtained directly using the autograd package from PyTorch. All learnable parameters are trained with AdamW optimizer and a learning rate of 1e-4. For low-level image processing kernels, images are obtained from
 Set5 dataset~\cite{BMVC.26.135}, Set14 dataset~\cite{zeyde2010single}, and DIV-2k dataset~\cite{Agustsson_2017_CVPR_Workshops}. These datasets are original collected for super-resolution task, so the images are diverse in style and content. In our experiments, we construct SIREN~\cite{sitzmann2020implicit} on each image. For efficiency, we resized the images to $256 \times 256$. We use 90 images to construct the INRs used for training, and use the other images for evaluation.
 
 For image classification, we construct a 2-layer INSP-ConvNet framework. Each INSP layer constructs the derivative computational graphs of the former layers and combines them with learnable $\Pi$. The INSP-layer is capable of approximating a convolution filter. For a fair comparison, we build another 2-layer depthwise convolutional network running on image pixels as the baseline. Both our INSP-ConvNet and the ConvNet running on pixels are trained with the same hyper-parameters. Both experiments take 1000 training epochs, with a learning rate of $1e-4$ using AdamW optimizer.  

\section{More Experimental Results} \label{sec:more_expr}

\paragraph{Additional Visualization.} In this section, we provide more experimental results. Fig.~\ref{fig:edge_supp} provides comparisons on edge detection task. Fig.~\ref{fig:denoise_supp} shows image denoising results. Fig.~\ref{fig:deblur_supp} demonstrates image deblurring results. Fig.~\ref{fig:blur_supp} shows image blurring results. Fig. \ref{fig:inpainting_30_supp} shows image inpainting results.
Fig. \ref{fig:geometry_addition} presents additional results on geometry smoothening.

\paragraph{Additional Quantitative Results.} We also provide quantitative comparisons on the test set in Tab.~\ref{tab:denoise2}. The test set consists of 100 INRs fitted from 100 images in DIV-2k dataset~\cite{Agustsson_2017_CVPR_Workshops}. 
In Tab.~\ref{tab:denoise2}, their performance is better when the synthetic noise becomes three-channel Gaussian noise. The synthetic noise is similar to those seen during the training process of MPRNet~\cite{Zamir2021MPRNet} and MAXIM~\cite{tu2022maxim}, so they benefit from their much wider training set.

\paragraph{Audio Signal Processing.} We additionally validate the ability of our INSP framework by processing audio signals. We add synthetic Gaussian noise onto the audio and use it to fit a SIREN. The noisy audio decoded from the INR is shown in Fig.~\ref{fig:audio_supp}(b). Then we use our INSP-Net to implicitly process it to a new INR that can be further decoded into denoised audio. It's decoded result is shown in Fig.~\ref{fig:audio_supp}(c).
We also provide visualization of the denoising effect in Fig.~\ref{fig:audio_supp}(f).

\begin{figure}[h]
    \centering
    \centering
    \begin{tabular}{P{0.16\textwidth}P{0.16\textwidth}P{0.16\textwidth}P{0.16\textwidth}P{0.16\textwidth}}
    \scriptsize Input Image & \scriptsize Sobel Filter & \scriptsize Canny Filter  & \scriptsize Prewitt Filter & \scriptsize  \textbf{INSP-Net}\\
    \end{tabular}
\begin{subfigure}[h]{0.19\textwidth}
  \includegraphics[width=\textwidth]{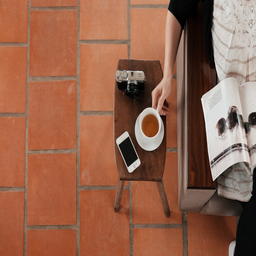}
\end{subfigure}
\begin{subfigure}[h]{0.19\textwidth}
  \includegraphics[width=\textwidth]{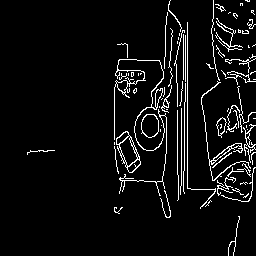}
\end{subfigure}
\begin{subfigure}[h]{0.19\textwidth}
  \includegraphics[width=\textwidth]{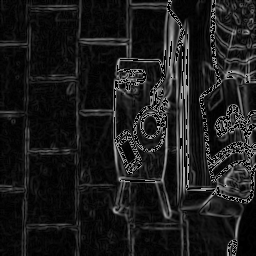}
\end{subfigure}
\begin{subfigure}[h]{0.19\textwidth}
  \includegraphics[width=\textwidth]{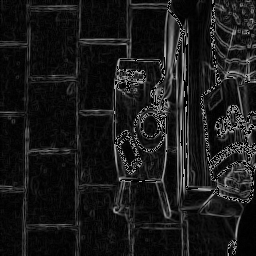}
\end{subfigure}
\begin{subfigure}[h]{0.19\textwidth}
  \includegraphics[width=\textwidth]{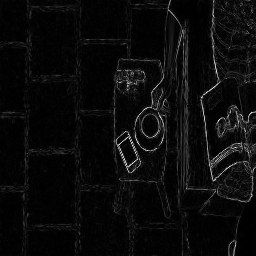}
\end{subfigure}
\\
\begin{subfigure}[h]{0.19\textwidth}
  \includegraphics[width=\textwidth]{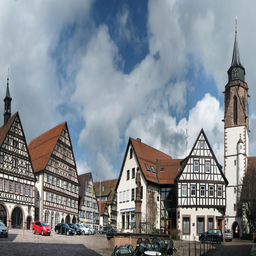}
\end{subfigure}
\begin{subfigure}[h]{0.19\textwidth}
  \includegraphics[width=\textwidth]{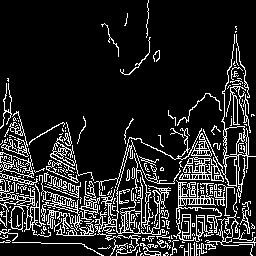}
\end{subfigure}
\begin{subfigure}[h]{0.19\textwidth}
  \includegraphics[width=\textwidth]{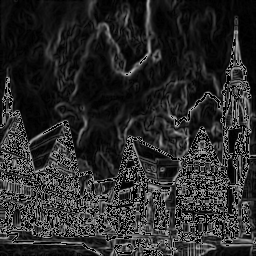}
\end{subfigure}
\begin{subfigure}[h]{0.19\textwidth}
  \includegraphics[width=\textwidth]{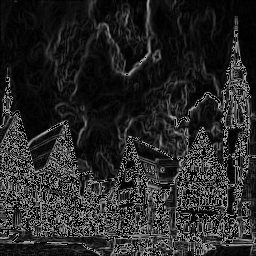}
\end{subfigure}
\begin{subfigure}[h]{0.19\textwidth}
  \includegraphics[width=\textwidth]{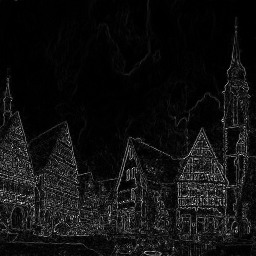}
\end{subfigure}
\\
\begin{subfigure}[h]{0.19\textwidth}
  \includegraphics[width=\textwidth]{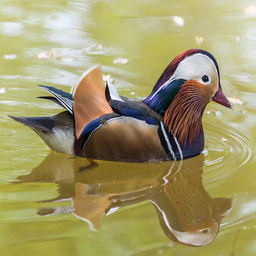}
\end{subfigure}
\begin{subfigure}[h]{0.19\textwidth}
  \includegraphics[width=\textwidth]{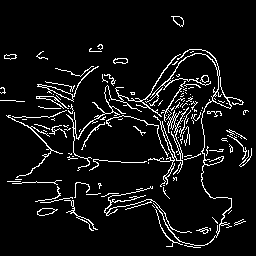}
\end{subfigure}
\begin{subfigure}[h]{0.19\textwidth}
  \includegraphics[width=\textwidth]{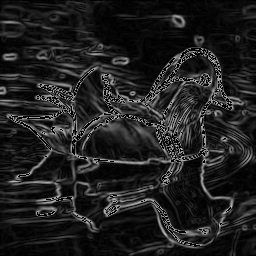}
\end{subfigure}
\begin{subfigure}[h]{0.19\textwidth}
  \includegraphics[width=\textwidth]{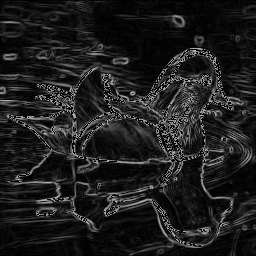}
\end{subfigure}
\begin{subfigure}[h]{0.19\textwidth}
  \includegraphics[width=\textwidth]{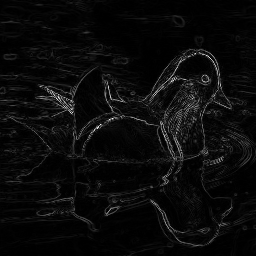}
\end{subfigure}
\\
\begin{subfigure}[h]{0.19\textwidth}
  \includegraphics[width=\textwidth]{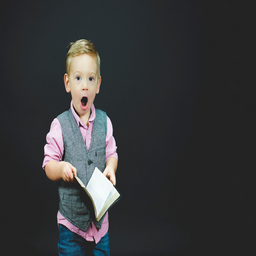}
\end{subfigure}
\begin{subfigure}[h]{0.19\textwidth}
  \includegraphics[width=\textwidth]{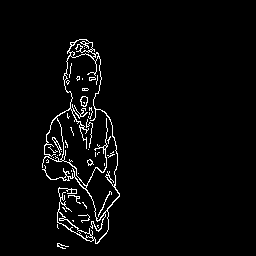}
\end{subfigure}
\begin{subfigure}[h]{0.19\textwidth}
  \includegraphics[width=\textwidth]{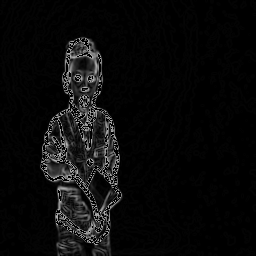}
\end{subfigure}
\begin{subfigure}[h]{0.19\textwidth}
  \includegraphics[width=\textwidth]{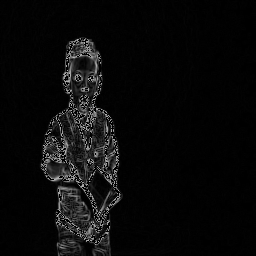}
\end{subfigure}
\begin{subfigure}[h]{0.19\textwidth}
  \includegraphics[width=\textwidth]{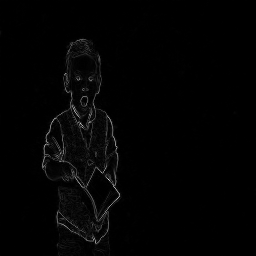}
\end{subfigure}
\\
\begin{subfigure}[h]{0.19\textwidth}
  \includegraphics[width=\textwidth]{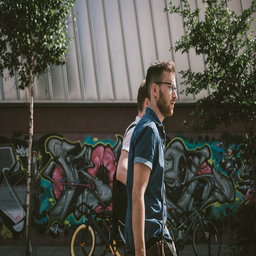}
\end{subfigure}
\begin{subfigure}[h]{0.19\textwidth}
  \includegraphics[width=\textwidth]{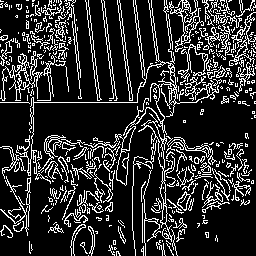}
\end{subfigure}
\begin{subfigure}[h]{0.19\textwidth}
  \includegraphics[width=\textwidth]{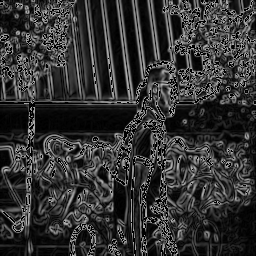}
\end{subfigure}
\begin{subfigure}[h]{0.19\textwidth}
  \includegraphics[width=\textwidth]{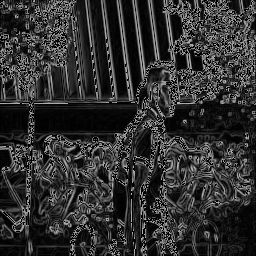}
\end{subfigure}
\begin{subfigure}[h]{0.19\textwidth}
  \includegraphics[width=\textwidth]{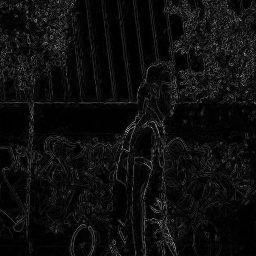}
\end{subfigure}
\\
\begin{subfigure}[h]{0.19\textwidth}
  \includegraphics[width=\textwidth]{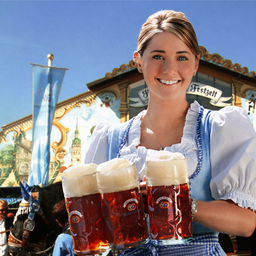}
\end{subfigure}
\begin{subfigure}[h]{0.19\textwidth}
  \includegraphics[width=\textwidth]{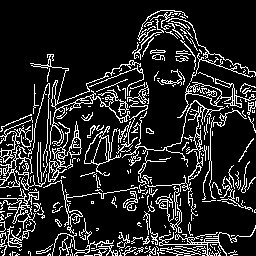}
\end{subfigure}
\begin{subfigure}[h]{0.19\textwidth}
  \includegraphics[width=\textwidth]{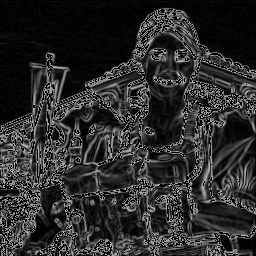}
\end{subfigure}
\begin{subfigure}[h]{0.19\textwidth}
  \includegraphics[width=\textwidth]{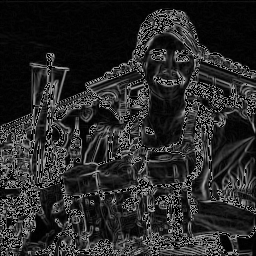}
\end{subfigure}
\begin{subfigure}[h]{0.19\textwidth}
  \includegraphics[width=\textwidth]{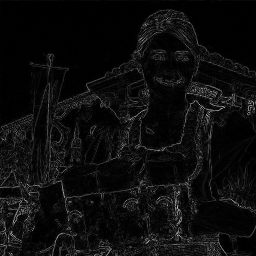}
\end{subfigure}
\\
    \caption{Edge detection. We fit the natural images with SIREN and use our INSP-Net to process implicitly into a new INR that can be decoded into edge maps.}
    \label{fig:edge_supp}
\end{figure}

\begin{figure}[h]
    \centering
    \begin{tabular}{P{0.16\textwidth}P{0.16\textwidth}P{0.16\textwidth}P{0.16\textwidth}P{0.16\textwidth}P{0.16\textwidth}}
    \scriptsize Noisy Image & \scriptsize Mean Filter & \scriptsize MPRNet & \scriptsize  \textbf{INSP-Net} & \scriptsize Target Image\\
    \end{tabular}
    \begin{subfigure}[h]{0.19\textwidth}
      \includegraphics[width=\textwidth]{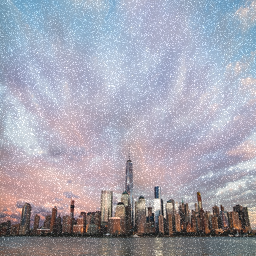}
  \end{subfigure}
  \begin{subfigure}[h]{0.19\textwidth}
      \includegraphics[width=\textwidth]{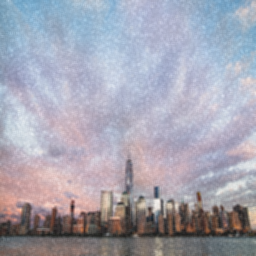}
  \end{subfigure}
      \begin{subfigure}[h]{0.19\textwidth}
        \includegraphics[width=\textwidth]{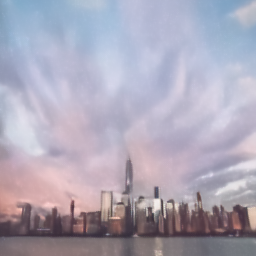}
    \end{subfigure}
  \begin{subfigure}[h]{0.19\textwidth}
      \includegraphics[width=\textwidth]{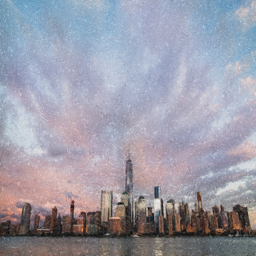}
  \end{subfigure}
  \begin{subfigure}[h]{0.19\textwidth}
      \includegraphics[width=\textwidth]{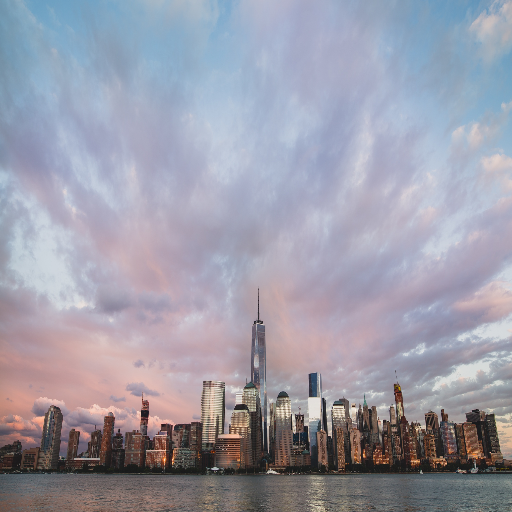}
  \end{subfigure}
    \\
    \begin{subfigure}[h]{0.19\textwidth}
      \includegraphics[width=\textwidth]{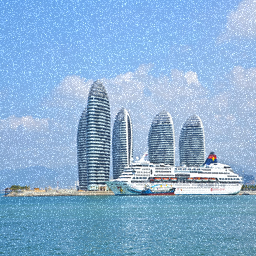}
  \end{subfigure}
  \begin{subfigure}[h]{0.19\textwidth}
      \includegraphics[width=\textwidth]{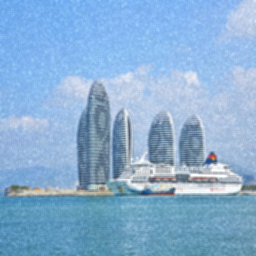}
  \end{subfigure}
  \begin{subfigure}[h]{0.19\textwidth}
      \includegraphics[width=\textwidth]{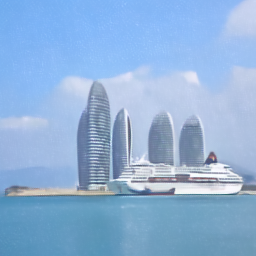}
  \end{subfigure}
  \begin{subfigure}[h]{0.19\textwidth}
      \includegraphics[width=\textwidth]{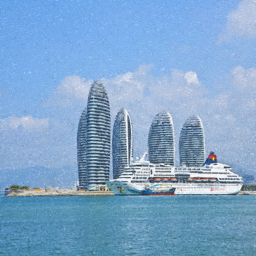}
  \end{subfigure}
  \begin{subfigure}[h]{0.19\textwidth}
      \includegraphics[width=\textwidth]{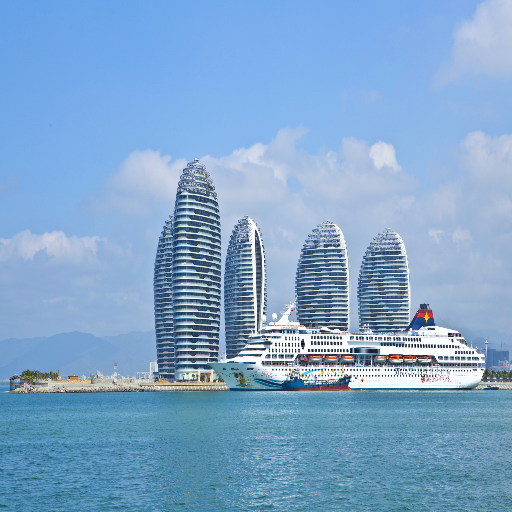}
  \end{subfigure}
      \\
    \begin{subfigure}[h]{0.19\textwidth}
      \includegraphics[width=\textwidth]{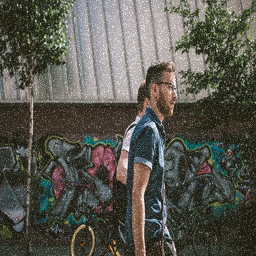}
  \end{subfigure}
  \begin{subfigure}[h]{0.19\textwidth}
      \includegraphics[width=\textwidth]{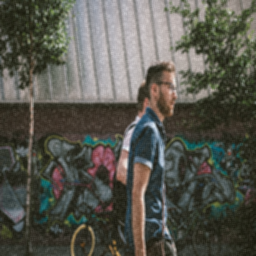}
  \end{subfigure}
  \begin{subfigure}[h]{0.19\textwidth}
      \includegraphics[width=\textwidth]{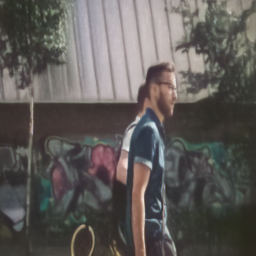}
  \end{subfigure}
  \begin{subfigure}[h]{0.19\textwidth}
      \includegraphics[width=\textwidth]{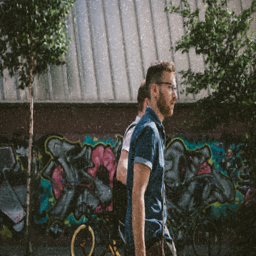}
  \end{subfigure}
  \begin{subfigure}[h]{0.19\textwidth}
      \includegraphics[width=\textwidth]{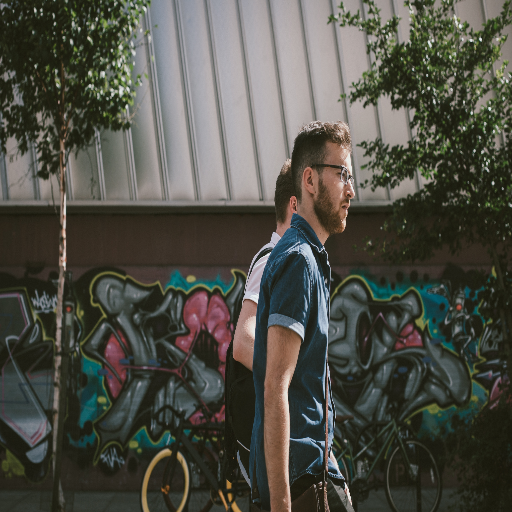}
  \end{subfigure}
      \\
    \begin{subfigure}[h]{0.19\textwidth}
      \includegraphics[width=\textwidth]{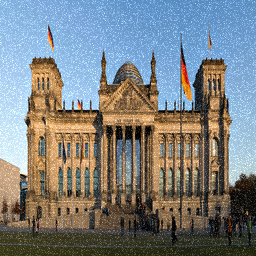}
  \end{subfigure}
  \begin{subfigure}[h]{0.19\textwidth}
      \includegraphics[width=\textwidth]{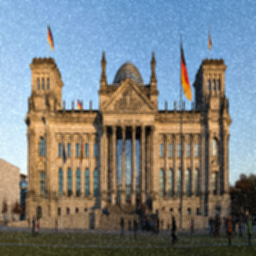}
  \end{subfigure}
  \begin{subfigure}[h]{0.19\textwidth}
      \includegraphics[width=\textwidth]{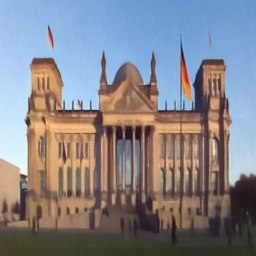}
  \end{subfigure}
  \begin{subfigure}[h]{0.19\textwidth}
      \includegraphics[width=\textwidth]{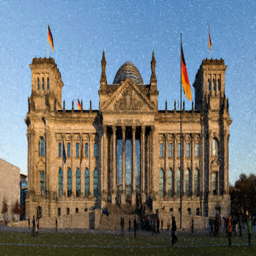}
  \end{subfigure}
  \begin{subfigure}[h]{0.19\textwidth}
      \includegraphics[width=\textwidth]{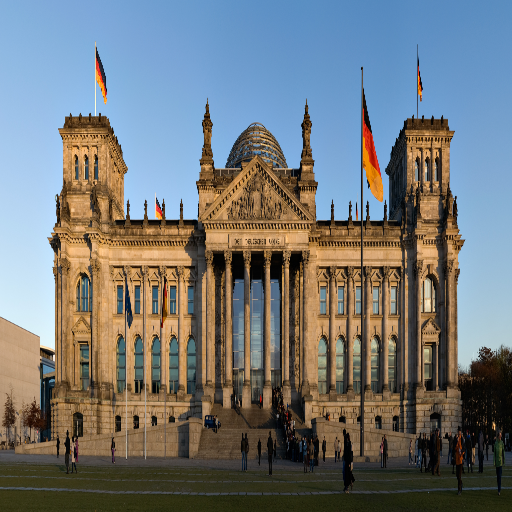}
  \end{subfigure}
   \begin{subfigure}[h]{0.19\textwidth}
      \includegraphics[width=\textwidth]{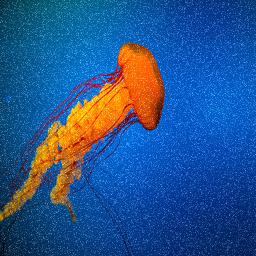}
  \end{subfigure}
  \begin{subfigure}[h]{0.19\textwidth}
      \includegraphics[width=\textwidth]{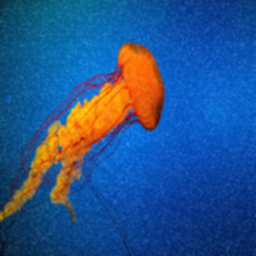}
  \end{subfigure}
     \begin{subfigure}[h]{0.19\textwidth}
        \includegraphics[width=\textwidth]{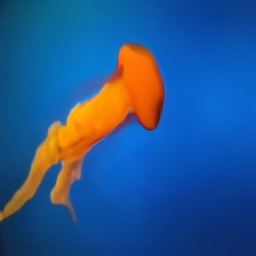}
    \end{subfigure}
  \begin{subfigure}[h]{0.19\textwidth}
      \includegraphics[width=\textwidth]{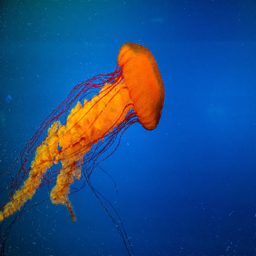}
  \end{subfigure}
  \begin{subfigure}[h]{0.19\textwidth}
      \includegraphics[width=\textwidth]{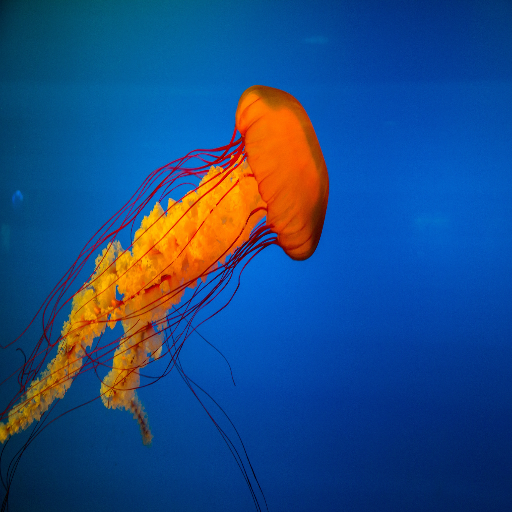}
  \end{subfigure}
    \caption{Image denoising. We fit the noisy images with SIREN and train our INSP-Net to process implicitly into a new INR that can be decoded into natural clear images.}
    \vspace{-1mm}
    \label{fig:denoise_supp}
\end{figure}

\begin{figure}[h]
    \centering
    \begin{tabular}{P{0.16\textwidth}P{0.16\textwidth}P{0.16\textwidth}P{0.16\textwidth}P{0.16\textwidth}}
    \scriptsize Blur Image & \scriptsize Wiener Filter & \scriptsize MPRNet   & \scriptsize  \textbf{INSP-Net} & \scriptsize Target Image\\
    \end{tabular}
\begin{subfigure}[h]{0.19\textwidth}
  \includegraphics[width=\textwidth]{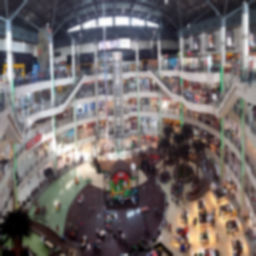}
\end{subfigure}
\begin{subfigure}[h]{0.19\textwidth}
  \includegraphics[width=\textwidth]{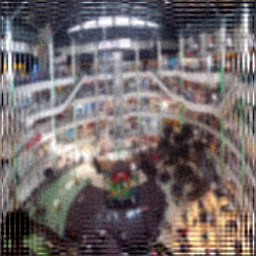}
\end{subfigure}
\begin{subfigure}[h]{0.19\textwidth}
  \includegraphics[width=\textwidth]{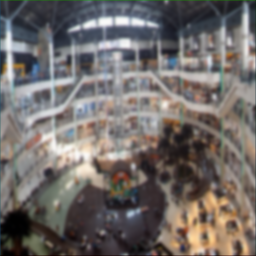}
\end{subfigure}
    \begin{subfigure}[h]{0.19\textwidth}
        \includegraphics[width=\textwidth]{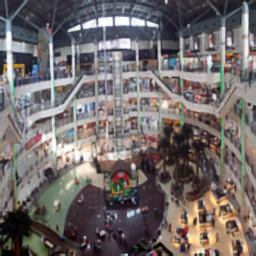}
    \end{subfigure}
\begin{subfigure}[h]{0.19\textwidth}
  \includegraphics[width=\textwidth]{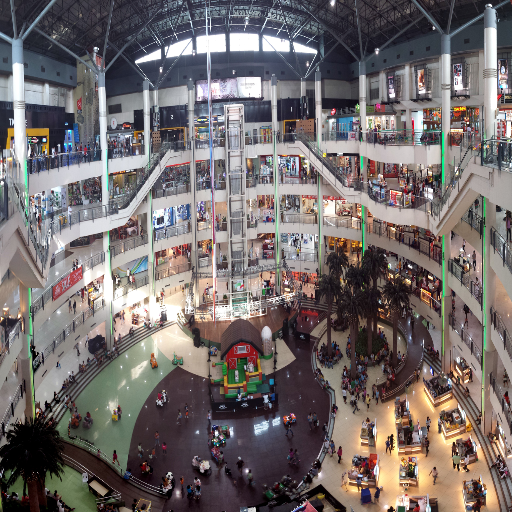}
\end{subfigure}
\\
\begin{subfigure}[h]{0.19\textwidth}
  \includegraphics[width=\textwidth]{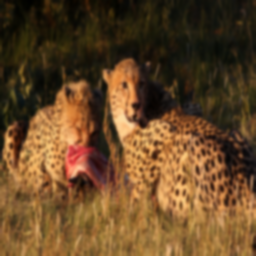}
\end{subfigure}
\begin{subfigure}[h]{0.19\textwidth}
  \includegraphics[width=\textwidth]{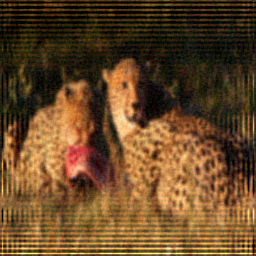}
\end{subfigure}
\begin{subfigure}[h]{0.19\textwidth}
  \includegraphics[width=\textwidth]{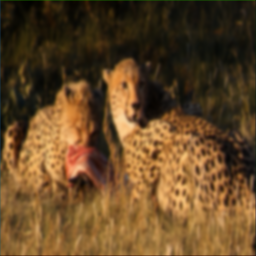}
\end{subfigure}
    \begin{subfigure}[h]{0.19\textwidth}
        \includegraphics[width=\textwidth]{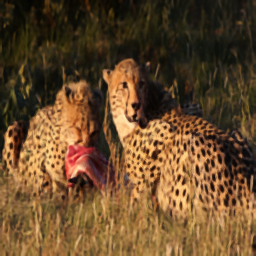}
    \end{subfigure}
\begin{subfigure}[h]{0.19\textwidth}
  \includegraphics[width=\textwidth]{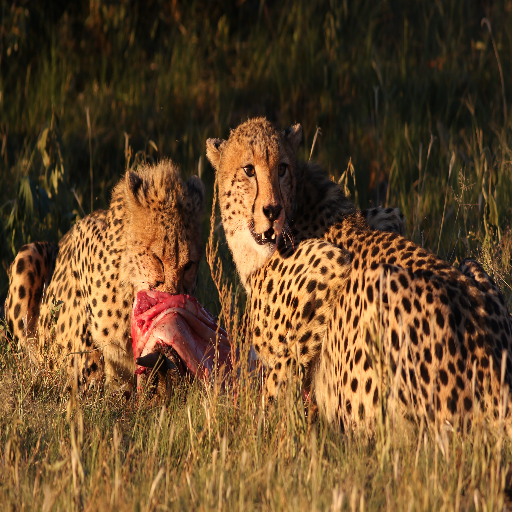}
\end{subfigure}
\\
\begin{subfigure}[h]{0.19\textwidth}
  \includegraphics[width=\textwidth]{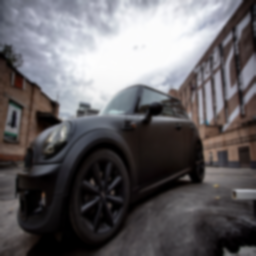}
\end{subfigure}
\begin{subfigure}[h]{0.19\textwidth}
  \includegraphics[width=\textwidth]{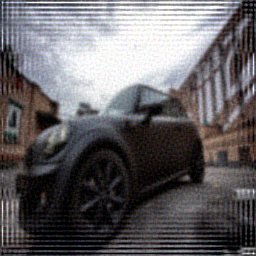}
\end{subfigure}
\begin{subfigure}[h]{0.19\textwidth}
  \includegraphics[width=\textwidth]{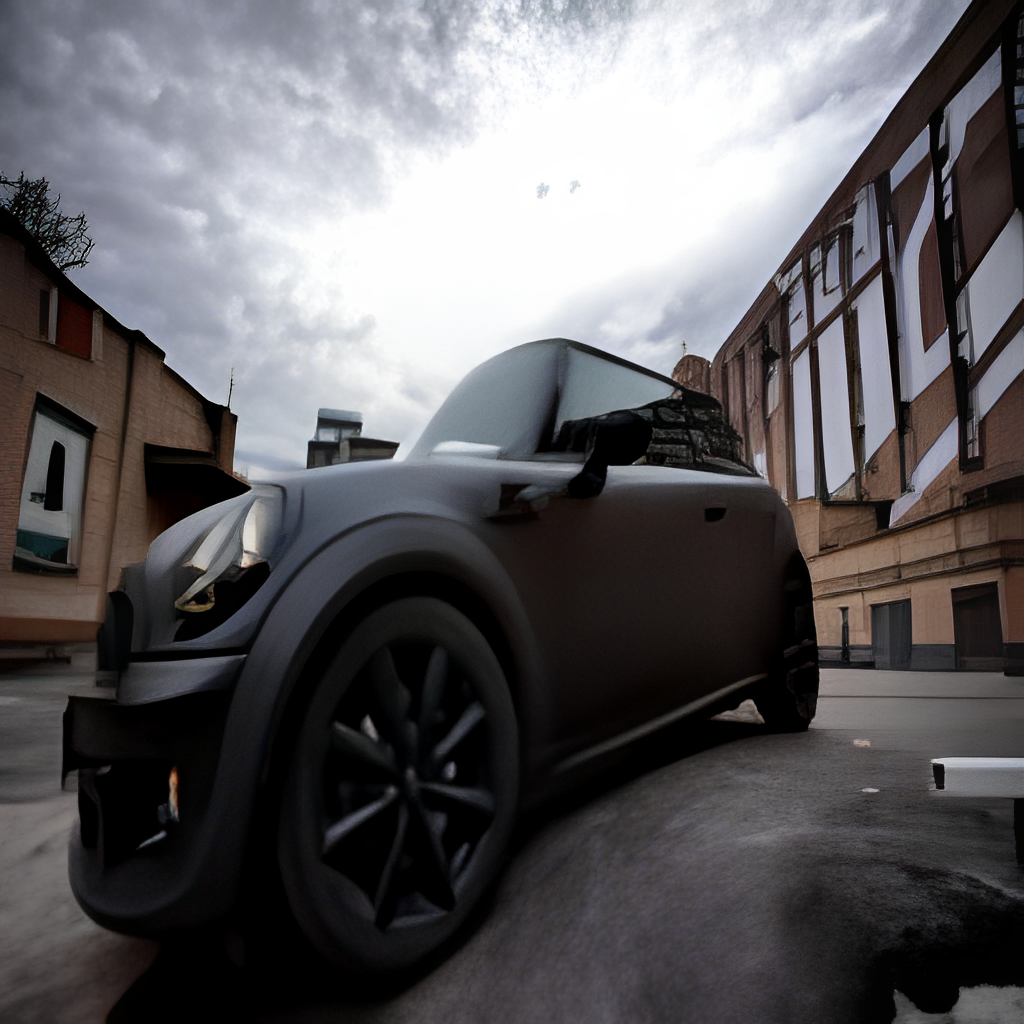}
\end{subfigure}
    \begin{subfigure}[h]{0.19\textwidth}
        \includegraphics[width=\textwidth]{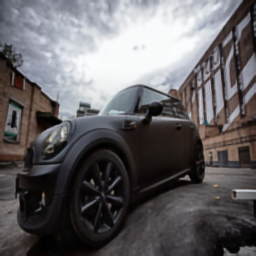}
    \end{subfigure}
\begin{subfigure}[h]{0.19\textwidth}
  \includegraphics[width=\textwidth]{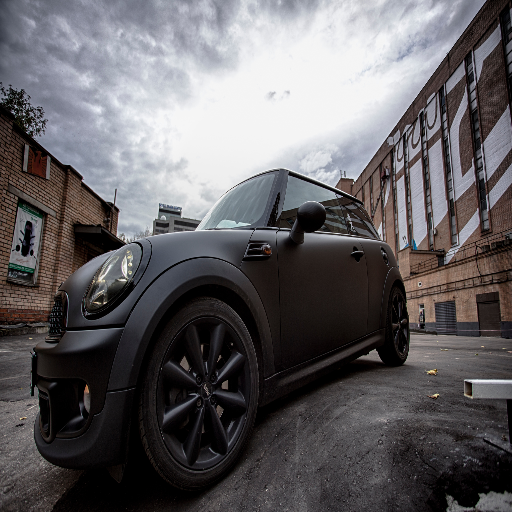}
\end{subfigure}
\\
\begin{subfigure}[h]{0.19\textwidth}
  \includegraphics[width=\textwidth]{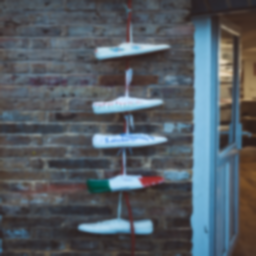}
\end{subfigure}
\begin{subfigure}[h]{0.19\textwidth}
  \includegraphics[width=\textwidth]{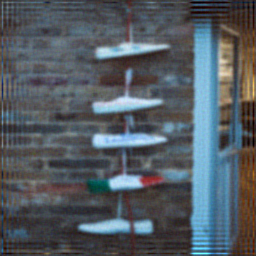}
\end{subfigure}
\begin{subfigure}[h]{0.19\textwidth}
  \includegraphics[width=\textwidth]{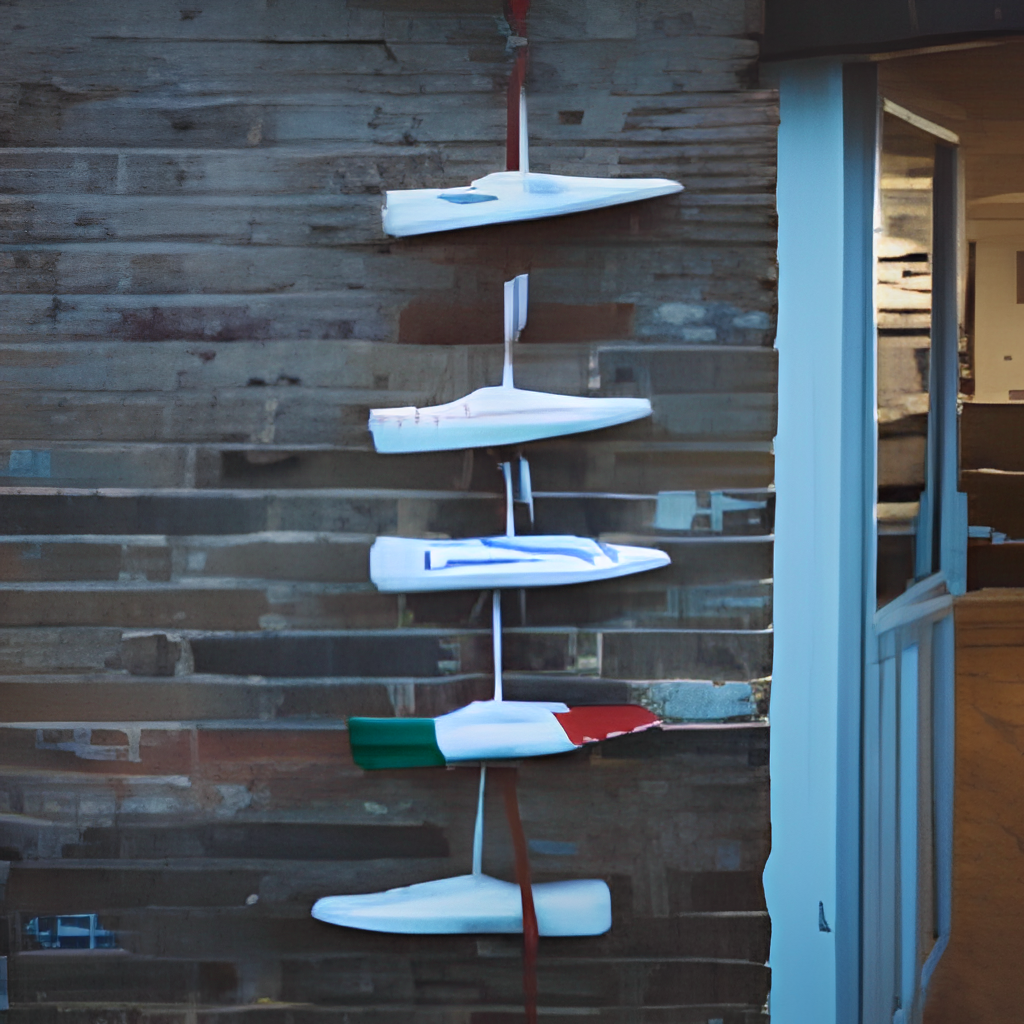}
\end{subfigure}
    \begin{subfigure}[h]{0.19\textwidth}
        \includegraphics[width=\textwidth]{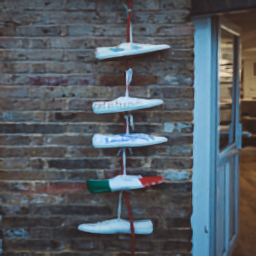}
    \end{subfigure}
\begin{subfigure}[h]{0.19\textwidth}
  \includegraphics[width=\textwidth]{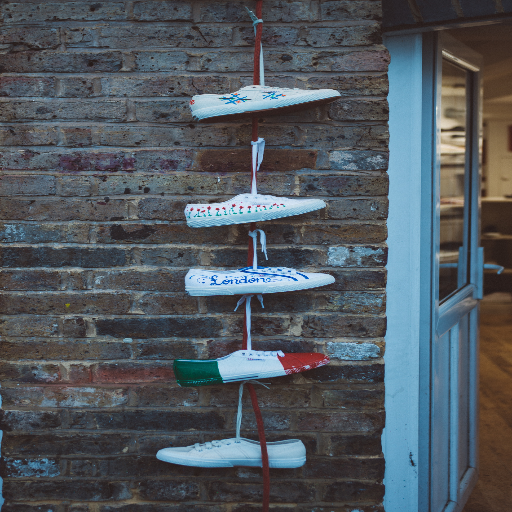}
\end{subfigure}
\\
\begin{subfigure}[h]{0.19\textwidth}
  \includegraphics[width=\textwidth]{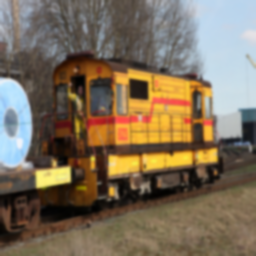}
\end{subfigure}
\begin{subfigure}[h]{0.19\textwidth}
  \includegraphics[width=\textwidth]{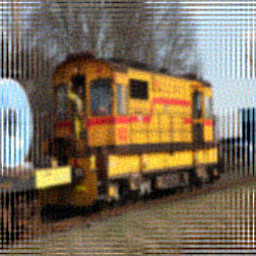}
\end{subfigure}
\begin{subfigure}[h]{0.19\textwidth}
  \includegraphics[width=\textwidth]{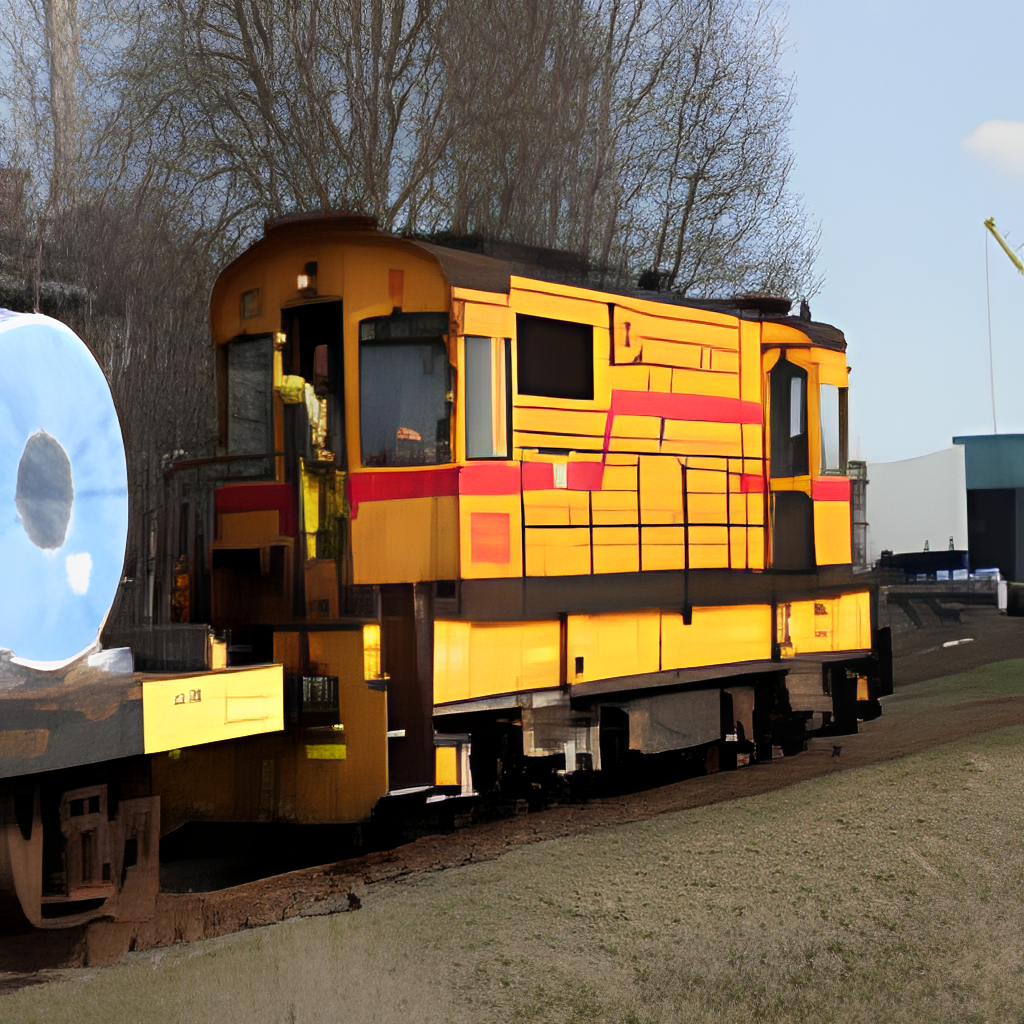}
\end{subfigure}
    \begin{subfigure}[h]{0.19\textwidth}
        \includegraphics[width=\textwidth]{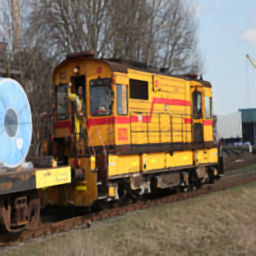}
    \end{subfigure}
\begin{subfigure}[h]{0.19\textwidth}
  \includegraphics[width=\textwidth]{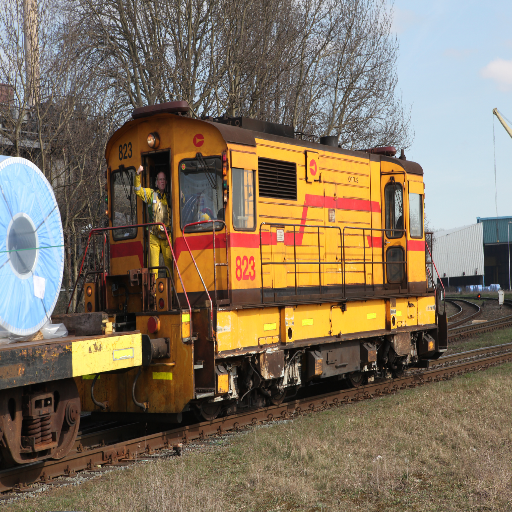}
\end{subfigure}
\\
    \caption{Image deblurring. We fit the blurred images with SIREN and train our INSP-Net to process implicitly into a new INR that can be decoded into clear natural images.}
    \label{fig:deblur_supp}
\end{figure}

\begin{figure}[h]
    \centering
    \centering
    \begin{tabular}{P{0.18\textwidth}P{0.18\textwidth}P{0.18\textwidth}P{0.18\textwidth}}
    \scriptsize Original Image & \scriptsize Box Filter & \scriptsize Gaussian Filter  & \scriptsize  \textbf{INSP-Net}\\
    \end{tabular}
\begin{subfigure}[h]{0.24\textwidth}
  \includegraphics[width=\textwidth]{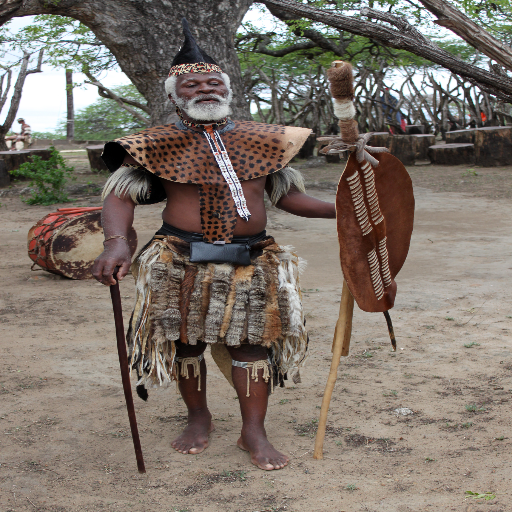}
\end{subfigure}
\begin{subfigure}[h]{0.24\textwidth}
  \includegraphics[width=\textwidth]{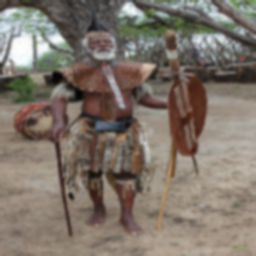}
\end{subfigure}
\begin{subfigure}[h]{0.24\textwidth}
  \includegraphics[width=\textwidth]{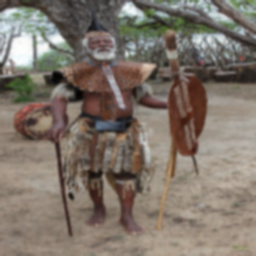}
\end{subfigure}
\begin{subfigure}[h]{0.24\textwidth}
  \includegraphics[width=\textwidth]{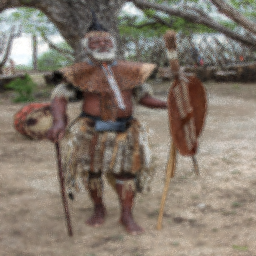}
\end{subfigure}
\\
\begin{subfigure}[h]{0.24\textwidth}
  \includegraphics[width=\textwidth]{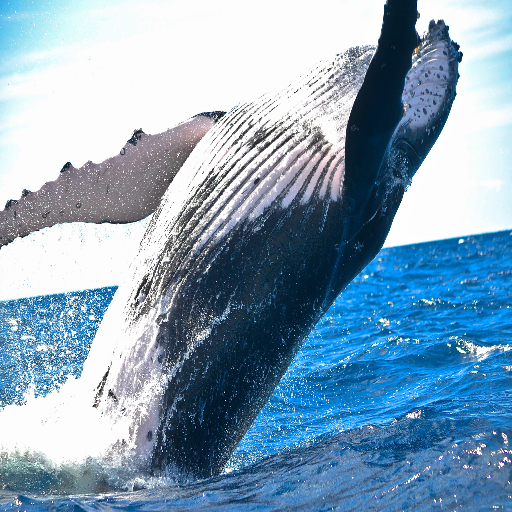}
\end{subfigure}
\begin{subfigure}[h]{0.24\textwidth}
  \includegraphics[width=\textwidth]{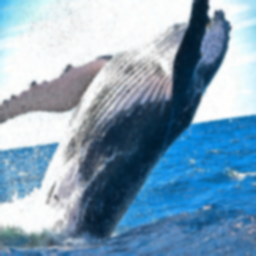}
\end{subfigure}
\begin{subfigure}[h]{0.24\textwidth}
  \includegraphics[width=\textwidth]{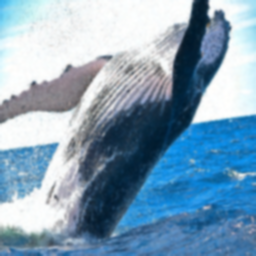}
\end{subfigure}
\begin{subfigure}[h]{0.24\textwidth}
  \includegraphics[width=\textwidth]{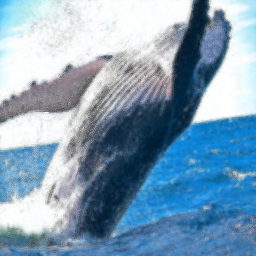}
\end{subfigure}
\\
\begin{subfigure}[h]{0.24\textwidth}
  \includegraphics[width=\textwidth]{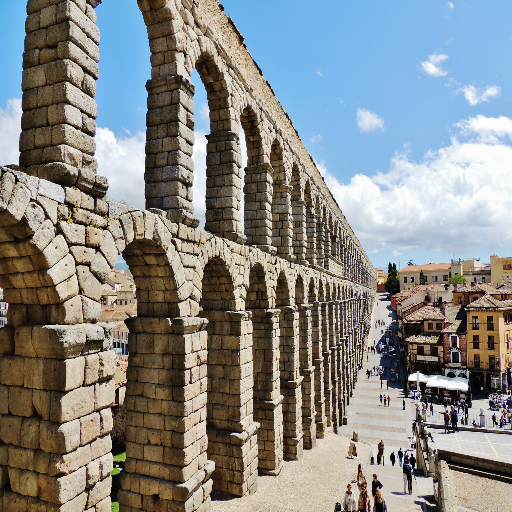}
\end{subfigure}
\begin{subfigure}[h]{0.24\textwidth}
  \includegraphics[width=\textwidth]{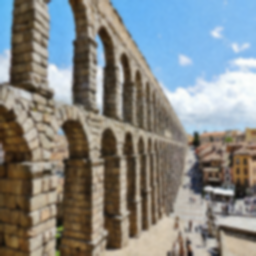}
\end{subfigure}
\begin{subfigure}[h]{0.24\textwidth}
  \includegraphics[width=\textwidth]{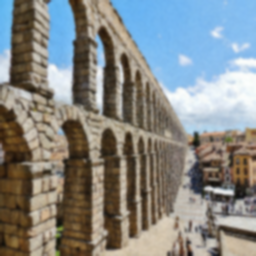}
\end{subfigure}
\begin{subfigure}[h]{0.24\textwidth}
  \includegraphics[width=\textwidth]{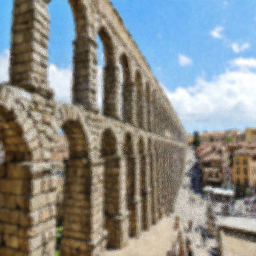}
\end{subfigure}
\\
\begin{subfigure}[h]{0.24\textwidth}
  \includegraphics[width=\textwidth]{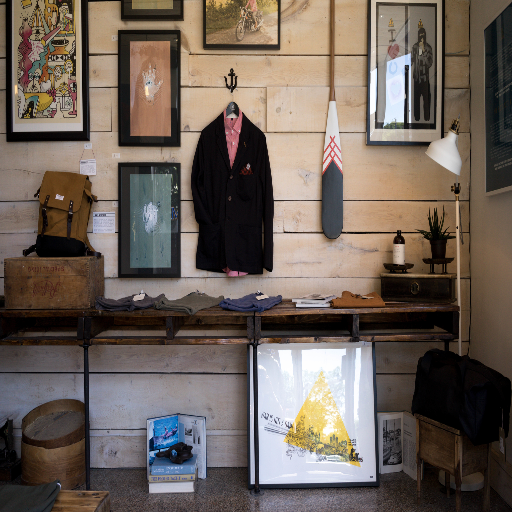}
\end{subfigure}
\begin{subfigure}[h]{0.24\textwidth}
  \includegraphics[width=\textwidth]{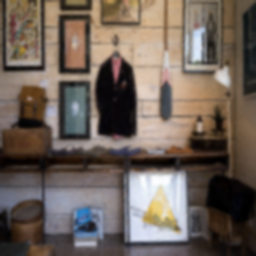}
\end{subfigure}
\begin{subfigure}[h]{0.24\textwidth}
  \includegraphics[width=\textwidth]{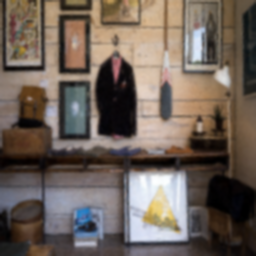}
\end{subfigure}
\begin{subfigure}[h]{0.24\textwidth}
  \includegraphics[width=\textwidth]{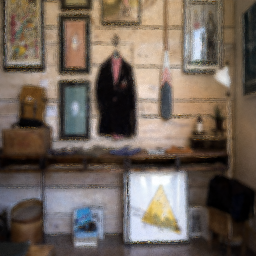}
\end{subfigure}
    \caption{Image blurring. We fit the natural images with SIREN and train our INSP-Net to process implicitly into a new INR that can be decoded into blurred images.}
    \label{fig:blur_supp}
    \vspace{-1mm}
\end{figure}

\begin{figure}[h]
    \centering
    \begin{tabular}{P{0.16\textwidth}P{0.16\textwidth}P{0.16\textwidth}P{0.16\textwidth}P{0.16\textwidth}P{0.16\textwidth}}
    \scriptsize Input Image & \scriptsize Mean Filter & \scriptsize LaMa & \scriptsize  \textbf{INSP-Net} & \scriptsize Target Image\\
    \end{tabular}
    \begin{subfigure}[h]{0.19\textwidth}
        \includegraphics[width=\textwidth]{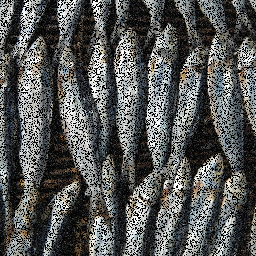}
    \end{subfigure}
    \begin{subfigure}[h]{0.19\textwidth}
        \includegraphics[width=\textwidth]{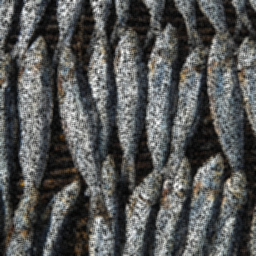}
    \end{subfigure}
        \begin{subfigure}[h]{0.19\textwidth}
        \includegraphics[width=\textwidth]{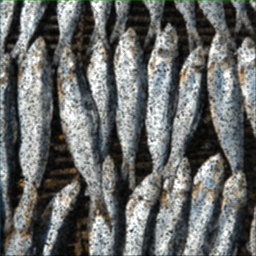}
    \end{subfigure}
    \begin{subfigure}[h]{0.19\textwidth}
        \includegraphics[width=\textwidth]{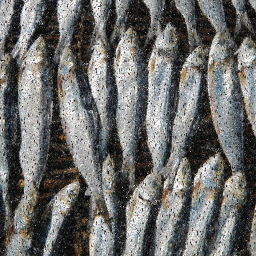}
    \end{subfigure}
    \begin{subfigure}[h]{0.19\textwidth}
        \includegraphics[width=\textwidth]{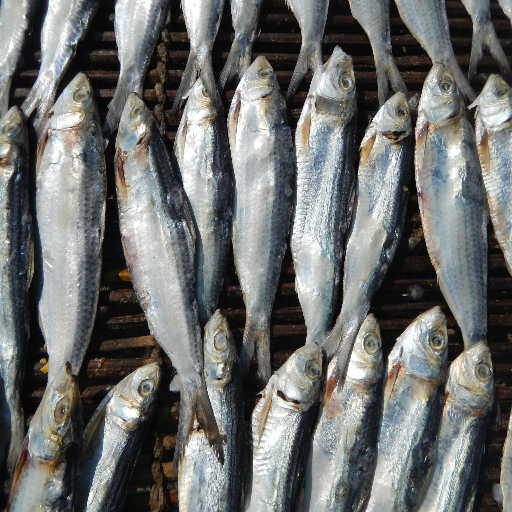}
    \end{subfigure}
    \\
    \begin{subfigure}[h]{0.19\textwidth}
        \includegraphics[width=\textwidth]{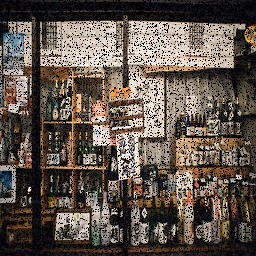}
    \end{subfigure}
    \begin{subfigure}[h]{0.19\textwidth}
        \includegraphics[width=\textwidth]{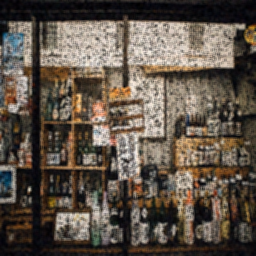}
    \end{subfigure}
      \begin{subfigure}[h]{0.19\textwidth}
        \includegraphics[width=\textwidth]{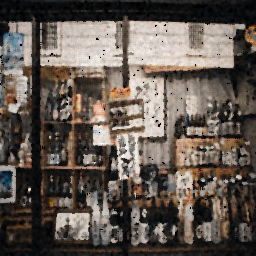}
    \end{subfigure}
    \begin{subfigure}[h]{0.19\textwidth}
        \includegraphics[width=\textwidth]{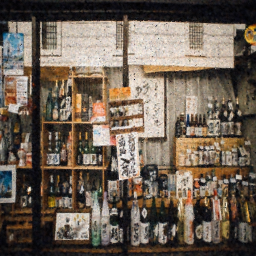}
    \end{subfigure}
    \begin{subfigure}[h]{0.19\textwidth}
        \includegraphics[width=\textwidth]{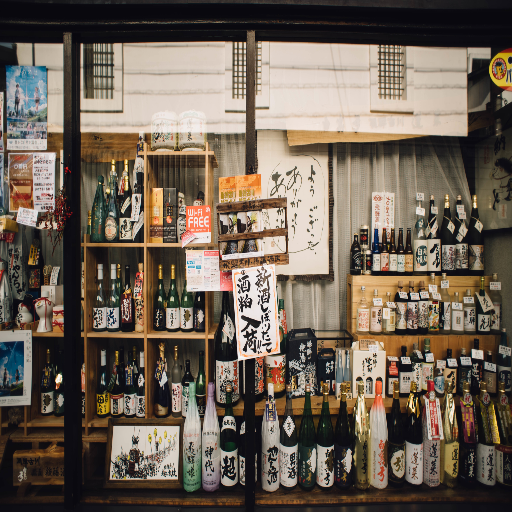}
    \end{subfigure}
    \\
     \begin{subfigure}[h]{0.19\textwidth}
        \includegraphics[width=\textwidth]{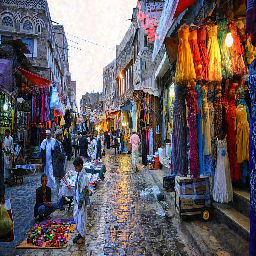}
    \end{subfigure}
    \begin{subfigure}[h]{0.19\textwidth}
        \includegraphics[width=\textwidth]{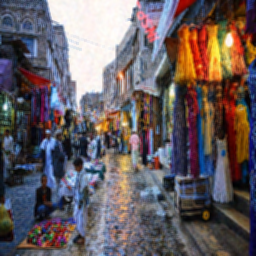}
    \end{subfigure}
      \begin{subfigure}[h]{0.19\textwidth}
        \includegraphics[width=\textwidth]{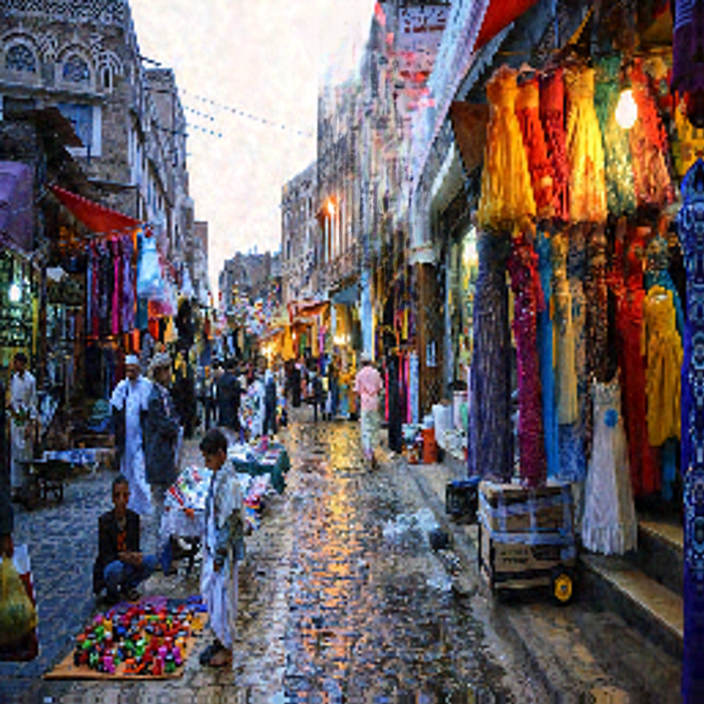}
    \end{subfigure}
    \begin{subfigure}[h]{0.19\textwidth}
        \includegraphics[width=\textwidth]{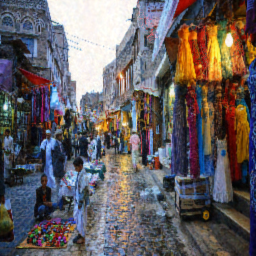}
    \end{subfigure}
    \begin{subfigure}[h]{0.19\textwidth}
        \includegraphics[width=\textwidth]{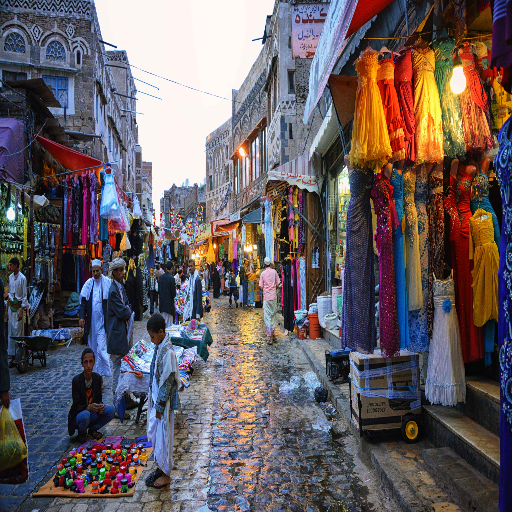}
    \end{subfigure}
    \\
     \begin{subfigure}[h]{0.19\textwidth}
        \includegraphics[width=\textwidth]{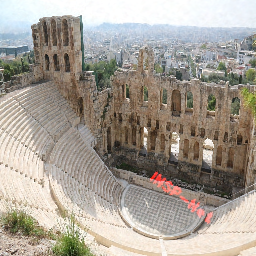}
    \end{subfigure}
    \begin{subfigure}[h]{0.19\textwidth}
        \includegraphics[width=\textwidth]{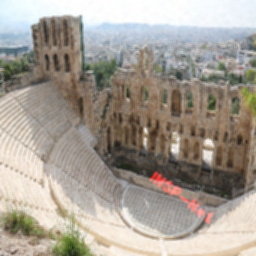}
    \end{subfigure}
      \begin{subfigure}[h]{0.19\textwidth}
        \includegraphics[width=\textwidth]{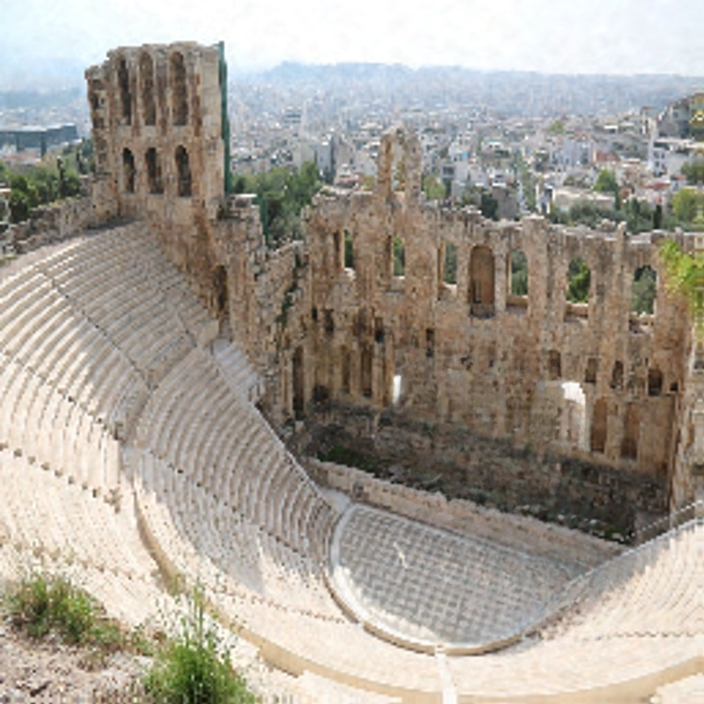}
    \end{subfigure}
    \begin{subfigure}[h]{0.19\textwidth}
        \includegraphics[width=\textwidth]{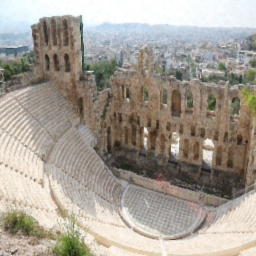}
    \end{subfigure}
    \begin{subfigure}[h]{0.19\textwidth}
        \includegraphics[width=\textwidth]{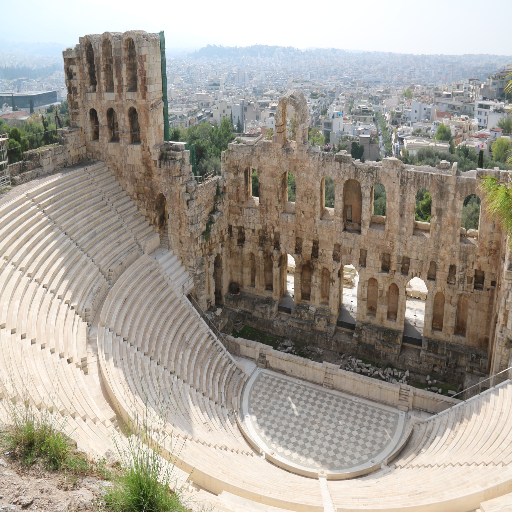}
    \end{subfigure}
    \\
    \caption{Image inpainting. We fit the input images with SIREN and train our INSP-Net to process implicitly into a new INR that can be decoded into natural images. Note that LaMa requires explicit masks to select the regions for inpainting and the masks are roughly provided. The first two rows contain input images with random pixels erased. The last two rows contain input images with text contamination.}
    \label{fig:inpainting_30_supp}
    \vspace{-3mm}
\end{figure}

\begin{figure}[!h]
    \centering
    \includegraphics[width=\textwidth]{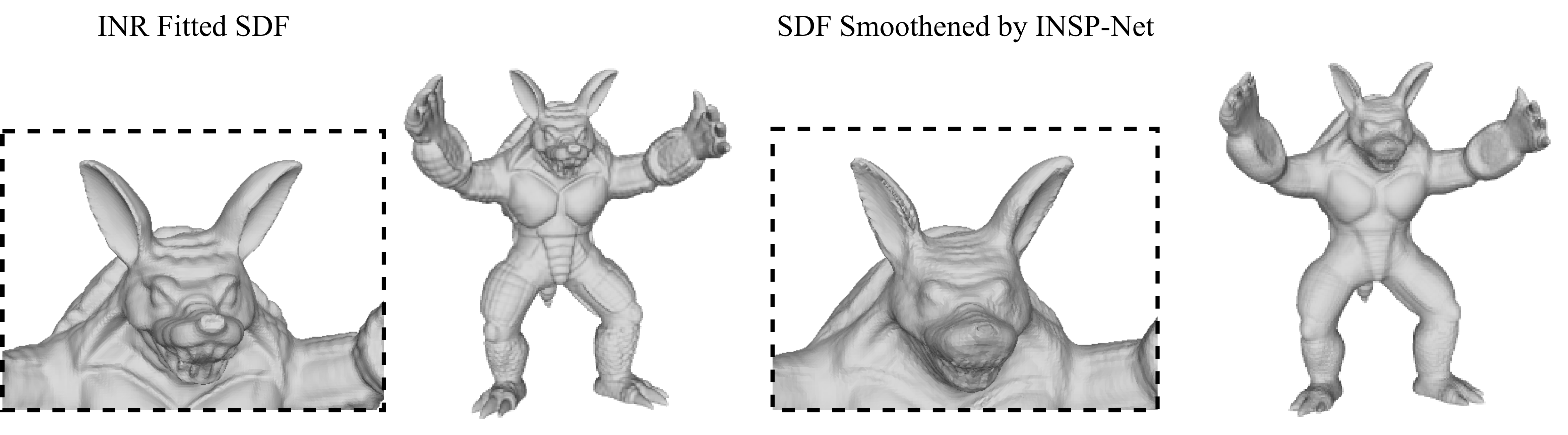}
    \includegraphics[width=\textwidth]{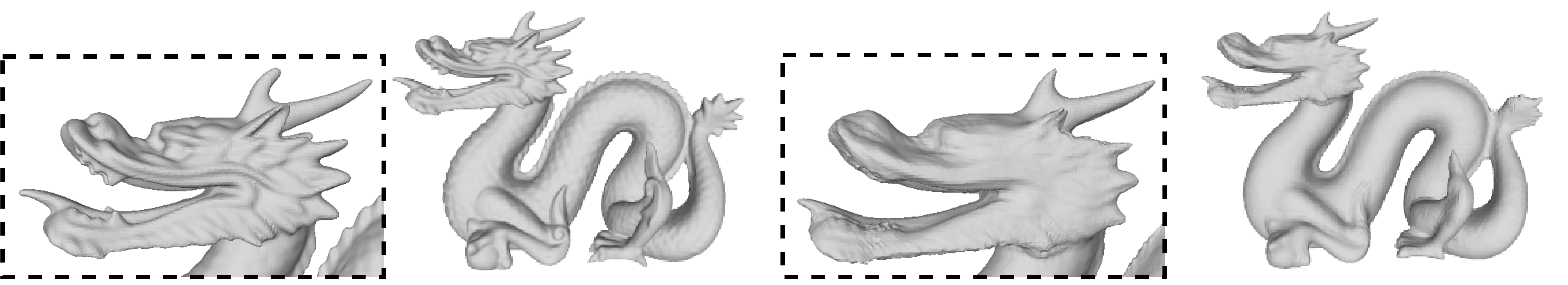}
    \caption{Additional results on geometry smoothening via INSP-Net. Left: unprocessed geometry decoded from an unprocessed INR. Right: smoothened geometry decoded from the output INR of our INSP-Net. Best view in a zoomable electronic copy.}
    \label{fig:geometry_addition}
\end{figure}

\begin{figure}
    \centering
    \begin{subfigure}[t]{0.32\textwidth}
        \includegraphics[width=\textwidth]{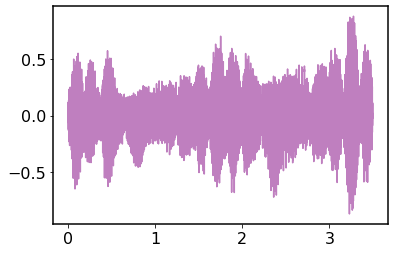}
        \caption{Ground truth clear audio.}
    \end{subfigure}
    \begin{subfigure}[t]{0.32\textwidth}
        \includegraphics[width=\textwidth]{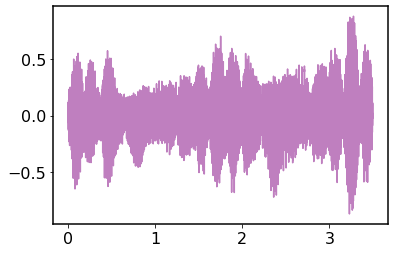}
        \caption{Input noisy audio (decoded from INR).}
    \end{subfigure}
    \begin{subfigure}[t]{0.32\textwidth}
        \includegraphics[width=\textwidth]{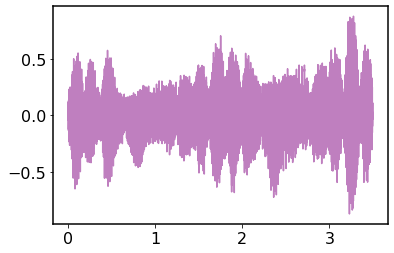}
        \caption{Output audio of INSP-Net.}
    \end{subfigure}
    \\
    \begin{subfigure}[t]{0.32\textwidth}
        \includegraphics[width=\textwidth]{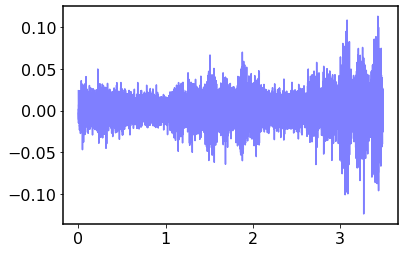}
        \caption{Difference between input and ground truth.}
    \end{subfigure}
    \begin{subfigure}[t]{0.32\textwidth}
        \includegraphics[width=\textwidth]{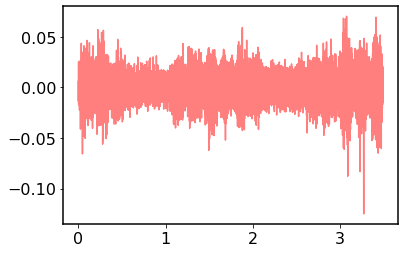}
        \caption{Difference between output and ground truth.}
    \end{subfigure}
    \begin{subfigure}[t]{0.32\textwidth}
        \includegraphics[width=\textwidth]{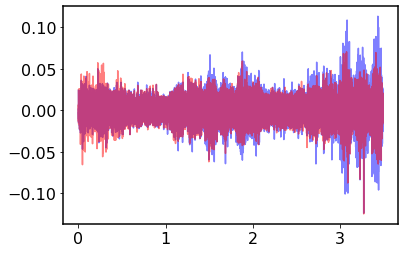}
        \caption{Contrast between input (blue) and output (red) differences.}
    \end{subfigure}
    \caption{ Audio denoising. We fit the noisy audio with SIREN and train our INSP-Net to process implicitly into a new INR that can be decoded into denoised audio.}
    \label{fig:audio_supp}
\end{figure}

\begin{table}
\centering
{\begin{tabular}{c|cccc}
\hline
                         & PSNR  & SSIM & LPIPS &  \\ \hline
Input (decoded from INR) & 20.51 & 0.47 & 0.40  &  \\
MPRNet~\cite{Zamir2021MPRNet}                   & 23.95 & 0.72 & 0.36  &  \\
MAXIM~\cite{tu2022maxim}                    & {24.64} & {0.74} & {0.33}  &  \\
Mean Filter & 22.57 & 0.60 & 0.43 & \\
INSP-Net                 & 23.86 & 0.65 & 0.38  &  \\ \hline
\end{tabular}}
\vspace{1em}
\caption{Quantitative result of image denoising on 100 testing images from DIV-2k dataset~\cite{Agustsson_2017_CVPR_Workshops}, where the synthetic noise is rgb gaussian noise. The noise is similar to the ones seen during the training of MPRNet and MAXIM, so they obtain better performance with the help of a much wider training set.}
\label{tab:denoise2}
\end{table}

\end{document}